%% file: neurips_2026.tex
\documentclass{article}

 \usepackage[nonatbib,preprint]{neurips_2026}

\usepackage[utf8]{inputenc} 
\usepackage[T1]{fontenc}    
\usepackage{hyperref}       
\usepackage{url}            
\usepackage{booktabs}       
\usepackage{amsfonts}       
\usepackage{nicefrac}       
\usepackage{microtype}      
\usepackage{xcolor}         

\usepackage{thm-restate}
\usepackage{amsmath, amssymb, amsthm}
\usepackage{hyperref}
\usepackage{cleveref}
\usepackage{tikz, tikz-cd}%
\usepackage{quiver}
\usepackage{enumitem}
\usepackage{adjustbox}
\usepackage{cancel}
\usepackage[
style = alphabetic,
maxnames=2, maxbibnames=4,uniquelist=false,natbib=true,
]{biblatex} 

\input{math}

\DeclareMathOperator{\E}{\mathbb{E}}
\DeclareMathOperator{\risk}{\mathcal{R}}
\DeclareMathOperator{\Var}{Var}
\newcommand{\inner}[2]{\langle #1, #2 \rangle}
\newcommand{\err}{\mathrm{err}}

\newtheorem{lemma}{Lemma}

\newtheorem{assumption}{Assumption}
\newtheorem{remark}{Remark}

\usepackage{mathtools}

\usepackage[dvipsnames]{xcolor}
\usepackage{wrapfig}

\title{Beyond Linear and Overcomplete Regimes:\\ A Mean-Field Analysis of Bottleneck Autoencoders
}

\author{%
Santanu Das\thanks{Equal Contributions}\\
STCS department\\
Tata Institute Of Fundamental Research\\ 
Mumbai, India-400005\\
dassantanu315@gmail.com
\And
    Ramyak Bilas$^*$\\
    Department of Mathematics\\
    Indiana University Bloomington\\
    Indiana, USA-47405 \\
    \texttt{rbilas@iu.edu} \\
    \And
    Pascal Esser$^*$\\
     Department of Mathematics\\
     Ludwig-Maximilians-Universit\"at M\"unchen\\
     Germany,  M\"unchen-80799\\
     \texttt{esser@math.lmu.de}
    \And
    Satyaki Mukherjee$^*$\\
    Department of Mathematics\\
    National University of Singapore\\
    Singapore, Singapore-119077 \\
    \texttt{satyaki@nus.edu.sg} \\
}

\begin{document}

\maketitle

\begin{abstract}
Autoencoders (AEs) learn low-dimensional representations by mapping data into a latent space while minimizing reconstruction error. Despite their empirical success, theoretical understanding remains limited and largely restricted to linear models or settings without a bottleneck.
In this work, we study nonlinear AEs with a fixed finite-dimensional bottleneck in the mean-field (MF) regime. We derive explicit MF learning dynamics for both encoder and decoder, providing a tractable characterization of training in the nonlinear setting. We show that, over finite time horizons, the empirical risk of finite-width networks trained with stochastic gradient descent closely tracks the MF risk trajectory with high probability. At optimality, we further establish that the finite-width risk converges to the MF optimum, demonstrating that finite networks are sufficiently expressive to approximate the infinite-width solution.

\end{abstract}

\input{sections/new_introduction}

\input{sections/main_results}
\input{sections/setup}
\input{sections/Risk}

\input{sections/discussion}

{
\printbibliography[] 
}

\clearpage

\appendix

\input{sections/appendix}
\input{sections/further_questions}

\end{document}

%% file: math.tex
\newcommand{\norm}[1]{\left\lVert#1\right\rVert}

\newcommand{\bbR}{\mathbb{R}}

\newif\ifmynotes
\mynotesfalse


%% file: sections/new_introduction.tex
\section{Introduction}

Representation learning is grounded in the central paradigm that the essential structure of high-dimensional data can be captured in a latent, lower-dimensional space. Classical approaches realize this idea through linear transformations, including principal component analysis (PCA), independent component analysis, factor analysis, projection pursuit, and non-negative matrix factorization \cite{pml1Book, bookMVstat,Friedman1974APP}. While these methods remain widely used in practice, a common limitation is that they are inherently restricted to learning linear representations of unlabeled data.
To overcome this limitation, a broad class of nonlinear methods has been developed. These include multidimensional scaling and its variants such as t-SNE \cite{bookMVstat,Tenenbaum2000AGG}, kernel-based methods \cite{Schoelkopf}, and tensor factorization techniques \cite{doi:10.1137/07070111X}. More recently, neural network–based approaches have emerged as a dominant paradigm for representation learning, including self-organizing maps, restricted Boltzmann machines (RBMs), and autoencoders (AEs) \cite{Kramer1991AIChE_AE,pml1Book}. These models enable the learning of highly expressive nonlinear representations and have been successfully applied across a wide range of domains \citep{math11081777,Berahmand2024AutoencodersAT}.

And while several properties of such models have been studied empirically \cite{Disentanglement2020, s23042362, leeb2023exploringlatentspaceautoencoders,Fournier_2019}, this increased expressivity, however, comes at the cost of reduced theoretical understanding \cite{esser2025theoreticalfoundationsrepresentationlearning}. This gap is especially critical given the increasing deployment in system-critical applications such as healthcare \cite{MAO2025112536, killian2020empiricalstudyrepresentationlearning,9308129}. In particular, for nonlinear representation models, one fundamental question remains largely open:
\begin{center}
    \emph{What is the structure of the representation learned by the model?}
\end{center}
Answering this question is central to provide formal guarantees and explainability \cite{Zhang2018VisualIF, XU2025102721} for downstream tasks such as clustering. 

\paragraph{Theoretical analysis of AEs.} In this work, we take a step toward closing this gap by providing a theoretical analysis of \emph{nonlinear bottleneck autoencoders} (BAEs), where the latent dimension $(b)$   is much smaller then the data dimension $(d)$ and both encoder and decoder are neural networks. This setting captures the essence of representation learning: compression through a nonlinear mapping. However this setting remains poorly understood due to its analytical complexity. 
To make progress, most prior theoretical analyses typically rely on one of the following simplifying assumptions that either removes a key aspect of representation learning or simplify the network architecture..

\emph{i) Linear autoencoders.} A common simplification is to assume linear encoder and decoder mappings. In this setting, the learned representation can be directly linked to PCA \citep{BALDI1989NN,Kunin20219pmlr_AE_losss_landscape,Bao2020Neurips_AE_PC,Pretorius2018LearningDO,esser23a}, and a detailed theoretical understanding has been developed, including results on learning dynamics \cite{golikov2026exact}, implicit regularization \cite{Identity}, and generalization \cite{ham2025impactbottlenecklayersskip}. However, the linearity assumption severely restricts the class of representations that can be learned, limiting these results to linear latent structures. As illustrated in Figure~\ref{fig:intro}, linear autoencoders fail to capture nonlinear structure in the data, highlighting the need for a theory beyond the linear regime.

\emph{ii) Autoencoders without a bottleneck.} Another line of work studies nonlinear autoencoders without a bottleneck constraint, i.e., in the regime $b \gg d$. While this preserves the nonlinear nature of the model, it removes the compression that is central to representation learning. In this setting, the latent representation can degenerate to a trivial embedding of the input into a higher-dimensional space (see Figure~\ref{fig:intro}). Consequently, theoretical analyses in this regime focus primarily on properties of the overall input-output mapping, including learning dynamics in the neural tangent kernel (NTK) \cite{AENTK} and mean-field (MF) limit \cite{nguyen2021analysis}, feature learning \cite{han2025on,mendes2026solvablehighdimensionalmodelnonlinear}, and generalization \cite{mendes2026solvablehighdimensionalmodelnonlinear,epstein2019generalizationboundsunsupervisedsemisupervised}, rather than on the structure of the learned representation itself.

Taken together, therefore most existing theory captures either \emph{nonlinearity without compression} or \emph{compression without nonlinearity}, but not both simultaneously. Working towards the non-linear bottleneck setting, a third simplified setting for theoretical analysis has been established.

\emph{iii) Shallow networks with a bottleneck and nonlinearity.}
This setting is characterized by a single linear encoder and decoder (which are often weight-tied) with a nonlinearity at the bottleneck\footnote{Formally, this nonlinear shallow AE is usually written as $f(x) = U^\top\sigma(Wx)$, where $W,U\in\bbR^{b\times d}$ are the linear \emph{encoder} and \emph{decoder}, and $\sigma(\cdot)$ is the point-wise nonlinearity.}.
For shallow networks, under additional assumptions on the data and in the high-dimensional limit, \cite{refinetti2022dynamicsrepresentationlearningshallow} derives the dynamics of representation learning. Under additional weight-tying assumptions, \cite{cui2023highdimensional} provides a characterization of the risk. In the regime where the input dimension scales linearly with the representation size, \cite{Aleksandr23} characterizes the minimizer of the population risk. \cite{Kgler2024CompressionOS} focuses on the compression of sparse data. In the denoising setting, and under Gaussian mixture assumptions on the data, \cite{NEURIPS2025_082d3d79} derives a tight asymptotic characterization of the generated distribution. \cite{Bardone2026ATO} proves a simplicity bias and establishes nearly sharp thresholds for the number of samples required by a simple denoiser.

While these results represent important steps toward a theoretical understanding of nonlinear autoencoders, the considered architecture differs significantly from those used in practice, which typically employ more expressive encoder and decoder functions.

For more general encoder and decoder functions, existing theoretical work on nonlinear autoencoders is largely confined to a manifold-learning perspective \cite{lee2023geometricperspectiveautoencoders}, which focuses on expressivity but does not explicitly account for the optimization dynamics that determine the learned representation.

Understanding nonlinear autoencoders with a bottleneck and sufficiently expressive encoders and decoders therefore remains a central open problem.

\begin{figure}[t]
    \centering
    \includegraphics[width=\linewidth]{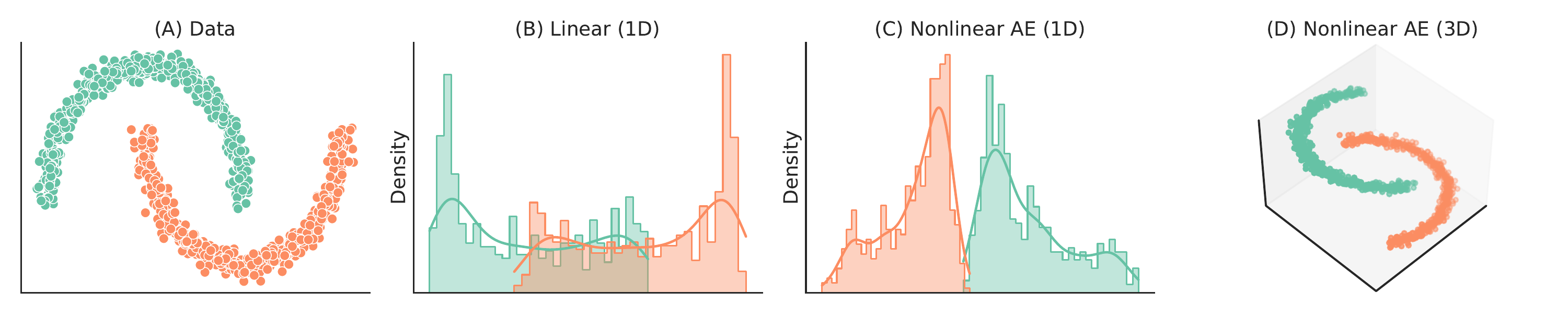}
    \caption{Why both nonlinearity and low-dimensional latent spaces are crucial for representation learning.
\textbf{(A)} Non-linearly separable data $(d=2)$.
\textbf{(B)} Embedding obtained by a linear AE with $b=1$.
\textbf{(C)} Embedding obtained by a nonlinear AE with $b=1$.
\textbf{(D)} Latent representation in a nonlinear AE without a bottleneck $(b=3)$.}
    \label{fig:intro}
\end{figure}

\paragraph{Our contribution.}
We provide a theoretical framework for analyzing nonlinear bottleneck autoencoders using mean-field (MF) methods. Mean-field analysis has emerged as a powerful tool for studying neural networks \cite{Sompolinsky1988ChaosIR,Chizat2018OnTG,Sirignano2018MeanFA,Mei2019MeanfieldTO,Goldt2019DynamicsOS,Mei2018AMF}, enabling the characterization of nonlinear models trained with gradient-based methods. Closest to our work is \cite{nguyen2021analysis}, which derives MF dynamics for shallow autoencoders; however, their analysis does not consider the bottleneck regime and therefore does not address the representation learning aspect of the problem.

Focusing on shallow autoencoders with a one-dimensional bottleneck $(b=1)$, we make the following contributions:
(i) we derive the mean-field limit of nonlinear bottleneck autoencoders and characterize their learning dynamics through a system of coupled PDEs;
(ii) we establish that the risk of finite-width networks closely tracks the MF risk during training and at optimality.

\textbf{Technical novelty.}
Our work builds on the mean-field program for neural-network optimization \citep{Chizat2018OnTG,Sirignano2018MeanFA,Mei2019MeanfieldTO}, but the bottleneck autoencoder is not a standard two-layer mean-field model. In the setting of \citet{Mei2019MeanfieldTO}, the predictor is an average over one neuron population, so the risk is a quadratic functional of a single probability measure, with one external potential and one pairwise interaction kernel. Here the bottleneck creates a genuinely compositional two-population object: an encoder measure $\rho_W$ first forms the latent variable $\alpha(x;\rho_W)=\int \varphi(x;w)\,d\rho_W(w)$, and a decoder measure $\rho_U$ then reconstructs through $\hat x(x;\rho_U,\rho_W)=\int \psi(\alpha(x;\rho_W);u)\,d\rho_U(u).$

This composition is the central obstruction. For fixed $\rho_W$, the decoder remains additive in $\rho_U$ and the risk has the familiar quadratic structure. By contrast, $\rho_W$ enters before the nonlinear decoder activation, through $\alpha(x;\rho_W)$. Thus $\risk(\rho_U,\rho_W)$ is not a quadratic functional in the tuple $(\rho_U,\rho_W)$; it is a coupled encoder--decoder functional in which one population changes the input seen by the other.

This asymmetry changes both the variational calculus and the finite-width approximation. The first variation in $\rho_U$ has the usual  form as the two-layer neural network, but the first variation in $\rho_W$ must differentiate through the bottleneck and contains the averaged decoder Jacobian $\int \partial_\alpha \psi(\alpha(x;\rho_W);u)\,d\rho_U(u)$, the mean-field analogue of backpropagation through the latent layer. At finite width, perturbing an encoder particle changes the bottleneck seen by all decoder particles, while perturbing the decoder changes the encoder vector field. Therefore the proof cannot be reduced to two independent one-population propagation-of-chaos arguments; it requires joint stability estimates for the coupled vector fields (\Cref{lem:vector_lipschitz}) and a coupled Gronwall argument (\Cref{lem:trajectory}) which was not explored previously to the best of our knowledge.

The second key ingredient is the localized concentration in \Cref{lem:trajectory}. Controlling the empirical measures in global weak metrics such as bounded Lipschitz or Wasserstein distance would introduce unfavorable high-dimensional concentration rates. Instead, we use the tailored parametric distance $d_{\mathcal{F}}(\mu,\nu)=\sup_{f\in\mathcal{F}}\|\int_\Omega f(\omega)\,d\mu(\omega)-\int_\Omega f(\omega)\,d\nu(\omega)\|_2$, where $\mathcal{F}$ contains only the test functions that actually appear in the encoder and decoder vector fields. The concentration step (\Cref{lem:empirical_concentration}) therefore controls precisely the quantities the dynamics evaluate, rather than all bounded Lipschitz test functions on the particle space.

Finally, the optimality and variational results inherit the same compositional difficulty. In \Cref{thm:closeness}, encoder sampling perturbs the latent variable before it is passed through the nonlinear decoder, so the proof must separate encoder bottleneck fluctuations from decoder averaging fluctuations. In \Cref{thm:optimal_characterization}, perturbations of $\rho_W$ must again be propagated through the decoder nonlinearity by a Taylor expansion, whereas perturbations of $\rho_U$ remain additive once the bottleneck is fixed. These are the points at which the bottleneck architecture requires new arguments beyond the standard additive mean-field theory.

\begin{remark}[Generalization to Multi-Dimensional Bottlenecks]\label{remark: multi dim}
    The presented analysis is for clarity written for a one-dimensional bottleneck ($b=1$) to avoid vector notation. The same proof strategy should extend to any fixed bottleneck dimension $b>1$. We outline the main  steps in Appendix~\ref{sec:multidim_bottleneck}.
\end{remark}

The remainder of the paper is organized as follows. In Section~\ref{sec: main results}, we provide an overview of the main results. The formal mean-field formulation and theoretical guarantees are presented in Sections~\ref{sec: def bottleneck AE MF}.

%% file: sections/main_results.tex
\section{Main Results}\label{sec: main results}

In this paper we introduce a MF formulation for BAE in section~\ref{sec: def bottleneck AE MF}.
In particular starting from the finite-width risk $\mathcal{R}_N\big(\theta_U,\theta_W\big)$ for the BAE as defined above the MF-risk is obtained by taking the limit $N \to \infty$, in which the empirical distributions of the parameters converge to probability measures $\rho_U$ and $\rho_W$. In this regime, the random finite network output converges to a deterministic limit that depends only on these measures, and the risk functional correspondingly converges to
$
\mathcal{R}(\rho_U, \rho_W),
$
which describes the reconstruction error of the infinite-width autoencoder in terms of the parameter distributions rather than individual neurons.
For the case of $b=1$ and under mild regularity assumptions on the activation functions, learning rate, and boundedness of the data distribution we establish a rigorous link between finite-width BAE and their MF limit through the following.

\underline{Result 1a: BAE learning dynamics.} Going beyond the commonly studied overcomplete (without bottleneck) setting, we derive the learning dynamics in the MF limit explicitly as given by ~\eqref{eq:pde_u} and \eqref{eq:pde_w} respectively. This expression builds the foundation for our risk analysis.

\underline{Result 1b: Finite-Risk tracks MF-Risk during training.} 
Let $(\theta_{U,t}, \theta_{W,t})$ denote the parameters of a finite-width autoencoder trained by SGD at step $t$ with $N$ hidden units, and let $(\rho_{U,t}, \rho_{W,t})$ denote the solution to the coupled mean-field PDEs for the encoder and decoder. 
For any finite time horizon $T > 0$, the empirical risk of the finite network tracks the MF risk up to an approximation error $\varepsilon$:
\begin{align}
\sup_{t \in [0, T] } 
\left| \mathcal{R}_N(\theta_{U,t}, \theta_{W,t}) - \mathcal{R}(\rho_{U,t}, \rho_{W,t}) \right|
\le O\left( e^{CT^2}/\sqrt{N} \right),
\end{align}
with high probability. This result is precisely stated in \Cref{thm:main_dynamics} and establishes that, during training, a finite-width BAE evolves approximately according to the MF gradient flow, and the finite-network risk closely follows the MF risk trajectory.

\underline{Result 2: Finite-Risk vs. MF-Risk at optimum.} 
Extending the previous result to analyze the behavior at the optimum,  we observe
\begin{align}
\left|\inf_{\theta_U,\theta_W} \mathcal{R}_N(\theta_U,\theta_W) - \inf_{\rho_U, \rho_W} \mathcal{R}(\rho_U, \rho_W)\right|\le O\left(1/N\right),
\end{align}
which formalizes that finite networks are sufficiently expressive to approximate the MF optimum, with an error that vanishes as $N \to \infty$. The exact expression is stated in Theorem~\ref{thm:closeness}. 

In combination Result~1a provides us with a tool to analyze the learning dynamics of non-linear BAE, while Result~1b~\&~2 guarantee that the risk under those learning dynamics sufficiently well models the finite width setting.

%% file: sections/setup.tex
\section{Bottleneck AE in the  Mean Field Limit}\label{sec: def bottleneck AE MF}

Let us start by formalizing the basic idea of AEs, where one aims to map a datapoint $x\in\bbR^d$ into a lower dimensional latent space $\alpha\in\bbR^b$ and then reconstruct it,
\begin{align}\label{eq: AE mapping}
    x\in\bbR^d
    \quad\overset{\phi(x; \theta_W)}{\longrightarrow}\quad
    \alpha\in\bbR^b
    \quad\overset{\psi(\alpha; \theta_U)}{\longrightarrow}\quad
   \hat x\in\bbR^d,\qquad\text{s.t. }\ b\ll d.
\end{align}
Here $\phi(x; \theta_W)$ is the \emph{encoder} function parameterized by $\theta_W$ and $\psi(\alpha; \theta_U)$ the \emph{decoder} function parameterized by $\theta_U$. 
We further illustrate this in Appendix~\ref{sec: app architecture}.
Both, $\phi(\cdot)$ and $\psi(\cdot)$ are jointly optimized by minimizing the expected mean squared error, $\risk_N(\theta_U, \theta_W)$.
Now moving from the finite width to the mean field expression,  we first collapse the layer weights into ``particles''. For the remainder of the theoretical analysis we assume that $b=1$. Therefore let an \emph{encoder particle} be $\theta_W = (w_1, w_2) \in \mathbb{R}^{d} \times \mathbb{R}$ whose corresponding activation is 
$
\phi(x; \theta_W) = w_2 \sigma(\langle w_1, x \rangle)
$
and a \emph{decoder particle} be $\theta_U = (u_1, u_2) \in \mathbb{R}^{d} \times \mathbb{R}$ whose corresponding activation is
$
\psi(\alpha; \theta_U) = u_1 \sigma(u_2 \alpha).
$
For a network of width $N$, the layer weights or rather the particles are drawn from discrete empirical probability measures: $\hat{\rho}_W^{(N)} = \frac{1}{N} \sum_{i=1}^N \delta_{\theta_W^i}, $ and $\hat{\rho}_U^{(N)} = \frac{1}{N} \sum_{j=1}^N \delta_{\theta_U^j}$, where $\delta$ is the Dirac delta measure.
Using these measures, the intermediate bottleneck $\alpha^{(N)}(x)$ and the final output $\hat{x}^{(N)}$ can be written purely as integrals over the empirical measures: $\alpha^{(N)}(x) =
\int \phi(x; \theta_W)\, d\hat{\rho}_W^{(N)}(\theta_W), $ and $ \hat{x}^{(N)}=
\int \psi\big(\alpha^{(N)}(x); \theta_U\big)\, d\hat{\rho}_U^{(N)}(\theta_U)$. 

If we change our point of view from parameters $(\theta_U,\theta_W)$ to the discrete empirical probability measure $\hat{\rho}^{(N)}_U$ and $\hat{\rho}^{(N)}_W$, we can rewrite the risk in terms of measure as $
\mathcal{R}_N\big(\hat{\rho}_U^{(N)}, \hat{\rho}_W^{(N)}\big)$. By taking the limit $N\rightarrow\infty$ the empirical measures converge to limiting probability measures such that we can write the mean-field risk functional as
$\mathcal{R}\big({\rho}_U, {\rho}_W\big)
=
\frac{1}{2}\,
\mathbb{E}_{x\sim P} \left[ \|x - \hat{x}\|^2 \right].$
Therefore in the following we aim to characterize the relation between
$
\risk_N(\theta_U, \theta_W) 
\ \text{and}\ 
\mathcal{R}\big({\rho}_U, {\rho}_W\big)
$
during training and at the optimum.

%% file: sections/Risk.tex
\section{Mean Field Risk}\label{sec: mean field risk}

With the mean field risk properly defined we can now formulate the two main characterizations outlined in the introduction. For technical reasons we will need the following assumptions.

\begin{assumption}\label{assump:learning_rate}
    The learning rate schedule $t \mapsto \xi(t)$ is absolutely bounded by a constant $M_\xi > 0$ and it is $L_\xi$-Lipschitz continuous. We also assume $\int_0^\infty \xi(t) dt = \infty$ so training progresses.
\end{assumption}
\begin{assumption} The parameter spaces for the encoder, $\Omega_W$, and decoder, $\Omega_U$ lie in a compact and convex set. \label{it:bounded_enocder_decoder-domain}
\end{assumption}
\begin{assumption}  \label{it:bounded_data-domain}The data distribution has bounded support.
\end{assumption}
\begin{assumption}   \label{it:bounded_activation}The activation function $\sigma$ is  twice continuously differentiable.
\end{assumption}

\begin{remark}[Uniform boundedness]\label{rem:bounded_context}
By Assumption~\ref{it:bounded_enocder_decoder-domain},\ref{it:bounded_data-domain}, \ref{it:bounded_activation},
the encoder and decoder parameter spaces and the data support are compact and convex, and
the activation is twice continuously differentiable. Hence, by the Extreme
Value Theorem and the Mean Value Theorem, all network quantities used below are
uniformly bounded and Lipschitz on their relevant domains.
The compact-parameter assumption can be viewed as the hard-constrained analogue
of standard $L_2$ regularization. Indeed, for finite-width parameters $\theta_U,\theta_W$, the constraint
$\|\theta_U\|_2^2+\|\theta_W\|_2^2 \le R^2$
is the constrained counterpart of the penalized objective
$\mathcal R_N(\theta_U,\theta_W)
    +\lambda\bigl(\|\theta_U\|_2^2+\|\theta_W\|_2^2\bigr),$
which is commonly used in autoencoders
\citep{vincent2010stacked,sedhain2015autorec,kunin2019loss,
ghosh2020from,kumar2020regularized}.
Consequently, throughout the paper we use a single constant $M<\infty$,
depending only on the compact domains and the activation, such that, for all
admissible $x,w,u,\alpha$,
\[
    \|x\|_2 \le M, \qquad
    |\phi(x;w)| \le M, \qquad
    \|\psi(\alpha;u)\|_2 \le M.
\]
Furthermore, the first- and second-order derivatives of
the encoder and decoder used in the analysis are also bounded by $M$, and the
corresponding maps are $M$-Lipschitz with respect to their parameters. A proof
is given in \Cref{lem:uniform_bounds} in \Cref{subsec:technical_lemma}.
\end{remark}

The first theorem derives the coupled Vlasov--McKean PDEs for the mean-field autoencoder dynamics and proves that finite-width SGD tracks these dynamics, both at the level of empirical measures and risk, over finite time horizons.

\begin{restatable}{theorem}{meanfieldlimit}[Mean-Field Dynamics and Propagation of Chaos for Autoencoders]
\label{thm:main_dynamics}
Under Assumptions \ref{assump:learning_rate}, \ref{it:bounded_enocder_decoder-domain}, \ref{it:bounded_data-domain}, and \ref{it:bounded_activation}, let $\rho_{U,0}$ and $\rho_{W,0}$ be initial probability measures and $\xi$ the learning rate. Consider the discrete stochastic gradient descent (SGD) updates initialized with iid samples from $\rho_{U,0}$ and $\rho_{W,0}$ 
and with step size $s_k = \epsilon \xi(k\epsilon)$.
For $t \ge 0$, let the measures $(\rho_{U,t}, \rho_{W,t})$ be the unique solutions to the coupled distributional Vlasov-McKean PDEs,
\begin{align}
    \partial_t \rho_{U,t}(u) &= \xi(t) \nabla_u \cdot \Big(\rho_{U,t}(u) \nabla_u \Psi_U(u; \rho_{U,t}, \rho_{W,t})\Big), \label{eq:pde_u} \\
    \partial_t \rho_{W,t}(w) &= \xi(t) \nabla_w \cdot \Big(\rho_{W,t}(w) \nabla_w \Psi_W(w; \rho_{U,t}, \rho_{W,t})\Big). \label{eq:pde_w}
\end{align}
Then, for any fixed $t \ge 0$, the empirical measures converge almost surely,
\[
\hat{\rho}^{(N)}_{U,\lfloor t/\epsilon \rfloor} \Rightarrow \rho_{U,t},
\qquad
\hat{\rho}^{(N)}_{W,\lfloor t/\epsilon \rfloor} \Rightarrow \rho_{W,t},
\]
along any sequence $(N, \epsilon = \epsilon_N)$ such that
$
N \to \infty, 
 \epsilon_N \to 0, 
 \nicefrac{N}{\log(N/\epsilon_N)} \to \infty, 
 \epsilon_N \log(\nicefrac{N}{\epsilon_N}) \to 0.
$

Furthermore let $D=d+1$ denote the dimension of the encoder and the decoder particles; $(\theta_{U,k},\theta_{W,k})$ denote the decoder and encoder weights for all neurons at $k$-th time step; and $u_j^k$ and $w_j^k$ denote the weights of $j$'th decoder and encoder neuron at the $k$'th time step updated by SGD.
Then there exists a constant $C= 10^3M^{10}\max\{ L_{\xi}M_{\xi}^2, M_{\xi}d^{1/4}\} > 0$, where $M_\xi$, $L_\xi$ (\Cref{assump:learning_rate}) and $M$ (\Cref{rem:bounded_context}) are constants, such that for any bounded Lipschitz test function $f$ with 
$\|f\|_\infty, \|f\|_{\mathrm{Lip}} \le 1$, and for $\epsilon < 1$, the following holds with probability at least $1-7e^{-\delta^2}$,
\begin{align*}
    &\max_{k \in [0, T/\epsilon] \cap \mathbb{N}}  \left| \frac{1}{N}\sum_{j=1}^N f(u_j^k) - \int f(u)\, \rho_{U, k\epsilon}(du) \right| \le  2C (\sqrt{T} \lor T) \exp\big(C T^2\big) \err_{N,D}(\delta),\\
    & \max_{k \in [0, T/\epsilon] \cap \mathbb{N}}  \left| \frac{1}{N}\sum_{j=1}^N f(w_j^k) - \int f(w)\, \rho_{W, k\epsilon}(dw) \right| \le  2C (\sqrt{T} \lor T) \exp\big(C T^2\big) \err_{N,D}(\delta),\\
    & \max_{k \in [0, T/\epsilon] \cap \mathbb{N}} \left| \mathcal{R}_N(\theta_{U,k}, \theta_{W,k}) - \mathcal{R}(\rho_{U,k\epsilon}, \rho_{W,k\epsilon}) \right|  \le 10M^3C (\sqrt{T} \lor T \lor 1) \exp\big(C T^2\big) \err_{N,D}(\delta),
\end{align*}
where the error term is given by $\err_{N,D}(\delta) \equiv \sqrt{N^{-1} \lor \epsilon} \left( \sqrt{D + \log\left(N \left(\frac{T}{\epsilon} \lor 1\right)\right)} + \delta \right)$. 
\end{restatable}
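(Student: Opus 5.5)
The argument follows the Mei--Montanari--Nguyen propagation-of-chaos scheme, adapted to the coupled two-population structure. We introduce three processes started from the same i.i.d.\ initialization $\{(u_j^0,w_j^0)\}_{j\le N}$:
\begin{enumerate}[label=(\roman*)]
\item the SGD iterates $(u_j^k,w_j^k)$;
\item the \emph{nonlinear dynamics} $(\bar u_j(t),\bar w_j(t))$, in which each particle moves independently along the mean-field vector fields $-\xi(t)\nabla_u\Psi_U(\,\cdot\,;\rho_{U,t},\rho_{W,t})$ and $-\xi(t)\nabla_w\Psi_W(\,\cdot\,;\rho_{U,t},\rho_{W,t})$;
\item the \emph{particle gradient flow} $(\tilde u_j(t),\tilde w_j(t))$, driven by the same vector fields evaluated at the empirical measures of the particles themselves.
\end{enumerate}

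\textbf{Existence and uniqueness.} We first establish this for \eqref{eq:pde_u}--\eqref{eq:pde_w}. By \Cref{rem:bounded_context} and \Cref{lem:vector_lipschitz}, the fields $\nabla_u\Psi_U$ and $\nabla_w\Psi_W$ are bounded and Lipschitz both in the particle argument and in $(\rho_U,\rho_W)$ with respect to $d_{\mathcal F}$. The dependence on $\rho_W$ passes through $\alpha(x;\rho_W)$ and the averaged Jacobian $\int\partial_\alpha\psi\,d\rho_U$, and this Lipschitz dependence is where the $C^2$ assumption on $\sigma$ is used. A Picard iteration on the nonlinear dynamics, with $\rho_{\cdot,t}$ defined as the law of the particles, then gives a unique solution. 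Weak solutions of the PDEs coincide with these laws because the characteristics are unique.

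\textbf{Coupling argument.} The quantitative bound follows from three triangle-inequality comparisons.

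\emph{Nonlinear dynamics vs.\ $\rho_t$.} The nonlinear particles are i.i.d.\ with law $\rho_t$. Hence, for fixed $f$ with $\|f\|_\infty,\|f\|_{\mathrm{Lip}}\le1$, their empirical averages deviate from $\int f\,d\rho_t$ by $O(\sqrt{(D+\log(NT/\epsilon))/N}+\delta/\sqrt N)$. This holds uniformly over the grid $k\le T/\epsilon$ by a union bound, and uniformly over the finitely parametrized class $\mathcal F$ appearing in the vector fields via \Cref{lem:empirical_concentration}; this step produces the $\sqrt{D}$ term.

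\emph{Particle flow vs.\ nonlinear dynamics.} We control $\Delta(t)=\max_j\big(\|\tilde u_j-\bar u_j\|+\|\tilde w_j-\bar w_j\|\big)$. Combining the joint Lipschitz estimate of \Cref{lem:vector_lipschitz} with the concentration above gives
\[
\Delta(t)\le C\int_0^t\big(\Delta(s)+d_{\mathcal F}(\hat\rho^{\bar{}}_s,\rho_s)\big)\,ds .
\]
The coupled Gronwall argument of \Cref{lem:trajectory} then bounds $\Delta$.

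\emph{SGD vs.\ particle flow.} The difference consists of a discretization term of order $\epsilon$, using the Lipschitz continuity of $\xi$ (the source of $L_\xi M_\xi^2$), and a martingale term from sampling $x_k$ at each step. The martingale term is handled by Azuma--Hoeffding on the sum of bounded increments $s_k(\nabla\text{ single-sample}-\nabla\text{ population})$, with a union bound over $j\le N$ and $k\le T/\epsilon$. This yields $\sqrt{\epsilon}\,(\sqrt{\log(NT/\epsilon)}+\delta)\sqrt T$, which accounts for the $\sqrt{N^{-1}\lor\epsilon}$ and $(\sqrt T\lor T)$ factors.

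\textbf{Main obstacle.} The crux is the coupled Gronwall step. A perturbation of a single encoder particle shifts the bottleneck $\alpha^{(N)}(x)$ seen by every decoder particle, and the encoder field contains the decoder Jacobian. The error recursion is therefore a $2\times2$ system in which the decoder error feeds the encoder error with a coefficient that itself grows along the trajectory. A crude bound gives an effective rate proportional to $t$, so the integrated Gronwall exponent becomes $CT^2$, explaining the $e^{CT^2}$ factor. The plan is to write separate inequalities for the encoder and decoder errors, bound the cross terms using $M$ from \Cref{rem:bounded_context}, and apply Gronwall to their sum.

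\textbf{Conclusions.} Summing the three comparisons and collecting the probability of failure over the union bounds (seven events in total) gives the first two displayed inequalities. Almost-sure weak convergence then follows from Borel--Cantelli with $\delta$ chosen of order $\sqrt{\log N}$, under the stated growth conditions on $(N,\epsilon_N)$.

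For the risk bound, we write $\mathcal R_N-\mathcal R$ as $\tfrac12\E_x$ of $\|x-\hat x^{(N)}\|^2-\|x-\hat x\|^2$. Since both reconstructions are bounded by $M$, this is at most $2M\,\E_x\|\hat x^{(N)}-\hat x\|$. We split the latter into two pieces:
\begin{itemize}
\item the encoder piece $|\alpha^{(N)}(x)-\alpha(x)|$, which is an $f$-average of the type bounded above and is multiplied by the Lipschitz constant of $\psi$ in $\alpha$;
\item the decoder piece, a test-function average over the $u_j^k$.
\end{itemize}
Together these give the factor $10M^3C$ in the third inequality.
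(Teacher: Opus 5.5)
Your outline is sound, but it is organized differently from the paper.

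\textbf{Different route.} You interpose a continuous-time $N$-particle gradient flow between SGD and the nonlinear dynamics, and run two separate couplings:
\begin{enumerate}
\item nonlinear dynamics versus particle flow, which is pure propagation of chaos with no $\epsilon$;
\item particle flow versus SGD, which is pure discretization plus martingale noise and needs no concentration of measure, since both are $N$-particle systems.
\end{enumerate}
The paper has no intermediate process. It couples the SGD iterates directly to the ideal particles $\tilde u_j^t,\tilde w_i^t$ (your $\bar u_j,\bar w_j$). It splits the \emph{squared} deviation into $E_{\mathrm{time}}$, $E_{\mathrm{track}}$, $E_{\mathrm{meas}}$ and $E_{\mathrm{mart}}$. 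It treats $E_{\mathrm{meas}}$ with bridge measures built from the ideal particles, the same triangle inequality implicit in your second comparison. It then closes a single coupled Gronwall inequality (\Cref{lem:trajectory}).

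Your modular version isolates the $N^{-1/2}$ and $\epsilon^{1/2}$ error sources more transparently. The cost is two Gronwall steps, and you need $d_{\mathcal F}(\bar\rho_s,\rho_s)$ for all $s\in[0,T]$ rather than only at grid times $[s]$; a grid union bound plus the speed bound $M_\xi M_G$ covers this.

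\textbf{Risk bound.} Your risk step also differs. The paper combines \Cref{lem:risk_lip} with \Cref{lem:risk_conc}, i.e.\ McDiarmid at the i.i.d.\ ideal particles plus the $O(1/N)$ Taylor bias from the proof of \Cref{thm:closeness}. You compare $\hat x^{(N)}$ with $\hat x$ directly, which avoids the bias computation. For this you need the class-uniform quantities $d_{\mathcal F_W}(\hat\rho_W,\rho_W)=\sup_{x\in\mathcal X}|\alpha^{(N)}(x)-\alpha(x)|$ and the vector-valued $d_{\mathcal F_{U,1}}(\hat\rho_U,\rho_U)$, not the fixed-$f$ bounds you cite. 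Both follow from \Cref{lem:empirical_concentration} at the ideal particles together with $d_{\mathcal F_U}(\hat\rho_U,\bar\rho_U)\le M\max_j\|u_j^k-\bar u_j(k\epsilon)\|$, and likewise for $W$. Your existence/uniqueness step is extra, since the theorem assumes uniqueness.

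\textbf{Two corrections.} First, your account of $e^{CT^2}$ is wrong. By \Cref{lem:uniform_bounds} and \Cref{lem:vector_lipschitz}, every coefficient in the coupled encoder--decoder recursion is a fixed constant in $M,M_\xi$; nothing grows along the trajectory. In the paper the $T^2$ comes from squaring and using $\|\int_0^t g\|^2\le t\int_0^t\|g\|^2$, which puts a factor $T$ in front of the Gronwall integral. Your unsquared inequality gives $e^{CT}$ instead. That is fine for $T\ge 1$, but reproducing the stated $(\sqrt T\lor T)e^{CT^2}$ form with the explicit $C$, which your sketch does not track, requires the paper's squared bookkeeping.

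Second, Borel--Cantelli only gives convergence for a single $f$ at grid times. To get weak convergence for all bounded continuous $f$ at once, you still need the time shift $W_1(\rho_{U,\lfloor t/\epsilon_N\rfloor\epsilon_N},\rho_{U,t})\le M_\xi M_G\epsilon_N$ and a countable dense family of test functions.
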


\begin{proof}[\textbf{Proof sketch}]
The proof couples the discrete SGD iterates of the finite-width network to the continuous characteristic flow of the limiting Vlasov--McKean PDE. We track the joint maximal deviation $\Delta(t):=\sup_{s\le t}\{\max_j \|u^{j}_{\lfloor s/\epsilon\rfloor}-\bar u^j_s\|^2_2+\max_i \|w^{i}_{\lfloor s/\epsilon\rfloor}-\bar w^i_s\|^2_2\}$, where $(\bar u^j_s,\bar w^i_s)$ are ideal particles driven by the infinite-width mean-field vector fields. Rewriting SGD as an integral equation decomposes the error into four terms: time discretization ($E_{\operatorname{time}}$), trajectory tracking ($E_{\operatorname{track}}$), measure concentration ($E_{\operatorname{meas}}$), and martingale noise ($E_{\operatorname{mart}}$).

The time-discretization term $E_{\operatorname{time}}$ is bounded deterministically using the boundedness and Lipschitz regularity of the vector fields and learning-rate schedule, giving an $\mathcal{O}(\epsilon^2)$ contribution at the squared-trajectory level. The tracking term $E_{\operatorname{track}}$ is controlled by Lipschitz dependence of the vector fields on particle positions and is bounded by the time integral of $\Delta(s)$. The measure term $E_{\operatorname{meas}}$ is isolated via ``bridge measures'' formed from the ideal particles, separating spatial tracking from statistical sampling; the latter is then controlled by the parametric measure gaps adapted to the network vector fields. Finally, the martingale term $E_{\operatorname{mart}}$ from the stochastic gradients is controlled uniformly over particles and time by Azuma--Hoeffding-type maximal inequalities and a union bound, contributing an $\mathcal{O}(T\epsilon)$-type term at the squared-trajectory level.
Combining these estimates gives a coupled integral inequality for $\Delta(t)$, and Gronwall's lemma yields the non-asymptotic trajectory bound; the empirical-measure and risk estimates follow by transfer to bounded Lipschitz test functions and to the risk functional.

The technical point in the treatment of $E_{\operatorname{meas}}$ is that global weak metrics such as bounded Lipschitz or Wasserstein distances would introduce unfavorable high-dimensional concentration rates. Instead, we prove Lipschitz stability of $G_U$ and $G_W$ with respect to tailored parametric distances $d_{\mathcal{F}_U}$ and $d_{\mathcal{F}_W}$, whose test classes contain only the functions appearing in the encoder and decoder vector fields. Thus $E_{\operatorname{meas}}$ in \Cref{lem:trajectory} depends on the function classes evaluated by the dynamics, rather than on a global metric over the full probability space. Together with uniform concentration for these classes and a union bound over the discretized time interval, yields the error bound $\err_{N,D}(\delta)$ for the empirical measures and the finite-width risk trajectory $|R_N(\theta_{U,k},\theta_{W,k})-R(\rho_{U,k\epsilon},\rho_{W,k\epsilon})|$. Full details are provided in Appendix~\ref{app:mean_field_dynamics_chaos}.
\end{proof}


The following theorem estimates the difference between the optimal mean field risk and the optimal finite auto-encoder risk.

\begin{restatable}{theorem}{closeness}[\textbf{Closeness between Empirical and continuous Mean-field risk}]
\label{thm:closeness}
Let the continuous mean-field risk be $\mathcal{R}(\rho_U, \rho_W)$ and the empirical finite-$N$ risk be $\mathcal{R}_N(\theta_U, \theta_W)$. Under Assumptions \ref{it:bounded_enocder_decoder-domain}, \ref{it:bounded_data-domain}, and \ref{it:bounded_activation},
suppose $(\rho_{U^*}, \rho_{W^*})$ be an $\varepsilon_0$-approximate, with $\varepsilon_0 > 0$, and the optimal measure satisfying:
\begin{equation}
    \mathcal{R}(\rho_{U^*}, \rho_{W^*}) \le \inf_{\rho_U, \rho_W} \mathcal{R}(\rho_U, \rho_W) + \varepsilon_0.
\end{equation}
Then the following holds
\[
\left| \inf_{\theta_U, \theta_W} \mathcal{R}_N(\theta_U, \theta_W) - \inf_{\rho_U, \rho_W} \mathcal{R}(\rho_U, \rho_W) \right| \le \frac{3M^4+M^2}{2N} + \frac{M^4}{N^2} + \varepsilon_0,
\]
where $M$ is a constant as defined in \Cref{rem:bounded_context}. \end{restatable}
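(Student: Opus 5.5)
The bound $\frac{3M^4+M^2}{2N}+\frac{M^4}{N^2}+\varepsilon_0$ is one-sided in spirit: it controls how much larger the finite-width optimum can be. So I plan to prove the two inequalities separately.

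\emph{Lower direction.} Any finite parameter configuration $(\theta_U,\theta_W)$ induces empirical measures $(\hat\rho_U^{(N)},\hat\rho_W^{(N)})$, and $\mathcal{R}_N(\theta_U,\theta_W)=\mathcal{R}(\hat\rho_U^{(N)},\hat\rho_W^{(N)})$ by the integral representation of Section~\ref{sec: def bottleneck AE MF}. These measures are supported in the compact parameter domain, so they are admissible competitors in the mean-field infimum. Hence
\[
\inf_{\theta}\mathcal{R}_N\ \ge\ \inf_{\rho}\mathcal{R}.
\]

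\emph{Upper direction: probabilistic method.} Sample $\theta_W^i\sim\rho_{W^*}$ and $\theta_U^j\sim\rho_{U^*}$ i.i.d. It suffices to show
\[
\E\big[\mathcal{R}_N\big]\le \mathcal{R}(\rho_{U^*},\rho_{W^*})+\frac{3M^4+M^2}{2N}+\frac{M^4}{N^2}.
\]
Some realization then achieves at most the mean, and the $\varepsilon_0$ slack finishes the argument.

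Fix $x$ and write
\[
\alpha^*(x)=\int\phi\,d\rho_{W^*},\qquad \hat x^*(x)=\int\psi(\alpha^*;u)\,d\rho_{U^*}.
\]
Decompose
\[
x-\hat x^{(N)}=(x-\hat x^*)-A-B,
\]
with
\[
A=\frac1N\sum_j\big[\psi(\alpha^*;u_j)-\hat x^*\big],\qquad B=\frac1N\sum_j\big[\psi(\alpha^{(N)};u_j)-\psi(\alpha^*;u_j)\big].
\]
This separates decoder averaging fluctuations ($A$) from encoder bottleneck fluctuations ($B$).

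\emph{Cross terms and second moments.} The cross term $\E\langle x-\hat x^*,A\rangle$ vanishes, since $A$ has mean zero given $x$. Also $\E\|A\|^2\le M^2/N$ by independence and $\|\psi\|\le M$; this produces the $M^2/(2N)$ piece after the factor $\frac12$.

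For $B$, Taylor-expand $\psi$ in $\alpha$ to second order:
\[
B=\frac1N\sum_j\partial_\alpha\psi(\alpha^*;u_j)\,\Delta+\frac12\,O(M)\,\Delta^2,\qquad \Delta=\alpha^{(N)}-\alpha^*.
\]
Here $\E\Delta=0$, $\E\Delta^2\le M^2/N$ and $\E\Delta^4\le 3M^4/N^2+M^4/N^3$. The key point is that $\Delta$ depends only on the encoder samples, which are independent of the decoder samples. Therefore in $\E\langle x-\hat x^*,B\rangle$ the linear term has zero mean, and only the quadratic term survives, contributing $O(M^2\cdot M^2/N)$. Similarly $\E\|B\|^2\le M^2\E\Delta^2+\dots=O(M^4/N)$, and $\E\langle A,B\rangle$ is bounded by Cauchy--Schwarz or by independence arguments. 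Collecting constants should produce $\frac{3M^4}{2N}+\frac{M^4}{N^2}$. I will use the uniform bounds $|\partial_\alpha\psi|,|\partial_\alpha^2\psi|\le M$ from \Cref{rem:bounded_context}, and then integrate over $x\sim P$.

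\emph{Main obstacle.} The delicate step is the cross term $\E\langle x-\hat x^*,B\rangle$. A naive Cauchy--Schwarz bound only gives $O(1/\sqrt N)$. Getting $O(1/N)$ requires the second-order Taylor expansion together with the mean-zero property of the first-order term, which rests on the independence of the encoder and decoder samples. I expect the exact constants to need some bookkeeping, and I would not insist on matching them precisely; the rate $1/N$ is the substantive claim.
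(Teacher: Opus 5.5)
Your proof is correct. The lower bound and the probabilistic-method skeleton match the paper's, but you organize the upper bound differently.

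The paper conditions on the encoder samples and applies the exact bias--variance identity to the decoder average at the random bottleneck $\hat\alpha_N$. Everything then reduces to two scalar functions of $\alpha$, namely $f^*(\alpha)=\frac12\|x-\int\psi(\alpha;u)\,d\rho_{U^*}(u)\|^2$ and $g^*(\alpha)=\int\|\psi(\alpha;u)\|^2\,d\rho_{U^*}(u)$. Each is Taylor-expanded once around $\alpha^*$ using the Hessian bounds $3M^2$ and $4M^2$ from \Cref{lem:bounded_properties}. Independence of encoder and decoder samples enters only through the conditioning.

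You instead split the error vector additively and Taylor-expand $\psi$ particle by particle, invoking independence explicitly whenever a term linear in $\Delta$ must vanish. This keeps the two error sources visible and needs only pointwise bounds on $\psi,\partial_\alpha\psi,\partial_\alpha^2\psi$. The cost is an extra cross term $\mathbb{E}\langle A,B\rangle$, and that is where you must commit to the independence argument. Cauchy--Schwarz gives only $M^3/N$, which does not fit the stated $\frac{M^4}{N^2}$.

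With independence, the bookkeeping goes as follows.
\begin{itemize}
\item Off-diagonal contributions ($j\neq k$) vanish, because $u_j$ is independent of $(u_k,w)$ and $\mathbb{E}\psi(\alpha^*;u_j)=\hat x^*$.
\item In each of the $N$ diagonal terms, the part linear in $\Delta$ vanishes because $u_j$ and $\Delta$ are independent. The remainder is at most $2M\cdot\frac M2\,\mathbb{E}\Delta^2\le\frac{M^4}{N}$, so $\mathbb{E}\langle A,B\rangle\le\frac{M^4}{N^2}$.
\item Add $\frac12\mathbb{E}\|A\|^2\le\frac{M^2}{2N}$ and $|\mathbb{E}\langle x-\hat x^*,B\rangle|\le 2M\cdot\frac M2\cdot\frac{M^2}{N}=\frac{M^4}{N}$.
\item Finally, $\frac12\mathbb{E}\|B\|^2\le\frac12M^2\mathbb{E}\Delta^2\le\frac{M^4}{2N}$ follows directly from $\|B\|\le M|\Delta|$. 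No fourth-moment bound is needed.
\end{itemize}
The total is exactly $\frac{3M^4+M^2}{2N}+\frac{M^4}{N^2}$, so your route reproduces the paper's constant, not just the rate. Indeed, your $\frac{M^4}{N}+\frac{M^4}{2N}$ is the paper's Hessian bound $3M^2=2M\cdot M+M^2$ for $f^*$, unpacked.
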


\begin{remark}
     Since the continuous mean-field risk $\mathcal{R}(\rho_U, \rho_W)$ is non-negative, it is bounded from below and its infimum always exists. Therefore, even if an exact optimal measure achieving exactly $\inf_{\rho_U, \rho_W} \mathcal{R}(\rho_U, \rho_W)$ does not exist in the space of probability measures, we can always find a minimizing sequence of measures that approaches the infimum arbitrarily closely. This justifies the existence of the $\varepsilon_0$-approximate optimal measure $(\rho_{U^*}, \rho_{W^*})$ used in Theorem \ref{thm:closeness} for any arbitrarily small $\varepsilon_0 > 0$. If an exact minimizer does exist, the theorem holds exactly with $\varepsilon_0 = 0$.
\end{remark}

\begin{proof}[\textbf{Proof sketch}]
We give the main argument in two parts and defer the details to Appendix~\ref{sec:Closeness_Between_Empirical_and_Continuous Mean-Field_Risk}.

\textbf{Lower bound.}
Any finite network $(\theta_U,\theta_W)$ induces empirical measures
$(\hat \rho_U,\hat \rho_W)$. By definition of the mean-field risk evaluated at
empirical measures,
$
    \mathcal R_N(\theta_U,\theta_W)
    =
    \mathcal R(\hat \rho_U,\hat \rho_W).
$
Since empirical measures are admissible probability measures, taking the
infimum over finite parameters gives
$
    \inf_{\theta_U,\theta_W}\mathcal R_N(\theta_U,\theta_W)
    \ge
    \inf_{\rho_U,\rho_W}\mathcal R(\rho_U,\rho_W).
$

\textbf{Upper bound.}
Let $(\rho_{U^*},\rho_{W^*})$ be an $\varepsilon_0$-optimal mean-field pair.
Sample $u_j\sim \rho_{U^*}$ and $w_i\sim \rho_{W^*}$ independently, and define
$
    \alpha^*(x)=\int \phi(x;w)\,d\rho_{W^*}(w),$
    and
$    \hat \alpha_N(x)=\frac1N\sum_{i=1}^N \phi(x;w_i).
$
The \emph{main difficulty} is that the finite network is a Monte
Carlo approximation but does not have the linear properties of an additive model. Conditional on the bottleneck, the decoder average is additive in the sampled $u_j$'s, but the bottleneck
$\hat \alpha_N$ itself is random and is passed through the nonlinear decoder
map $\psi$. Thus encoder creates a nonlinear bias in the reconstruction
map, in addition to the usual decoder variance.

To separate these two effects, we first condition on the sampled encoder
particles. The expected decoder error decomposes into the risk of the continuous
decoder evaluated at the random bottleneck $\hat \alpha_N$, plus a conditional
decoder-variance term of order $1/N$. The first term is compared to
$\mathcal R(\rho_{U^*},\rho_{W^*})$ by applying a second-order Taylor expansion,
in the scalar bottleneck variable, around $\alpha^*(x)$. Since
$\hat \alpha_N(x)$ is unbiased for $\alpha^*(x)$, the first-order term vanishes,
and the second-order term is controlled by the uniform Hessian bound and $\E\left[(\hat \alpha_N(x)-\alpha^*(x))^2\right]\le \nicefrac{M^2}{N}$.
The conditional decoder-variance term is controlled similarly, using the
boundedness of $\psi$ and a second-order expansion of
$\int \|\psi(\hat\alpha_N(x);u)\|_2^2\,d\rho_{U^*}(u)$. This gives $\E[\mathcal R_N]
    \le
    \mathcal R(\rho_{U^*},\rho_{W^*})
    +
    \nicefrac{3M^4+M^2}{2N}
    +
    \nicefrac{M^4}{N^2}$.
Using $\varepsilon_0$-optimality of $(\rho_{U^*},\rho_{W^*})$ and an averaging
argument, there exists a realization of the sampled particles satisfying the
same bound. Taking the infimum over finite parameters and combining with the
lower bound yields: $|
\inf_{\theta_U,\theta_W}\mathcal R_N(\theta_U,\theta_W)
-
\inf_{\rho_U,\rho_W}\mathcal R(\rho_U,\rho_W)
|
\le
\nicefrac{3M^4+M^2}{2N}
+
\nicefrac{M^4}{N^2}
+
\varepsilon_0 .$

\end{proof}

Finally we state a variational optimality condition for the mean-field risk, in the spirit of Proposition~1 of \cite{Mei2018AMF}. Because the bottleneck couples the encoder and decoder measures through a nonlinear composition, we obtain a weaker, one-sided result: every global minimizer must be supported on minimizers of its first-variation potentials.

\begin{restatable}{theorem}{variationalcharacterization}[\textbf{Variational Characterization of Local Minima}]\label{thm:optimal_characterization}
Under Assumptions \ref{it:bounded_enocder_decoder-domain}, \ref{it:bounded_data-domain}, and \ref{it:bounded_activation},
suppose the pair of probability measures $(\rho_U^*, \rho_W^*)$ is a global minimizer of the risk functional $\mathcal{R}(\rho_U, \rho_W)$. Then
\begin{enumerate}
    \item $\psi_U^* := \inf_{u} \Psi_U(u; \rho_U^*, \rho_W^*) > -\infty \quad \text{and} \quad \mathrm{supp}(\rho_U^*) \subseteq \arg\min_{u} \Psi_U(u; \rho_U^*, \rho_W^*)$
    \item $\psi_W^* := \inf_{w} \Psi_W(w; \rho_U^*, \rho_W^*) > -\infty \quad \text{and} \quad \mathrm{supp}(\rho_W^*) \subseteq \arg\min_{w} \Psi_W(w; \rho_U^*, \rho_W^*)$
\end{enumerate}
\end{restatable}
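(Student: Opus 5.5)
The plan is to use the standard first-variation argument for minimizers over probability measures, as in Proposition~1 of \cite{Mei2018AMF}. Perturbations of $\rho_U$ are treated exactly, because the risk is quadratic in $\rho_U$ once $\rho_W$ is fixed. Perturbations of $\rho_W$ are propagated through the bottleneck by a Taylor expansion. We take $\Psi_U$ and $\Psi_W$ to be the first variations of $\mathcal{R}$, i.e.
\[
\Psi_U(u;\rho_U,\rho_W)=\E_x\bigl[\langle \hat x(x)-x,\ \psi(\alpha(x);u)\rangle\bigr]
\]
and
\[
\Psi_W(w;\rho_U,\rho_W)=\E_x\Bigl[\Bigl\langle \hat x(x)-x,\ \int \partial_\alpha\psi(\alpha(x);u)\,d\rho_U(u)\Bigr\rangle\,\phi(x;w)\Bigr].
\]
Up to additive constants independent of the particle, these are what appear in \eqref{eq:pde_u}--\eqref{eq:pde_w}.

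\textbf{Finiteness of the infima.} By \Cref{rem:bounded_context}, $\|x\|_2$, $\|\hat x\|_2$, $\|\psi\|_2$, $|\phi|$ and $\|\partial_\alpha\psi\|_2$ are all bounded by $M$. Hence $|\Psi_U|\le 2M^2$ and $|\Psi_W|\le 2M^3$, so both infima are $>-\infty$. Both potentials are also continuous in their argument: $\psi$ and $\phi$ are $M$-Lipschitz in parameters, and dominated convergence lets us pass the limit under $\E_x$. Since the parameter domains are compact, the infima are attained and the argmin sets are nonempty and closed.

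\textbf{Decoder condition.} Fix a minimizer $u_0\in\arg\min\Psi_U$ and set $\rho_U^s=(1-s)\rho_U^*+s\,\nu$ for an arbitrary probability measure $\nu$; we will use $\nu=\delta_{u_0}$. With $\rho_W^*$ fixed, $\hat x$ is affine in $\rho_U$, so exactly
\[
\mathcal{R}(\rho_U^s,\rho_W^*)=\mathcal{R}(\rho_U^*,\rho_W^*)+s\Bigl(\int\Psi_U\,d\nu-\int\Psi_U\,d\rho_U^*\Bigr)+O(s^2).
\]
Global minimality forces $\int\Psi_U\,d\rho_U^*\le\int\Psi_U\,d\nu$ for every $\nu$, so $\int\Psi_U\,d\rho_U^*\le\psi_U^*$. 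Because $\Psi_U\ge\psi_U^*$ pointwise, this gives $\Psi_U=\psi_U^*$ $\rho_U^*$-a.e. Continuity of $\Psi_U$ upgrades this to every point of the support: if $u\in\mathrm{supp}(\rho_U^*)$ had $\Psi_U(u)>\psi_U^*$, a neighbourhood of $u$ with positive mass would violate the a.e.\ equality.

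\textbf{Encoder condition (main obstacle).} Set $\rho_W^s=(1-s)\rho_W^*+s\,\delta_{w_0}$ with $w_0\in\arg\min\Psi_W$. The bottleneck perturbs linearly, $\alpha_s(x)=\alpha^*(x)+s\bigl(\phi(x;w_0)-\alpha^*(x)\bigr)$, and the increment has size at most $2Ms$. The output $\hat x$ is now nonlinear in $s$, and this is where the work lies. A second-order Taylor expansion of $\alpha\mapsto\int\psi(\alpha;u)\,d\rho_U^*(u)$, with the second derivative bounded by $M$ (\Cref{rem:bounded_context}), gives
\[
\hat x_s(x)=\hat x^*(x)+s\bigl(\phi(x;w_0)-\alpha^*(x)\bigr)\int\partial_\alpha\psi(\alpha^*(x);u)\,d\rho_U^*(u)+r_s(x),
\]
with the uniform bound $\|r_s\|_2\le 2M^3s^2$. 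Expanding the squared loss, the remainder contributes only $O(s^2)$ uniformly in $x$, so
\[
\mathcal{R}(\rho_U^*,\rho_W^s)-\mathcal{R}(\rho_U^*,\rho_W^*)=s\Bigl(\Psi_W(w_0)-\int\Psi_W\,d\rho_W^*\Bigr)+O(s^2).
\]
Here we use $\int\phi\,d\rho_W^*=\alpha^*$, which makes the first-order term exactly $\Psi_W(w_0)-\int\Psi_W\,d\rho_W^*$. The minimality and continuity argument from the decoder step then applies verbatim and yields $\mathrm{supp}(\rho_W^*)\subseteq\arg\min\Psi_W$.

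The one delicate point is the uniform control of the Taylor remainder. It must be uniform in both $x$ and $w_0$, so that the $O(s^2)$ term is genuinely negligible as $s\downarrow 0$. The uniform bounds in \Cref{rem:bounded_context} provide exactly this.
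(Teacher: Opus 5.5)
Your proof is correct, and it reaches the conclusion by a somewhat different route than the paper.

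\textbf{The paper's route.} It argues by contradiction with a \emph{localized} mass transfer. Assuming some $u_0\in\mathrm{supp}(\rho_U^*)$ with $\Psi_U(u_0)=\psi_U^*+2\Delta$, it uses continuity to find a ball $B(u_0,r_0)$ of positive mass $\lambda_0$ on which $\Psi_U\ge\psi_U^*+\Delta$. It then moves a fraction $\lambda\le\lambda_0$ of the normalized restriction $\nu$ of $\rho_U^*$ to that ball onto a minimizer $u_1$. The risk changes by at most $-\lambda\Delta$ plus explicit higher-order terms: $M^2\lambda^2$ for the decoder, and $6M^4\lambda^2+4M^5\lambda^3+2M^6\lambda^4$ for the encoder. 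The encoder bound comes from the same second-order Taylor expansion of $\psi$ in $\alpha$ that you use, with the same remainder size $2M^3$ per unit squared step.

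\textbf{Your route.} You use the global mixture $(1-s)\rho^*+s\,\delta_{u_0}$, which is admissible for every $s\in[0,1]$ without any localization. From it you extract the first-order condition $\int\Psi\,d\rho^*\le\inf\Psi$. You then turn ``$\Psi=\psi^*$ $\rho^*$-a.e.'' into the support statement by continuity, since the open set $\{\Psi>\psi^*\}$ is $\rho^*$-null.

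\textbf{What each buys.} Your version needs the remainder only to be $o(s)$. It also gives the slightly stronger intermediate fact $\int\Psi\,d\rho^*=\min\Psi$, i.e.\ the potential is constant and minimal on the support. It actually proves the finiteness claims $\psi_U^*,\psi_W^*>-\infty$, via $|\Psi_U|\le 2M^2$ and $|\Psi_W|\le 2M^3$. It also justifies, by compactness, that the argmin sets are nonempty. The paper's proof takes both of these for granted, asserting $\psi_U^*>-\infty$ ``by assumption'' and simply picking $u_1\in S_U$. In exchange, the paper's localized argument gives an explicit quantitative decrease of the risk in terms of the margin $\Delta$ and the mass $\lambda_0$ of the offending region.
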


\begin{proof}[\textbf{Proof sketch}]
We prove the claim by contradiction through a local mass-transport perturbation.
The potentials $\Psi_U$ and $\Psi_W$ are the first variations of the risk, so
they describe the marginal cost of placing infinitesimal mass at a given
decoder or encoder particle. Suppose first that $\rho_U^*$ assigns positive
mass to a region on which $\Psi_U$ is strictly larger than
$\inf_u \Psi_U(u;\rho_U^*,\rho_W^*)$. Moving a small amount of mass $t$ from
this region to a minimizer of $\Psi_U$ changes the risk by
$
    -t\Delta + O(t^2),
$
for some $\Delta>0$, since the risk is quadratic in $\rho_U$ when $\rho_W$ is
fixed. For sufficiently small $t$, this gives a strict decrease, contradicting
optimality.

The encoder argument is analogous but tricky because the perturbation must be propagated
through the bottleneck. Moving mass in $\rho_W$ changes
$\alpha(x;\rho_W)$ linearly to first order, and a second-order Taylor expansion
of the decoder controls the nonlinear remainder. The resulting risk change is
again of the form $-t\Delta+$higher order terms with positive coefficients, yielding the same contradiction. Hence
both $\rho_U^*$ and $\rho_W^*$ must be supported on the global minimizers of
their respective first-variation potentials. Full details are given in
Appendix~\ref{sec:Variational_Characterization_of_Local_Minima}.
\end{proof}

%% file: sections/discussion.tex
\section{Discussion and Future Work}

In this work, we developed a theoretical framework for analyzing nonlinear bottleneck autoencoders through the lens of mean-field theory. In contrast to prior work, which typically studies either linear models or nonlinear models without a bottleneck, our framework captures the core challenge of representation learning: compression through nonlinear mappings. This enables a direct analysis of the learned representation itself, rather than only the input-output mapping. The proposed formulation opens several directions for future research.

A natural extension of the present work is to broaden the class of models under consideration. In particular, to extend the analysis of Section~\ref{sec: def bottleneck AE MF} to (i)  multi-dimensional bottlenecks $(b>1,~b<d)$ as outlined in Remark~\ref{remark: multi dim} and Appendix~\ref{sec:multidim_bottleneck} and (ii) deeper encoder and decoder architectures. Both extensions are expected to introduce significantly richer geometric structures in the learned representations, while also posing new analytical challenges.

\section*{Acknowledgments and Disclosure of Funding}
This research was supported in part by the International Centre for Theoretical Sciences (ICTS) for the Data Science: Probabilistic and Optimization Methods II (code: ICTS/DSPOM2025/08).
We would like to thank Sebastian Goldt and Frederieke Richert for pointing us to additional prior work and feedback on an earlier draft.

%% file: sections/appendix.tex
\section{Notation}

For $n \in \mathbb{N}$, we write $[n] := \{1,\dots,n\}$. Vectors are denoted by lowercase letters (e.g., $x, u, w$), and probability measures by $\rho_U, \rho_W, \mu, \nu$, etc. The Euclidean norm is denoted by $\|\cdot\|_2$, and $\langle \cdot, \cdot \rangle$ denotes the standard inner product.
We consider data $x \sim P$, where $P$ is a distribution supported on a bounded subset of $\mathbb{R}^d$. Time is denoted by $t \ge 0$, with learning rate schedule $\xi(t)$ and discrete step size $\epsilon > 0$.
Expectation with respect to $P$ is denoted by $\mathbb{E}_{x \sim P}[\cdot]$, and the support of a measure $\rho$ is written as $\mathrm{supp}(\rho)$ or $\mathcal{X}$. We use standard asymptotic notation $O(\cdot)$, with universal constants absorbed into $M > 0$, which uniformly bounds all relevant quantities under the standing assumptions. 

\section{Illustration of the AE Architecture}\label{sec: app architecture}

Let us recall the general idea of the AE mapping: As such one aims to map a datapoint $x\in\bbR^d$ into a latent space $\alpha\in\bbR$ and then reconstruct it
\begin{align*}
    x\in\bbR^d
    \quad\overset{\phi(x; w)}{\longrightarrow}\quad
    \alpha\in\bbR
    \quad\overset{\psi(\alpha; u)}{\longrightarrow}\quad
   \hat x\in\bbR^d
\end{align*}
where $\phi(x; w)$ is the \emph{encoder} function parameterized by $w$ and $\psi(\alpha; u)$ the \emph{decoder} function parameterized by $u$. The encoder and decoder particle lie in $\mathbb{R}^D$ where $D=d+1$. Our bottleneck $\alpha$ lies in $\mathbb{R}^b$ and here we take $b=1$. This structure is  illustrated in the following diagram:

\[
\begin{adjustbox}{width=\textwidth}
\begin{tikzcd}
	&& {\sigma(\langle x,w_1\rangle) \in \mathbb{R}} &&&& {\sigma(u_2\alpha) \in \mathbb{R}} && \begin{array}{c} \text{contribution}\\ \text{to each output} \end{array} \\
	&& \bullet &&&& \bullet && \begin{array}{c} \psi(\alpha;u_1,u_2)\\=u_1\sigma(u_2\alpha) \end{array} \\
	\bullet && \bullet && \begin{array}{c} \phi(x;w_1,w_2)\\=w_2\sigma(\langle x, w_1\rangle) \\ \text{per particle} \end{array} && \bullet && \bullet \\
	\vdots && \vdots && \bullet && \vdots && \vdots \\
	\bullet && \bullet && {\alpha \in \mathbb{R}} && \bullet && \bullet \\
	{x\in \mathbb{R}^d} && \bullet &&&& \bullet && {\in \mathbb{R}^d} \\
	&& {\in \mathbb{R}^N} &&&& {\in \mathbb{R}^N}
	\arrow["{=}", curve={height=-12pt}, squiggly, from=1-3, to=2-3]
	\arrow["{=}", shift left, curve={height=6pt}, squiggly, from=1-7, to=2-7]
	\arrow["{\color{blue}{w_2} \in \mathbb{R}}", color={rgb,255:red,92;green,92;blue,214}, from=2-3, to=4-5]
	\arrow["{u_1 \in \mathbb{R}^d}", color={rgb,255:red,214;green,92;blue,92}, from=2-7, to=3-9]
	\arrow[draw={rgb,255:red,214;green,92;blue,92}, from=2-7, to=4-9]
	\arrow[draw={rgb,255:red,214;green,92;blue,92}, from=2-7, to=5-9]
	\arrow[shift right=5, curve={height=-18pt}, squiggly, from=2-9, to=3-9]
	\arrow[shift right, curve={height=-18pt}, squiggly, from=2-9, to=4-9]
	\arrow[shift left, curve={height=-24pt}, squiggly, from=2-9, to=5-9]
	\arrow["{w_1 \in \mathbb{R}^d}", color={rgb,255:red,214;green,92;blue,92}, from=3-1, to=2-3]
	\arrow[from=3-1, to=3-3]
	\arrow[from=3-1, to=5-3]
	\arrow[from=3-1, to=6-3]
	\arrow[from=3-3, to=4-5]
	\arrow[curve={height=-6pt}, squiggly, from=3-5, to=4-5]
	\arrow[from=3-7, to=3-9]
	\arrow[from=3-7, to=4-9]
	\arrow[from=3-7, to=5-9]
	\arrow[draw={rgb,255:red,214;green,92;blue,92}, from=4-1, to=2-3]
	\arrow[from=4-1, to=3-3]
	\arrow[from=4-1, to=5-3]
	\arrow[from=4-1, to=6-3]
	\arrow[from=4-3, to=4-5]
	\arrow["{u_2 \in \mathbb{R}}", color={rgb,255:red,92;green,214;blue,92}, from=4-5, to=2-7]
	\arrow[from=4-5, to=3-7]
	\arrow[from=4-5, to=4-7]
	\arrow[from=4-5, to=5-7]
	\arrow[from=4-5, to=6-7]
	\arrow[from=4-7, to=3-9]
	\arrow[from=4-7, to=4-9]
	\arrow[from=4-7, to=5-9]
	\arrow[draw={rgb,255:red,214;green,92;blue,92}, from=5-1, to=2-3]
	\arrow[from=5-1, to=3-3]
	\arrow[from=5-1, to=5-3]
	\arrow[from=5-1, to=6-3]
	\arrow[from=5-3, to=4-5]
	\arrow["{=}"', curve={height=-6pt}, squiggly, from=5-5, to=4-5]
	\arrow[from=5-7, to=3-9]
	\arrow[from=5-7, to=4-9]
	\arrow[from=5-7, to=5-9]
	\arrow[from=6-3, to=4-5]
	\arrow[from=6-7, to=3-9]
	\arrow[from=6-7, to=4-9]
	\arrow[from=6-7, to=5-9]
\end{tikzcd}
\end{adjustbox}
\]

\section{The Mean-Field Risk Functional}\label{sec:mean-field_risk}

Let the data distribution be $P$, then the population risk for the Autoencoder can be written as
\[
\risk_N(\theta_U, \theta_W) = \frac{1}{2} \mathbb{E}_{x\sim P} \left[ \|x - \hat{x}^{(N)}\|^2 \right]. 
\]
If we change our point of view from parameters $(\theta_U,\theta_W)$ to the discrete empirical probability measure $\hat{\rho}^{(N)}_U$ and $\hat{\rho}^{(N)}_W$, we can rewrite the risk in terms of measure as
\[
\mathcal{R}_N\big(\hat{\rho}_U^{(N)}, \hat{\rho}_W^{(N)}\big)
=
\frac{1}{2}\,
\mathbb{E}_{x\sim P} \left[ \|x - \hat{x}^{(N)}\|^2 \right].
\]
We can rewrite the risk by expanding the quadratic expression within the expectation as
\[
\mathcal{R}_N\big(\hat{\rho}_U^{(N)}, \hat{\rho}_W^{(N)}\big)
=
\underbrace{\frac{1}{2}\mathbb{E}_x\big[\|x\|^2\big]}_{\text{Term 1}}
-
\underbrace{\mathbb{E}_x\big[\langle x, \hat{x}^{(N)} \rangle\big]}_{\text{Term 2}}
+
\underbrace{\frac{1}{2}\mathbb{E}_x\big[\|\hat{x}^{(N)}\|^2\big]}_{\text{Term 3}}.
\]
Following \cite{Mei2018AMF} we would like to get hold of a risk functional $\risk(\rho_U, \rho_W)$ for any probability measure. We, thus analyze heuristically of what the function could be in the limit as one lets $N\to \infty$. 

The Term 1 is just the baseline risk and is independent of $N$ and the weights. We let 
\[
\text{Term 1}=\mathcal{R}_\#:=\frac{1}{2}\mathbb{E}_x\big[\|x\|^2\big]
.
\]
If we expand the term 2 by substituting the integral form of $\hat{x}^{(N)}$ we get,
\[
T_2^{(N)}
=
\mathbb{E}_x
\left[
\left\langle
x,
\int \psi\big(\alpha^{(N)}(x); u\big)\,
d\hat{\rho}_U^{(N)}(u)
\right\rangle
\right].
\]

If Fubini's theorem were applicable, one would hope to swap the expectation and integral and under limit if the empirical measure $\hat{\rho}_U^{(N)} \to \rho_U$, and the argument $\alpha^{(N)}(x) \to \alpha(x; \rho_W)$, we can hope that 
\[
\lim_{N \to \infty} T_2^{(N)}
\approx
\int
\underbrace{
\mathbb{E}_x
\Big[
\langle x, \psi(\alpha(x; \rho_W); u) \rangle
\Big]
}_{\equiv V(u; \alpha)}
\, d\rho_U(u).
\]
We formally define the external potential $V(u; \alpha)$ and the bottleneck potential $\alpha(x)$ (we shall suppress $\rho_W$ dependence) as
\[
V(u; \alpha):=
\frac{1}{2}\mathbb{E}_x
\Big[
\langle x, \psi(\alpha(x); u) \rangle
\Big]
\]
\[\alpha(x)=\int\phi(x;w)\,d\rho_W(w)\]
which makes the second term
\[
\text{Term 2} = 2\int V(u; \alpha)\, d\rho_U(u).
\]
If we substituting the integral form of $\hat{x}^{(N)}$ into the third term then, the squared norm of an integral can be rewritten as a double integral over a product measure as follows (assuming the integrals and expectation can be switched).
\[
T_3^{(N)}
=
\frac{1}{2}
\mathbb{E}_x
\left[
\left\langle
\int \psi\big(\alpha^{(N)}(x); u\big)\,
d\hat{\rho}_U^{(N)}(u),
\,
\int \psi\big(\alpha^{(N)}(x); u'\big)\,
d\hat{\rho}_U^{(N)}(u')
\right\rangle
\right],
\]

\[
T_3^{(N)}
=
\frac{1}{2}
\iint
\mathbb{E}_x
\Big[
\langle
\psi\big(\alpha^{(N)}(x); u\big),
\psi\big(\alpha^{(N)}(x); u'\big)
\rangle
\Big]
\, d\hat{\rho}_U^{(N)}(u)\,
d\hat{\rho}_U^{(N)}(u').
\]
If we let $\rho_U \otimes \rho_U$ denote the limiting product measure of $\hat{\rho}_U^{(N)} \otimes \hat{\rho}_U^{(N)}$, and follow as similar heuristic as in the analysis of the second term, our prospective third term should look like as below
\begin{align*}
   \text{Term 3} \approx\lim_{N \to \infty} T_3^{(N)}
&\approx
\frac{1}{2}
\iint
\mathbb{E}_x
\Big[
\langle
\psi(\alpha(x); u),
\psi(\alpha(x); u')
\rangle
\Big]\, d\rho_U(u)\, d\rho_U(u')
\\
&=
\iint
\underbrace{\frac{1}{2}
\mathbb{E}_x
\Big[
\langle
\psi(\alpha(x); u),
\psi(\alpha(x); u')
\rangle
\Big]
}_{\equiv U(u, u'; \alpha)}
\, d\rho_U(u)\, d\rho_U(u')
\end{align*}

This lets us formally define the pairwise interaction kernel $U(u, u'; \alpha)$,

\[
U(u, u'; \alpha) :=  \frac{1}{2}
\mathbb{E}_x
\Big[
\langle
\psi(\alpha(x); u),
\psi(\alpha(x); u')
\rangle
\Big].
\]

Combining the three terms, we arrive at the continuous risk functional on the space of probability measures
\[
\mathcal{R}(\rho_U, \rho_W)
=
\mathcal{R}_\#
-
2\int V(u; \alpha)\, d\rho_U(u)
+
\iint
U(u, u'; \alpha)\,
d\rho_U(u)\, d\rho_U(u').
\]

where
\[
\alpha(x)
=
\int \phi(x; w)\, d\rho_W(w).
\]

We also note that the limiting mean-field output $\hat{x}$ is given by
\[\hat{x} = \int \psi(\alpha(x); u) \, d\rho_U(u).\]

\subsection{Derivation of the Mean-Field Functional}\label{sec:derivation_mean_field}

We give a brief description of the first variations of $\risk$ with respect to the encoder and decoder measure. This will be used to obtain the Wasserstein gradient flow formulation and play a key role in determining necessary and sufficient conditions on minima of the coupled PDE (\Cref{thm:main_dynamics},\Cref{thm:optimal_characterization}).

\subsection*{Derivative with respect to the Decoder Measure ($\rho_U$)}

The bottleneck potential $\alpha(x)$ has all the $\rho_W$ dependence and is independent of the decoder measure $\rho_U$; hence we can treat it as a constant. The functional derivative $\Psi_U(u;\rho_U,\rho_W)$ of the risk $\risk$ wrt the decoder measure $\rho_U$ yields
\[
\Psi_U(u; \rho_U, \rho_W)
:=
\frac{\delta \mathcal{R}(u)}{\delta \rho_U}
=
-2V(u; \alpha)
+
2\int
U(u, u'; \alpha)\,
\rho_U(du').
\]

If we substitute the expressions for $V$ and $U$ in the expression for $\Psi_U$, we get
\begin{align*}
    \Psi_U(u; \rho_U, \rho_W) &= \frac{\delta \mathcal{R}}{\delta \rho_U}(u) = -2V(u; \alpha) + 2\int U(u, u'; \alpha) \rho_U(du') \\
    &= -\mathbb{E}_x \Big[ \langle x, \psi(\alpha(x); u)\rangle \Big] + \int \mathbb{E}_x
    \Big[
    \langle
    \psi(\alpha(x); u),
    \psi(\alpha(x); u')
    \rangle
    \Big] \,\rho_U(du')\\
    & = \mathbb{E}_x \Big[ -\langle \psi(\alpha(x); u), x\rangle + \langle\psi(\alpha(x); u), \int \psi(\alpha(x); u')\, \rho_U(du') \rangle \Big]\\
    & = \mathbb{E}_x \Big[ \langle \psi(\alpha(x); u), \int \psi(\alpha(x); u')\, \rho_U(du') -x\rangle \Big]\\
    & = \mathbb{E}_x [\langle \psi(\alpha(x);u), \hat{x}-x\rangle], 
    \end{align*}
where $\hat{x}$ is the limiting output. One way to interpret this is to say that the gradient of the risk with respect to a decoder particle $u$ is the expected inner product of the difference of the output and the input of the AE (`` the residue") with the activation of that specific particle $\psi_u$.

\subsection*{Derivative with respect to the Encoder Measure ($\rho_W$)}

The encoder affects the risk only through $\alpha$ which is a function of input $x$ and the encoder measure $\rho_W$. The variation of $\alpha$ with respect to the encoder measure is
\[
\frac{\delta \alpha}{\delta \rho_W}
=
\phi(x;w).
\]

We apply the functional chain rule through the intermediate activation $\alpha$ to calculate the functional derivative $\Psi_W(w;\rho_U,\rho_W)$ of the risk $\risk$ wrt to the encoder measure $\rho_W$ as shown below

\begin{equation*}
    \Psi_W(w;\rho_U,\rho_W):=\frac{\delta \mathcal{R}(w)}{\delta \rho_W} = \frac{\delta \mathcal{R}(w)}{\delta \alpha} \frac{\delta \alpha}{\delta \rho_W}.
\end{equation*}

We compute the variation of the risk with respect to $\alpha$. 

\begin{align*}
\delta \mathcal{R}(\alpha)
=& -2 \int \delta V(u;\alpha)\, d\rho_U(u)
+ \int\int \delta U(u,u';\alpha) \, d\rho_U(u)\, d\rho_U(u')\\
=& - \int \mathbb{E}_x \left[ 
\left\langle x, \frac{\partial \psi(\alpha; u)}{\partial \alpha} \right\rangle 
\, \delta\alpha 
\right] \, d\rho_U(u) \\
&\quad + \int \int \frac{1}{2}\mathbb{E}_x \left[ 
\left\langle \frac{\partial \psi(\alpha;u)}{\partial \alpha}, \psi(\alpha;u') \right\rangle 
+ \left\langle \psi(\alpha;u), \frac{\partial \psi(\alpha;u')}{\partial \alpha} \right\rangle 
\right] \delta\alpha \, d\rho_U(u)\, d\rho_U(u')\\
=& - \int \mathbb{E}_x \left[ 
\left\langle x, \frac{\partial \psi(\alpha; u)}{\partial \alpha} \right\rangle 
\, \delta\alpha 
\right] \, d\rho_U(u)  \\& + \int \int \mathbb{E}_x \left[  
\left\langle \psi(\alpha;u), \frac{\partial \psi(\alpha;u')}{\partial \alpha} \right\rangle  
\, \delta\alpha 
\right] \, d\rho_U(u)\, d\rho_U(u')\\
=& \mathbb{E}_x \left[ 
\left\langle -x , \int \frac{\partial \psi(\alpha; u)}{\partial \alpha} \, d\rho_U(u) \right\rangle 
\, \delta\alpha 
\right] \\&+ \mathbb{E}_x \left[ 
\left\langle \int \psi(\alpha;u) \, d\rho_U(u) , 
\int \frac{\partial \psi(\alpha;u)}{\partial \alpha} \, d\rho_U(u) \right\rangle 
\, \delta\alpha 
\right] \\
=& \mathbb{E}_x \left[ 
\left\langle -x + \int \psi(\alpha;u)\, d \rho_U(u), 
\int \frac{\partial \psi(\alpha; u)}{\partial \alpha} \, d\rho_U(u) \right\rangle 
\, \delta \alpha  
\right]\\
=& \mathbb{E}_x \left[ 
\left\langle \hat{x}(x) - x , 
\int \frac{\partial \psi(\alpha; u)}{\partial \alpha} \, d\rho_U(u) \right\rangle 
\, \delta \alpha 
\right].
\end{align*}

Putting both the part of the chain rule together gives us
\[\Psi_W(w; \rho_U, \rho_W) = \mathbb{E}_x \left[ \left \langle \hat{x} - x , \int \frac{\partial \psi(\alpha; u)}{\partial \alpha} \, d\rho_U(u) \right \rangle \phi(x;w) \right].\]

This is the continuous mean field analogue of backpropagation.

\subsection{Coupled Vlasov-McKean Equations}

The limiting dynamics of $\rho_{U,t}$ and $\rho_{W,t}$ are governed by the coupled continuity equations as given below

\[
\partial_t \rho_{U,t}(u)
=
\xi(t)
\nabla_u \cdot
\Big(
\rho_{U,t}
\nabla_u
\Psi_U(u; \rho_{U,t}, \rho_{W,t})
\Big),
\]

\[
\partial_t \rho_{W,t}(w)
=
\xi(t)
\nabla_w \cdot
\Big(
\rho_{W,t}
\nabla_w
\Psi_W(w; \rho_{U,t}, \rho_{W,t})
\Big),
\]
where the first variations of the risk $\risk$, $\Psi_U$ and $\Psi_W$ are given by 
\[
\Psi_U(u; \rho_U, \rho_W) = \mathbb{E}_x [\langle \hat{x}-x, \psi(\alpha(x);u)\rangle]
\]

\[
\Psi_W(w; \rho_U, \rho_W) = \mathbb{E}_x \left[ \left \langle \hat{x} - x , \int \frac{\partial \psi(\alpha; u)}{\partial \alpha} \, d\rho_U(u) \right \rangle \phi(x;w) \right].
\]

\section{SDG Updates}

Let us describe the forward pass and optimization steps for the discrete version of neural network. For a given input data sample $x_k$ drawn from the data distribution $P$, the intermediate bottleneck representation $\alpha_k$ is computed as the empirical average over $N$ encoder particles. This is expressed as
\[
\alpha_k = \frac{1}{N}\sum_{i=1}^N \phi(x_k; w_i^k).
\]
Using this bottleneck feature, the network produces its final output prediction $\hat{x}_k$ by averaging over $N$ decoder particles, which is given by
\[
\hat{x}_k = \frac{1}{N}\sum_{j=1}^N \psi(\alpha_k; u_j^k).
\]
The reconstruction error for this single sample is measured using the squared Euclidean distance, yielding the loss function
\[
\mathcal{L}_k = \frac{1}{2} ||\hat{x}_k - x_k||^2.
\]

To minimize this loss, we use  stochastic gradient descent. At each step, we apply a time-dependent step size $s_k = \epsilon \xi(k\epsilon)$. By computing the gradients of the loss function with respect to the network parameters, we can define the update rules. The parameters of the decoder are updated by moving against the gradient, resulting in the equation
\[
u_{j}^{ k+1} = u_j^k - s_k \big(\nabla_u \psi(\alpha_k; u_j^k)\big)^\top (\hat{x}_k - x_k).
\]
Similarly, the update for the encoder parameters is derived by applying the chain rule through the bottleneck and the decoder layer. The resulting update step for the encoder particles is
\[
w_i^{k+1} = w_i^k - s_k \big(\nabla_w \phi(x_k; w_i^{k})\big)^\top \left[ \frac{1}{N} \sum_{j=1}^N \nabla_\alpha \psi(\alpha_k; u_j^{k}) \right]^\top (\hat{x}_k - x_k).
\]

%% file: sections/further_questions.tex
\section{Preliminaries}
In this section, we recall and prove some of the results that we shall use in proving the main theorems. We first talk about the bounded Lipschitz metric between two probability measure. The \textbf{bounded Lipschitz metric} between two probability measures $\mu$ and $\nu$ is defined as
\[
d_{BL}(\mu, \nu) := \sup_{\substack{\norm{h}_\infty \le 1 \\ \norm{h}_{\operatorname{Lip}} \le 1}} \left| \int h(z) \mu(dz) - \int h(z) \nu(dz) \right|.
\]

\begin{lemma}[Folklore] \label{lem:BL_scaling}
If a specific test function $g$ is bounded by a constant $A$ (so $\norm{g}_\infty \le A$) and is $L$-Lipschitz (so $\norm{g}_{\operatorname{Lip}} \le L$), then we can say that
\[
\left| \int g(z) d(\mu - \nu)(z) \right| \le \max(A, L) \cdot d_{BL}(\mu, \nu).
\]
\end{lemma}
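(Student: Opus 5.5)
The plan is a rescaling argument: normalize $g$ so that it becomes an admissible test function in the supremum defining $d_{BL}$, then undo the normalization. Set $K := \max(A, L)$. If $K = 0$, then $g \equiv 0$ (indeed $\norm{g}_\infty \le 0$), so both sides vanish and there is nothing to prove. Otherwise we define $h := g/K$.

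Next we check that $h$ is admissible. We have $\norm{h}_\infty = \norm{g}_\infty / K \le A/K \le 1$. Likewise, for all $z \neq z'$,
\[
\frac{|h(z)-h(z')|}{\|z-z'\|} = \frac{1}{K}\,\frac{|g(z)-g(z')|}{\|z-z'\|} \le \frac{L}{K} \le 1,
\]
so $\norm{h}_{\operatorname{Lip}} \le 1$. Hence $h$ belongs to the class over which the supremum in $d_{BL}$ is taken, and therefore
\[
\left|\int h\, d(\mu-\nu)\right| \le d_{BL}(\mu,\nu).
\]

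Finally, since $g$ is bounded and measurable (it is continuous, being Lipschitz) and $\mu,\nu$ are probability measures, $g$ is integrable against both measures. Linearity of the integral then gives
\[
\int g\, d(\mu-\nu) = K \int h\, d(\mu-\nu).
\]
Taking absolute values and applying the previous bound yields the claim.

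No step is a genuine obstacle here. The only point needing care is the degenerate case $K = 0$, which is handled separately above, together with the integrability of $g$ against $\mu - \nu$, which follows from boundedness.
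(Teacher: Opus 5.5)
Your proposal is correct and follows the same route as the paper: rescale $g$ by $K=\max(A,L)$ so that it becomes admissible in the supremum defining $d_{BL}$, apply the definition, then multiply back by $K$. You also treat the degenerate case $K=0$ and the integrability of $g$, both of which the paper's proof passes over silently.
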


\begin{proof}
Let $K = \max(A, L)$. We define a new, rescaled function $\tilde{g}(x) := \frac{g(x)}{K}$. 
Now, the maximum value of $\tilde{g}$ is $\frac{A}{\max(A, L)} \le 1$, so $\norm{\tilde{g}}_\infty \le 1$.
Similarly, the Lipschitz constant of $\tilde{g}$ is $\frac{L}{\max(A, L)} \le 1$, so $\norm{\tilde{g}}_{\operatorname{Lip}} \le 1$.
Because $\tilde{g}$ satisfies both conditions, it is a valid candidate for the supremum in the definition of $d_{BL}$. Therefore, we can conclude that
\[
\left| \int \tilde{g}(z) d(\mu - \nu)(z) \right| \le d_{BL}(\mu, \nu)
\]
Substituting $\tilde{g}(x) = g(x)/K$ back in and multiplying both sides by $K$ yields the final result.
\end{proof}

We next define the \textbf{Wasserstein metric} between two probability measures as 
\[W_p(\mu, \nu) := \inf{\substack{\gamma\in \mathcal{C}(\mu, \nu )}} \left( \int \norm{x-y}^p_2 \, \gamma(dx,dy) \right)^{1/p}\]
where $\mathcal{C}(\mu,\nu)$ is the set of coupling of $\mu$ and $\nu$. By Kantorovich-Rubinstein one has the following for the case $p=1$.
\[W_1(\mu, \nu) := \sup_{\substack{ \norm{h}_{\operatorname{Lip}} \le 1}} \left| \int h(z) \mu(dz) - \int h(z) \nu(dz) \right|\]
As the set of all functions considered in the expression for $d_{BL}$ is subset of that of $W_1$, we conclude that
\[d_{BL}(\mu,\nu)\le W_1(\mu,\nu).\]
 Now, we state a result about $W_1$ norm of two measures when one measure is the push-forward of the other measure.
\begin{lemma}[Wasserstein Bound of a Push-forward Measure] \label{lem:pushforward_bound}
Let $\mu$ be a probability measure on $\mathbb{R}^d$, and let $T: \mathbb{R}^d \to \mathbb{R}^d$ be a measurable map. Let $\nu = T_\#\mu$ denote the push-forward measure of $\mu$ under $T$. Then, the 1-Wasserstein distance between $\mu$ and $\nu$ are bounded by 
\[
d_{BL}(\mu, \nu) \le W_1(\mu, \nu) \le \sup_{x \in \mathrm{supp}(\mu)} \norm{x - T(x)}_2.
\]
\end{lemma}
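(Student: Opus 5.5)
The plan is to prove both inequalities directly, using the Kantorovich--Rubinstein representation for the second one. The first inequality, $d_{BL}(\mu,\nu)\le W_1(\mu,\nu)$, was already observed just before the statement. Every test function admissible in $d_{BL}$ (with $\norm{h}_\infty\le 1$ and $\norm{h}_{\operatorname{Lip}}\le 1$) is also admissible in the Kantorovich--Rubinstein dual formula for $W_1$. A supremum over a smaller class is no larger, so this step needs nothing new.

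For the second inequality I will work with the coupling definition of $W_1$ rather than its dual. The natural coupling of $\mu$ and $\nu=T_\#\mu$ is $\gamma:=(\mathrm{id},T)_\#\mu$, the law of $(X,T(X))$ with $X\sim\mu$. I will check that its marginals are $\mu$ and $T_\#\mu$, so that $\gamma\in\mathcal{C}(\mu,\nu)$. Then
\[
W_1(\mu,\nu)\le \int \norm{x-y}_2\,\gamma(dx,dy)=\int \norm{x-T(x)}_2\,\mu(dx)\le \sup_{x\in\mathrm{supp}(\mu)}\norm{x-T(x)}_2 .
\]
The last step holds because $\mu$ gives full mass to its support, so the integrand can be bounded by its supremum over $\mathrm{supp}(\mu)$ $\mu$-almost everywhere.

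As a cross-check, and to avoid depending on the coupling formulation, the same bound follows from the dual side. For any $1$-Lipschitz $h$, the change of variables $\int h\,d\nu=\int h\circ T\,d\mu$ gives
\[
\Big|\int h\,d\mu-\int h\,d\nu\Big|\le\int|h(x)-h(T(x))|\,\mu(dx)\le\int\norm{x-T(x)}_2\,\mu(dx).
\]
Taking the supremum over such $h$ yields the claim.

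The only delicate point I expect is measure-theoretic bookkeeping. First, measurability of $x\mapsto\norm{x-T(x)}_2$ follows from measurability of $T$. Second, if the supremum is infinite the bound is trivial, and if it is finite the integrals are finite. Finally, the essential-supremum-versus-supremum issue is resolved because $\mu(\mathrm{supp}(\mu))=1$.
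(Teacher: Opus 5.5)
Your proposal is correct and matches the paper's argument: the deterministic coupling $(\mathrm{id},T)_\#\mu$, the bound $\int \norm{x-T(x)}_2\,\mu(dx)\le \sup_{x\in\mathrm{supp}(\mu)}\norm{x-T(x)}_2$, and $d_{BL}\le W_1$ from the inclusion of test-function classes. The dual-side cross-check is an optional extra, and your explicit use of $\mu(\mathrm{supp}(\mu))=1$ makes precise the step where the paper simply ``pulls the supremum outside the integral.''
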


\begin{proof}
First, we prove the bound on the 1-Wasserstein distance. By definition,
\[
W_1(\mu, \nu) = \inf_{\pi \in \Pi(\mu, \nu)} \int_{\mathbb{R}^d \times \mathbb{R}^d} \norm{x - y}_2 \, d\pi(x, y),
\]
where $\Pi(\mu, \nu)$ is the set of all joint probability measures $\pi(x,y)$ with marginals $\mu$ and $\nu$. 

Because $\nu = T_\#\mu$ is a deterministic push-forward, we can construct a specific deterministic coupling $\hat{\pi} \coloneqq (\operatorname{id}, T)_\# \mu$. For any measurable function $f(x, y)$, integrating with respect to $\hat{\pi}$ is equivalent to evaluating $f(x, T(x))$ over the base measure $\mu$. So, we can say that
\[
\int_{\mathbb{R}^d \times \mathbb{R}^d} f(x, y) \, d\hat{\pi}(x, y) = \int_{\mathbb{R}^d} f(x, T(x)) \, d\mu(x).
\]
By setting $f(x,y) = f(x)$, it is clear the first marginal is $\mu$. By setting $f(x,y) = g(y)$, the second marginal is $\int g(T(x)) d\mu(x) = \int g(y) d\nu(y) = \nu$. Thus, $\hat{\pi} \in \Pi(\mu, \nu)$ is a valid coupling. 

Now, $W_1$ is defined as an infimum, it must be less than or equal to the cost of our specific coupling $\hat{\pi}$. So, we can conclude that
\begin{align*}
W_1(\mu, \nu) &\le \int_{\mathbb{R}^d \times \mathbb{R}^d} \norm{x - y}_2 \, d\hat{\pi}(x, y) \\
&= \int_{\mathbb{R}^d} \norm{x - T(x)}_2 \, d\mu(x).
\end{align*}
Now we know that a function is bounded by its  supremum, so pulling the supremum outside the integral gives us
\begin{align*}
\int_{\mathbb{R}^d} \norm{x - T(x)}_2 \, d\mu(x) &\le \int_{\mathbb{R}^d} \left( \sup_{z \in \mathrm{supp}(\mu)} \norm{z - T(z)}_2 \right) d\mu(x) \\
&= \sup_{z \in \mathrm{supp}(\mu)} \norm{z - T(z)}_2 \int_{\mathbb{R}^d} 1 \, d\mu(x) \\
&= \sup_{z \in \mathrm{supp}(\mu)} \norm{z - T(z)}_2.
\end{align*}
This proves the right-hand inequality $W_1(\mu, \nu) \le \sup_{x} \norm{x - T(x)}_2$.

As we know that
\[
d_{BL}(\mu, \nu) \le W_1(\mu, \nu).
\]
Combining both results completes the proof.
\end{proof}
We now state a version of Taylor's theorem for decoders that is used in the proof of the main theorems.

\begin{lemma}[Taylor's Theorem for decoder part]
Let $\psi(\alpha;u)$ be twice continuously differentiable with respect to $\alpha \in \mathbb{R}$. For any base point $\alpha^*$, step $\Delta \alpha$, and scalar $\lambda \in [0,1]$, the exact second-order expansion is
\[
\psi(\alpha^* + \lambda\Delta\alpha; u)
= \psi(\alpha^*; u)
+ \lambda\,\nabla_\alpha \psi(\alpha^*; u)\, \Delta\alpha
+ \lambda^2 \int_0^1 (1-s)\,\nabla_\alpha^2 \psi\big(\alpha^* + s \lambda \Delta\alpha; u\big)\, (\Delta\alpha)^2 \, ds.
\]
\end{lemma}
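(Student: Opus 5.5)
This is Taylor's theorem with integral remainder, applied to the scalar-variable curve $g(\lambda') := \psi(\alpha^* + \lambda'\Delta\alpha; u)$ and then reparametrized.

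Fix $u$, $\alpha^*$, $\Delta\alpha$ and $\lambda\in[0,1]$. Define the vector-valued function $h:[0,1]\to\mathbb{R}^d$ by $h(s) := \psi(\alpha^* + s\lambda\Delta\alpha; u)$. By Assumption~\ref{it:bounded_activation}, $\psi(\alpha;u)=u_1\sigma(u_2\alpha)$ is $C^2$ in $\alpha$, so $h$ is $C^2$. The chain rule gives
\[
h'(s)=\lambda\Delta\alpha\,\nabla_\alpha\psi(\alpha^*+s\lambda\Delta\alpha;u), \qquad
h''(s)=\lambda^2(\Delta\alpha)^2\,\nabla^2_\alpha\psi(\alpha^*+s\lambda\Delta\alpha;u).
\]

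The key step is the identity $h(1)=h(0)+h'(0)+\int_0^1(1-s)h''(s)\,ds$. I would prove it componentwise by integrating by parts:
\[
\int_0^1(1-s)h''(s)\,ds=\big[(1-s)h'(s)\big]_0^1+\int_0^1 h'(s)\,ds=-h'(0)+h(1)-h(0).
\]
This uses only the fundamental theorem of calculus, which is valid because $h'$ is $C^1$. Substituting the expressions for $h(0)$, $h(1)$, $h'(0)$ and $h''(s)$ gives the stated formula. The factor $(\Delta\alpha)^2$ can be pulled inside the integral because it is constant in $s$.

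There is no real obstacle. The only points to check are that differentiating with respect to $s$ produces exactly the factors $\lambda\Delta\alpha$ and $\lambda^2(\Delta\alpha)^2$, and that the vector-valued case follows by applying the scalar identity to each coordinate of $\psi$.
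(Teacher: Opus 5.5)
Your proposal is correct and follows essentially the same route as the paper: the paper also sets $g(s)=\psi(\alpha^*+s\lambda\Delta\alpha;u)$, computes $g'$ and $g''$ by the chain rule, and integrates by parts to obtain $g(1)=g(0)+g'(0)+\int_0^1(1-s)g''(s)\,ds$ before substituting back. Your explicit componentwise treatment of the vector-valued case is a small but welcome clarification that the paper leaves implicit.
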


\begin{proof}
 Let $v = \lambda \Delta \alpha$. Now, we define the function $g: [0, 1] \to \mathbb{R}^d$ as
\[
g(s) = \psi(\alpha^* + s v. u).
\]
Now by the chain rule, we can say that the derivatives of $g(s)$ with respect to $s$ are $g'(s) = \nabla_\alpha \psi(\alpha^* + s v; u) v$ and $g''(s) = \nabla_\alpha^2 \psi(\alpha^* + s v; u) v^2$. 

So, by the \textit{fundamental theorem of calculus} we can conclude that
\[
g(1) - g(0) = \int_0^1 g'(s) \, ds.
\]
We apply integration by parts to the right-hand side. After applying integration by parts to the right-hand side we get
\begin{align*}
\int_0^1 g'(s) \, ds 
&= \Big[ -g'(s)(1 - s) \Big]_0^1 - \int_0^1 \big(-(1 - s)\big) g''(s) \, ds \\
&= g'(0) + \int_0^1 (1 - s) g''(s) \, ds.
\end{align*}
So,
\[g(1) = g(0) + g'(0) + \int_0^1 (1 - s) g''(s) \, ds.\]
Now, by substituting the definitions of $g(0), g(1), g'(0)$ and $g''(s)$ back into this equation and factoring out $\lambda$ and $\lambda^2$, we exactly yield the stated lemma.
\end{proof}

We reproduce the Azuma-H\"{o}effding inequality (see Lemma A.1 in \cite{Mei2019MeanfieldTO}).

\begin{lemma}[Azuma-H\"{o}effding bound] \label{lem:azuma-hoeffding}
Let $(X_k)_{k \ge 0}$ be a martingale taking values in $\mathbb{R}^d$ with respect to the filtration $(\mathcal{F}_k)_{k \ge 0}$, with $X_0 = 0$. Assume that the following holds almost surely for all $k \ge 1$
\begin{equation}
\mathbb{E}\!\left[ e^{\langle \lambda, X_k - X_{k-1} \rangle} \,\middle|\, \mathcal{F}_{k-1} \right]
\le e^{L^2 \|\lambda\|^2 / 2}.
\end{equation}
Then we have
\begin{equation}
\mathbb{P}\!\left( \max_{k \le n} \|X_k\|_2 \ge 2L\sqrt{n}(\sqrt{d} + t) \right)
\le e^{-t^2}.
\end{equation}
\end{lemma}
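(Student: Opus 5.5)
We plan to prove the maximal inequality by a Gaussian-smoothing (Laplace-transform) argument combined with Doob's submartingale inequality, which avoids an $\varepsilon$-net over the sphere and yields exactly the constant $2L\sqrt{n}(\sqrt d+t)$. Fix $\lambda>0$ and let $g\sim N(0,I_d)$ be independent of everything else. Define
\[
Z_k := \mathbb{E}_g\!\left[e^{\lambda\langle g, X_k\rangle}\right] = e^{\lambda^2\|X_k\|_2^2/2},
\]
where the equality is the Gaussian moment generating function. For each fixed $g$, the process $e^{\lambda\langle g,X_k\rangle}$ is a nonnegative submartingale, by conditional Jensen applied to the convex map $y\mapsto e^{y}$ and the martingale $\langle g, X_k\rangle$. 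Averaging over $g$ (Fubini, all terms nonnegative) then shows that $(Z_k)$ is a nonnegative submartingale with respect to $(\mathcal F_k)$.

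Next we bound $\mathbb{E}[Z_n]$. For fixed $g$ we apply the hypothesis with the vector $\lambda g$ and peel off increments one at a time by the tower property. This gives $\mathbb{E}\,e^{\lambda\langle g,X_n\rangle}\le e^{nL^2\lambda^2\|g\|_2^2/2}$. Averaging over $g$,
\[
\mathbb{E}[Z_n]\le \mathbb{E}_g\, e^{nL^2\lambda^2\|g\|_2^2/2} = (1-nL^2\lambda^2)^{-d/2},
\qquad\text{provided } nL^2\lambda^2<1 .
\]
We choose $\lambda^2 = 1/(2nL^2)$, so that $\mathbb{E}[Z_n]\le 2^{d/2}$.

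Doob's maximal inequality for the nonnegative submartingale $Z_k$ then gives, for any $a>0$,
\[
\mathbb{P}\Big(\max_{k\le n}\|X_k\|_2\ge a\Big)=\mathbb{P}\Big(\max_{k\le n} Z_k\ge e^{\lambda^2a^2/2}\Big)\le 2^{d/2}\,e^{-a^2/(4nL^2)} .
\]
Substituting $a=2L\sqrt n(\sqrt d+t)$ gives $a^2/(4nL^2)=(\sqrt d+t)^2\ge d+t^2$ for $t\ge 0$. The bound therefore becomes at most $2^{d/2}e^{-d}e^{-t^2}\le e^{-t^2}$, since $\tfrac12\log 2<1$.

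The only delicate point is the submartingale property of the Gaussian mixture $Z_k$, in particular its integrability and the interchange of $\mathbb{E}_g$ with conditional expectations. We will handle this by Tonelli, using that the $\mathbb{E}[Z_n]$ computation above shows all $Z_k$ ($k\le n$) are integrable once $nL^2\lambda^2<1$. The remaining steps are routine. The $\varepsilon$-net alternative (a union bound over a $1/2$-net of size $5^d$) would not reproduce the stated constant, which is why we prefer the smoothing route.
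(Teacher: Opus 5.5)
The paper does not prove this lemma. It quotes it from Lemma~A.1 of \cite{Mei2019MeanfieldTO} and uses it as a black box in the martingale step of \Cref{lem:trajectory}, so there is no in-paper argument to compare your route against.

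Your Gaussian-smoothing proof is complete and correct, and it recovers the stated constant exactly:
\begin{itemize}
\item For each fixed $g$, the tower property gives $\mathbb{E}\,e^{\lambda\langle g,X_n\rangle}\le e^{nL^2\lambda^2\|g\|_2^2/2}$. Null sets depending on $g$ are harmless, because you only take unconditional expectations before applying Tonelli.
\item The choice $\lambda^2=1/(2nL^2)$ gives $\mathbb{E}Z_n\le 2^{d/2}$.
\item Doob's inequality together with $(\sqrt d+t)^2\ge d+t^2$ yields the bound $2^{d/2}e^{-d}e^{-t^2}\le e^{-t^2}$.
\end{itemize}
Your remark that a $1/2$-net union bound would only give $5^d e^{-(\sqrt d+t)^2/2}$, and so miss the stated form, is also accurate.

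Two small points. First, the step you flag as delicate is immediate. Write $Z_k=\Phi(X_k)$ with $\Phi(x)=e^{\lambda^2\|x\|_2^2/2}$, which is convex. Conditional Jensen for the $\mathbb{R}^d$-valued martingale $X_k$ then gives $\mathbb{E}[Z_{k+1}\mid\mathcal{F}_k]\ge Z_k$ directly, with no averaging over $g$. Integrability of $Z_k$ for $k\le n$ follows from the same moment computation you already do.

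Second, your choice of $\lambda$ silently requires $L>0$ and $n\ge 1$. This is a defect of the statement, not of your argument. For $L=0$ or $n=0$ the threshold $2L\sqrt{n}(\sqrt d+t)$ is $0$, so the probability is $1$ and the claimed bound fails. The statement likewise needs $t\ge 0$, which you correctly impose.
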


Although $d_{BL}$ and $W_p$ metric are key part of the analysis, to rigorously measure the distance between the empirical and population measures , we would like to evaluate the measures strictly through the lens of the network's predictive functions. We formalize the general integral probability metric induced by such a parametric family.

Let $\mathcal{F} = \{ f_\theta \}_{\theta \in \Theta}$ be a family of test functions mapping a domain $\Omega$ to $\mathbb{R}^p$, indexed by a parameter set $\Theta$. For any two probability measures $\mu$ and $\nu$ defined on $\Omega$, the \textbf{parametric distance} $d_{\mathcal{F}}$ between $\mu$ and $\nu$ is defined as the supremum of the Euclidean distance between their expectations over all functions in the class $\mathcal{F}$
$$
d_{\mathcal{F}}(\mu, \nu) = \sup_{f \in \mathcal{F}} \norm{ \int_\Omega f(\omega)\, d\mu(\omega) - \int_\Omega f(\omega)\, d\nu(\omega) }_2
$$
For scalar-valued function classes ($p=1$), the Euclidean norm simplifies to the standard absolute value. We now prove a result from empirical process theory that bounds the statistical concentration of the specific parametric functions evaluated by our network. 

\begin{lemma}[Parametric Concentration of Empirical Measures] \label{lem:empirical_concentration}
Let $\mu$ be a probability measure supported on a compact parameter space $\Omega \subset \mathbb{R}^D$. Let $X_1, \dots, X_N \sim \mu$ be i.i.d., and define the empirical measure $\mu_N = \frac{1}{N} \sum_{i=1}^N \delta_{X_i}$. 
Let $\mathcal{F} = \{ f_\theta : \Omega \to \mathbb{R}^p \}_{\theta \in \Theta}$ be a family of test functions indexed by a compact set $\Theta \subset \mathbb{R}^d$ (e.g., the data domain) with diameter $R$. 
Assume that for all $\theta \in \Theta$, $\sup_{w \in \Omega} \norm{f_\theta(w)}_2 \le M$, and the map $\theta \mapsto f_\theta(w)$ is $L$-Lipschitz for all $w \in \Omega$ (i.e., $\norm{f_\theta(w)-f_{\theta'}(w)}_2 \le L\norm{\theta-\theta'}_2$).

Then, for any $\delta \in (0,1)$, with probability at least $1 - \delta$,
$$
d_{\mathcal{F}}(\mu, \mu_N) \le \frac{1}{\sqrt{N}} \left( C + M \sqrt{d \log N} + M\sqrt{2 \log(1/\delta)} \right),
$$
where $C := 2L + M\sqrt{2d \log(3R)} + M\sqrt{2 \log 2}$.
\end{lemma}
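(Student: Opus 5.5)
The plan is a standard covering-number argument combined with bounded-differences concentration for a fixed test function. A union bound over a finite net of $\Theta$ gives a bound at the net points, and the Lipschitz dependence on $\theta$ transfers it from the net to all of $\Theta$. Throughout, write $Z_\theta := \norm{\int f_\theta\, d\mu - \int f_\theta\, d\mu_N}_2$, so that $d_{\mathcal F}(\mu,\mu_N)=\sup_{\theta\in\Theta} Z_\theta$.

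\emph{Step 1 (fixed $\theta$).} Fix $\theta\in\Theta$ and view $Z_\theta$ as a function of $(X_1,\dots,X_N)$.
\begin{itemize}
\item Replacing a single $X_i$ changes $\frac1N\sum_i f_\theta(X_i)$ by at most $2M/N$ in Euclidean norm, by the uniform bound $\norm{f_\theta}_2\le M$. By the triangle inequality $Z_\theta$ therefore has bounded differences $2M/N$.
\item McDiarmid's inequality then gives
\[
\mathbb P\big(Z_\theta \ge \mathbb E Z_\theta + t\big)\le \exp\!\big(-Nt^2/(2M^2)\big).
\]
\item The mean is controlled by Jensen and independence of the centered summands:
\[
\mathbb E Z_\theta \le \big(\mathbb E Z_\theta^2\big)^{1/2} = \Big(\tfrac1N \mathbb E\norm{f_\theta(X_1)-\textstyle\int f_\theta\,d\mu}_2^2\Big)^{1/2}\le M/\sqrt N.
\]
\end{itemize}
This is the vector-valued analogue of Hoeffding. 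Using McDiarmid on the norm avoids paying an extra factor of $\sqrt p$ for the output dimension.

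\emph{Step 2 (net and union bound).} Take an $r$-net $\Theta_r\subset\Theta$ with $r=1/\sqrt N$. Since $\Theta\subset\mathbb R^d$ has diameter $R$, a volumetric argument gives $|\Theta_r|\le (3R/r)^d=(3R\sqrt N)^d$. Here I assume $3R\ge 1$; otherwise replace $R$ by $\max(R,1/3)$, which only affects the constant.
\begin{itemize}
\item Set $t = M\sqrt{2\log(|\Theta_r|/\delta)/N}$ in Step 1 and take a union bound over $\Theta_r$.
\item With probability at least $1-\delta$, every $\theta'\in\Theta_r$ then satisfies
\[
Z_{\theta'}\le \frac{M}{\sqrt N}+\frac{M}{\sqrt N}\sqrt{2d\log(3R)+d\log N+2\log(1/\delta)}.
\]
\item Subadditivity of the square root, $\sqrt{a+b+c}\le\sqrt a+\sqrt b+\sqrt c$, splits the second term into $M\sqrt{2d\log(3R)}+M\sqrt{d\log N}+M\sqrt{2\log(1/\delta)}$, each divided by $\sqrt N$.
\item The leading $M/\sqrt N$ is dominated by $M\sqrt{2\log 2}/\sqrt N$, since $\sqrt{2\log 2}>1$. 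This is where the $M\sqrt{2\log 2}$ in $C$ comes from.
\end{itemize}

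\emph{Step 3 (off-net points).} For arbitrary $\theta\in\Theta$, pick $\theta'\in\Theta_r$ with $\norm{\theta-\theta'}_2\le r$. By the $L$-Lipschitz assumption in $\theta$, uniform over $w\in\Omega$, both $\int f_\theta\,d\mu$ and $\int f_\theta\,d\mu_N$ move by at most $Lr$ when $\theta$ is replaced by $\theta'$. Hence
\[
Z_\theta\le Z_{\theta'}+2Lr = Z_{\theta'}+2L/\sqrt N.
\]
Taking the supremum over $\theta$ and combining with Step 2 gives exactly $C=2L+M\sqrt{2d\log(3R)}+M\sqrt{2\log 2}$ together with the $M\sqrt{d\log N}$ and $M\sqrt{2\log(1/\delta)}$ terms.

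The only delicate points are bookkeeping: the covering-number bound and its dependence on $R$ (including the convention when $3R<1$), and ensuring the $\mathbb E Z_\theta$ term is absorbed into the stated constant. The concentration itself is routine once McDiarmid is applied to the norm.
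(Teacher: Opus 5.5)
Your proof is correct and reproduces the stated bound exactly. It shares the paper's skeleton: an $r$-net of $\Theta$ with $r=1/\sqrt{N}$, a union bound over the net, and the $2L/\sqrt{N}$ Lipschitz transfer to off-net points. Where you differ is in how the tail for a fixed $\theta$ is obtained.

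The paper quotes a ``vector Hoeffding'' inequality $\mathbb{P}(Z_{\theta'}\ge\tau)\le 2\exp(-N\tau^2/(2M^2))$ and solves $2(3R/r)^d e^{-N\tau^2/(2M^2)}=\delta$. In its proof, the term $M\sqrt{2\log 2}$ is therefore the price of the prefactor $2$. You instead apply McDiarmid to the norm itself, with bounded differences $2M/N$, and bound $\mathbb{E} Z_\theta\le M/\sqrt{N}$ through the second moment. In your proof the same term $M\sqrt{2\log 2}$ absorbs the mean.

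The paper's route is shorter, but the constant it quotes is not what the usual dimension-free Hilbert-space (Pinelis-type) bound gives. That bound needs a bound on the centered summands, which here is only $2M$. It then yields $2\exp(-N\tau^2/(8M^2))$, which doubles the coefficients in front of the square-root terms. Your argument uses only McDiarmid and a variance computation, and it certifies $C$ exactly as written.

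Your caveat that $|\Theta_r|\le(3R/r)^d$ requires $3R\ge 1$ is one the paper passes over. The condition is needed anyway for $\sqrt{2d\log(3R)}$ to be defined. It is harmless, since $R$ may be replaced by any larger bound on the diameter, and in the application $R$ is a multiple of $M>1$.
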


\begin{proof}
Let us first note that the parametric distance $d_{\mathcal{F}}$ for $\mu$ and $\mu_N$ reduces to
$$
d_{\mathcal{F}}(\mu, \mu_N) = \sup_{\theta \in \Theta} \norm{ \mathbb{E}_\mu[f_\theta] - \frac{1}{N}\sum_{i=1}^N f_\theta(X_i) }_2.
$$
We aim to bound the uniform deviation over the parametric class $\mathcal{F}$ for which we need to construct a cover of the finite-dimensional index set $\Theta \subset \mathbb{R}^d$. Let $\Theta_\epsilon$ be an $\epsilon$-cover of $\Theta$ in the Euclidean norm. Standard volumetric arguments guarantee that the covering number is bounded by $|\Theta_\epsilon| \le (3R/\epsilon)^d$. 
For any $\theta \in \Theta$, there exists a $\theta' \in \Theta_\epsilon$ such that $\norm{\theta - \theta'}_2 \le \epsilon$. By the $L$-Lipschitz property of the test functions with respect to the index $\theta$, we can decompose the error using the triangle inequality
\begin{align*}
    \norm{ \mathbb{E}_\mu[f_\theta] - \frac{1}{N}\sum_{i=1}^N f_\theta(X_i) }_2 
    &\le \norm{ \mathbb{E}_\mu[f_\theta] - \mathbb{E}_\mu[f_{\theta'}] }_2 + \norm{ \mathbb{E}_\mu[f_{\theta'}] - \frac{1}{N}\sum_{i=1}^N f_{\theta'}(X_i) }_2\\
    &\quad + \norm{ \frac{1}{N}\sum_{i=1}^N f_{\theta'}(X_i) - \frac{1}{N}\sum_{i=1}^N f_\theta(X_i) }_2 \\
    &\le 2L\epsilon + \norm{ \mathbb{E}_\mu[f_{\theta'}] - \frac{1}{N}\sum_{i=1}^N f_{\theta'}(X_i) }_2.
\end{align*}
Taking the supremum over all $\theta \in \Theta$, the worst-case continuous error is bounded by the worst-case error on the finite cover
$$
\sup_{\theta \in \Theta} \norm{ \mathbb{E}_\mu[f_\theta] - \frac{1}{N}\sum_{i=1}^N f_\theta(X_i) }_2 \le 2L\epsilon + \max_{\theta' \in \Theta_\epsilon} \norm{ \mathbb{E}_\mu[f_{\theta'}] - \frac{1}{N}\sum_{i=1}^N f_{\theta'}(X_i) }_2.
$$

For a fixed $\theta' \in \Theta_\epsilon$, the random vectors $f_{\theta'}(X_i) \in \mathbb{R}^p$ are independent, identically distributed, and bounded in Euclidean norm by $M$. Applying the vector Hoeffding's inequality for this specific $\theta'$
$$
\mathbb{P}\left( \norm{ \mathbb{E}_\mu[f_{\theta'}] - \frac{1}{N}\sum_{i=1}^N f_{\theta'}(X_i) }_2 \ge \tau \right) \le 2 \exp\left( - \frac{N\tau^2}{2M^2} \right).
$$

We require this to hold simultaneously for all elements in the cover. Applying a union bound over $\Theta_\epsilon$, the probability that the maximum fluctuation exceeds $\tau$ is
$$
\mathbb{P}\left( \max_{\theta' \in \Theta_\epsilon} \norm{ \mathbb{E}_\mu[f_{\theta'}] - \frac{1}{N}\sum_{i=1}^N f_{\theta'}(X_i) }_2 \ge \tau \right) \le 2 \left( \frac{3R}{\epsilon} \right)^d \exp\left( - \frac{N\tau^2}{2M^2} \right).
$$
Setting the right-hand side equal to $\delta$ and solving for $\tau$ gives us
$$
\tau = M \sqrt{\frac{2}{N}} \sqrt{ d \log\left(\frac{3R}{\epsilon}\right) + \log\left(\frac{2}{\delta}\right) }.
$$
The total uniform error is bounded by $2L\epsilon + \tau$. To optimally balance the deterministic approximation error and the stochastic error, we choose the covering radius $\epsilon = 1/\sqrt{N}$. 
Substituting this $\epsilon$, we observe that $\log(3R/\epsilon) = \log(3R) + \frac{1}{2}\log N$. 

Using the algebraic subadditivity property $\sqrt{a+b} \le \sqrt{a} + \sqrt{b}$ repeatedly, we can expand $\tau$
\begin{align*}
\tau &\le M \sqrt{\frac{2}{N}} \left( \sqrt{\frac{d}{2} \log N} + \sqrt{d \log(3R)} + \sqrt{\log 2} + \sqrt{\log(1/\delta)} \right) \\
&= \frac{1}{\sqrt{N}} \left( M \sqrt{d \log N} + M\sqrt{2d \log(3R)} + M\sqrt{2 \log 2} + M\sqrt{2 \log(1/\delta)} \right).
\end{align*}
Adding the $2L\epsilon = \frac{2L}{\sqrt{N}}$ term, we consolidate the static components to explicitly define the absolute constant $C$
$$
C := 2L + M\sqrt{2d \log(3R)} + M\sqrt{2 \log 2}.
$$
Thus, the total bound isolates exactly as
$$
d_{\mathcal{F}}(\mu, \mu_N) \le \frac{1}{\sqrt{N}} \left( C + M \sqrt{d \log N} + M\sqrt{2 \log(1/\delta)} \right).
$$
\end{proof}

\begin{lemma}[Relationship between Parametric and Bounded Lipschitz Distances] \label{lem:parametric_vs_bl}
Let $\mathcal{F}$ be a family of test functions defined on a domain $\Omega$. Assume that there exists a constant $M > 0$ such that every function $f \in \mathcal{F}$ is uniformly bounded by $M$ ($\sup_{x \in \Omega} \norm{f(x)}_2 \le M$) and is $M$-Lipschitz continuous ($\norm{f(x) - f(y)}_2 \le M \norm{x - y}_2$). 

Then, for any two probability measures $\mu$ and $\nu$ on $\Omega$, the parametric distance is bounded by the scaled Bounded Lipschitz distance.
\[
d_{\mathcal{F}}(\mu, \nu) \le M d_{\mathrm{BL}}(\mu, \nu)
\]
\end{lemma}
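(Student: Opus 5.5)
The plan is to reduce the parametric distance to a single test function and then rescale it with Lemma~\ref{lem:BL_scaling}. For scalar-valued classes ($p=1$) this is immediate. Fix $f \in \mathcal{F}$. Since $\|f\|_\infty \le M$ and $\|f\|_{\mathrm{Lip}} \le M$, Lemma~\ref{lem:BL_scaling} with $A = L = M$ gives
\[
\Bigl|\int f\,d(\mu-\nu)\Bigr| \le M\, d_{\mathrm{BL}}(\mu,\nu).
\]
Taking the supremum over $f \in \mathcal{F}$ yields the claim.

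For vector-valued $f:\Omega\to\mathbb{R}^p$, I would scalarize by duality. Set $v := \int f\,d(\mu-\nu) \in \mathbb{R}^p$. Then $\|v\|_2 = \sup_{\|e\|_2 \le 1} \langle e, v\rangle$, and if $v \neq 0$ the supremum is attained at $e = v/\|v\|_2$. For a fixed unit vector $e$, consider the scalar function $g_e(x) := \langle e, f(x)\rangle$. By Cauchy--Schwarz, $|g_e(x)| \le \|f(x)\|_2 \le M$ and $|g_e(x) - g_e(y)| \le \|f(x) - f(y)\|_2 \le M\|x-y\|_2$. By linearity of the integral, $\langle e, v\rangle = \int g_e\,d(\mu-\nu)$, so the scalar case applied to $g_e$ gives $\langle e, v\rangle \le M\,d_{\mathrm{BL}}(\mu,\nu)$. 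Taking the supremum over $e$ and then over $f \in \mathcal{F}$ gives $d_{\mathcal{F}}(\mu,\nu) \le M\,d_{\mathrm{BL}}(\mu,\nu)$.

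There is no real obstacle here. The only point that needs care is the vector-valued case: one must \emph{not} bound $\|v\|_2$ coordinatewise, since that would introduce a spurious factor $\sqrt{p}$. Projecting onto the direction of $v$ avoids this, because the projected function inherits both the bound $M$ and the Lipschitz constant $M$ from the Euclidean norm through Cauchy--Schwarz. I would also note that the integrals are well defined, since $f$ is bounded and continuous and hence measurable and integrable against any probability measure.
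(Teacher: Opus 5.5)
Your proof is correct and follows essentially the same route as the paper. You scalarize via $\|v\|_2=\sup_{\|e\|_2\le 1}\langle e,v\rangle$, observe by Cauchy--Schwarz that the projection $\langle e,f(\cdot)\rangle$ is $M$-bounded and $M$-Lipschitz, and rescale by $M$ to get an admissible $d_{\mathrm{BL}}$ test function; the only cosmetic difference is that you invoke Lemma~\ref{lem:BL_scaling} for the rescaling, whereas the paper normalizes $h/M$ directly.
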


\begin{proof}
We begin by evaluating the integral difference for a single, fixed function $f \in \mathcal{F}$. Because $f$ maps to $\mathbb{R}^p$, its integral difference is a vector. The Euclidean norm of any vector can be expressed as the supremum of its inner product with all unit vectors.
\begin{align*}
    \norm{ \int f d\mu - \int f d\nu }_2 & = \sup_{\norm{v}_2 \le 1} \left\langle v, \int f d\mu - \int f d\nu \right\rangle \\& = \sup_{\norm{v}_2 \le 1} \left( \int \langle v, f(x) \rangle d\mu(x) - \int \langle v, f(x) \rangle d\nu(x) \right)
\end{align*}
Fix an arbitrary unit vector $v \in \mathbb{R}^p$ such that $\norm{v}_2 \le 1$, and define the scalar projection function $h(x) = \langle v, f(x) \rangle$. We evaluate the boundedness and Lipschitz properties of $h$. By the Cauchy-Schwarz inequality and our assumptions on $\mathcal{F}$, the function $h$ is bounded.
\[
|h(x)| \le \norm{v}_2 \norm{f(x)}_2 \le 1 \cdot M = M
\]
Similarly, $h$ inherits the Lipschitz continuity of $f$.
\[
|h(x) - h(y)| = |\langle v, f(x) - f(y) \rangle| \le \norm{v}_2 \norm{f(x) - f(y)}_2 \le M \norm{x - y}_2
\]
We now define the normalized scalar function $g(x) = h(x) / M$. Because $h$ is $M$-bounded and $M$-Lipschitz, $g$ is bounded by $1$ ($\|g\|_\infty \le 1$) and is $1$-Lipschitz ($\|g\|_{\mathrm{Lip}} \le 1$). This means $g$ perfectly satisfies the admissibility conditions for the definition of the Bounded Lipschitz metric. Therefore, the integral difference over $g$ is bounded by $d_{\mathrm{BL}}$.
\[
\int g(x) d\mu(x) - \int g(x) d\nu(x) \le d_{\mathrm{BL}}(\mu, \nu)
\]
Substituting $g(x) = \langle v, f(x) \rangle / M$ back into the inequality and multiplying both sides by $M$ yields the bound for our projection.
\[
\int \langle v, f(x) \rangle d\mu(x) - \int \langle v, f(x) \rangle d\nu(x) \le M d_{\mathrm{BL}}(\mu, \nu)
\]
Because this inequality holds for every unit vector $v$, it must hold for the supremum over all unit vectors, which implies that $\norm{\int f d\mu - \int f d\nu}_2 \le M d_{\mathrm{BL}}(\mu, \nu)$. Finally, taking the supremum over all functions $f \in \mathcal{F}$ yields the target result.
\[
\sup_{f \in \mathcal{F}} \norm{ \int f d\mu - \int f d\nu }_2 \le M d_{\mathrm{BL}}(\mu, \nu)
\]
\end{proof}

\section{Proofs of the main statements} 
\subsection{Technical Lemma}\label{subsec:technical_lemma}
We state and prove a few of the technical lemmas that are specific to our setting of bottleneck AE which are later used in the proof of the main theorems.
\begin{lemma}[Uniform Boundedness and Lipschitz Continuity] \label{lem:uniform_bounds}
Under Assumptions \ref{it:bounded_enocder_decoder-domain}, \ref{it:bounded_data-domain}, and \ref{it:bounded_activation}, there exists a single absolute constant $M > 1$ such that:
\begin{enumerate}[label=(\roman*)]
    \item The data is bounded i.e. $\norm{x}_2 \le M$ almost surely for $x \sim P$.
    \item The mean-field bottleneck feature is bounded i.e. $|\alpha|\leq M$ almost surely for $x \sim P$
    \item The network activations are uniformly bounded: $|\phi(x;w)| \le M$ and $\|\psi(\alpha;u)\|_2 \le M$.
    \item The spatial derivatives of the decoder up to the second order are bounded: $\|\nabla_\alpha\psi(\alpha;u)\|_{\mathrm{op}} \le M$ and $\|\nabla_\alpha^2 \psi(\alpha;u)\|_{\mathrm{op}} \le M$.
    \item The encoder and decoder particles are $M$-Lipschitz continuous with respect to their parameters:
    \[
    |\phi(x;w_1) - \phi(x';w_2)| \le M \left(\norm{w_1 - w_2}_2+\norm{x-x'}_{2}\right)\]   and  
    \[\norm{\psi(\alpha;u_1) - \psi(\alpha';u_2)}_2 \le M \left(\norm{u_1 - u_2}_2+\norm{\alpha-\alpha'}_{2}\right).
    \]
    \item The derivatives $\nabla_{u}\psi(\alpha;u)$ and $\nabla_w\phi(x;w)$ are $M$-Lipschitz continuous with respect to their parameters.
     
\end{enumerate}
\end{lemma}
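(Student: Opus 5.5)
The plan is to prove the six items in order, combining compactness (Assumptions~\ref{it:bounded_enocder_decoder-domain} and \ref{it:bounded_data-domain}) with the continuity of $\sigma,\sigma',\sigma''$ (Assumption~\ref{it:bounded_activation}). At the end we take $M$ to be the maximum of finitely many constants, together with $1$.

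\textbf{Bounded quantities.} Let $R_x$ bound $\|x\|_2$ on the support of $P$, and let $R_\Omega$ bound $\|w\|_2$ and $\|u\|_2$ on $\Omega_W$ and $\Omega_U$; this gives (i). For an encoder particle, $|\langle w_1,x\rangle|\le R_\Omega R_x=:r_1$, so $\sigma(\langle w_1,x\rangle)$ ranges over $\sigma([-r_1,r_1])$. By the Extreme Value Theorem $\sup_{[-r_1,r_1]}|\sigma|<\infty$, and therefore $|\phi(x;w)|\le R_\Omega\sup_{[-r_1,r_1]}|\sigma|$.

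Item (ii) follows immediately: $\alpha(x)$ is an average of $\phi(x;w)$ under a probability measure supported in $\Omega_W$, so $|\alpha|$ obeys the same bound. This also holds for empirical bottlenecks. Hence the bottleneck variable lives in the compact interval $[-A,A]$ with $A:=R_\Omega\sup_{[-r_1,r_1]}|\sigma|$.

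The decoder argument satisfies $|u_2\alpha|\le R_\Omega A=:r_2$. This gives $\|\psi\|_2\le R_\Omega\sup_{[-r_2,r_2]}|\sigma|$, completing (iii). For (iv), compute $\nabla_\alpha\psi=u_1u_2\sigma'(u_2\alpha)$ and $\nabla_\alpha^2\psi=u_1u_2^2\sigma''(u_2\alpha)$. Both are bounded by powers of $R_\Omega$ times $\sup_{[-r_2,r_2]}|\sigma'|$ and $\sup_{[-r_2,r_2]}|\sigma''|$, which are finite since $\sigma\in C^2$.

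\textbf{Lipschitz properties.} For (v) and (vi), the maps $(x,w)\mapsto\phi$ and $(\alpha,u)\mapsto\psi$ are $C^2$ on the product of convex compact sets $\mathcal{X}\times\Omega_W$ and $[-A,A]\times\Omega_U$. Here we take the convex hull of the data support, which is still compact. On a convex set the Mean Value Theorem (integral form along segments) shows that a map is Lipschitz with constant equal to the sup of its gradient norm. The gradients are
\[
\nabla_{w_1}\phi=w_2\sigma'(\langle w_1,x\rangle)x,\qquad \partial_{w_2}\phi=\sigma(\langle w_1,x\rangle),\qquad \nabla_x\phi=w_2\sigma'(\cdot)w_1 .
\]
Each is bounded by the compactness constants and $\sup|\sigma|,\sup|\sigma'|$. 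For joint perturbations we use $\|(\Delta w,\Delta x)\|\le\|\Delta w\|+\|\Delta x\|$, which gives (v) for $\phi$; the same argument works for $\psi$ with $\nabla_{u_1}\psi=\sigma(u_2\alpha)I$, $\partial_{u_2}\psi=u_1\alpha\sigma'(u_2\alpha)$ and $\partial_\alpha\psi$ as above. For (vi), the second derivatives of $\phi$ and $\psi$ in their parameters involve only $\sigma'$, $\sigma''$ and bounded polynomial factors. These are again bounded on the compact domains, so $\nabla_u\psi$ and $\nabla_w\phi$ are Lipschitz by the same MVT argument.

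\textbf{Main obstacle.} The only subtle point is making sure the decoder is evaluated on a compact set. This requires item (ii) to be proved before the decoder bounds, with the interval $[-A,A]$ chosen uniformly over all probability measures (including empirical ones) on $\Omega_W$. The second subtle point is that the Lipschitz claims need convexity of the domains, which is why we work on convex hulls. Once this is arranged, the rest is routine, and we set $M$ to be the maximum of $1$ and all the finitely many constants above.
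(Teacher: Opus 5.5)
Your proposal is correct and takes essentially the same route as the paper. Both use the Extreme Value Theorem on the compact domains and bound $\alpha$ first, so that the decoder is evaluated on a compact interval $[-A,A]$. Both then write out the first and second derivatives of $\phi$ and $\psi$ and use the Mean Value Theorem to turn gradient and Hessian bounds into the Lipschitz claims (v) and (vi), with $M$ the maximum of finitely many constants. Two points of care in your version go slightly beyond the paper: passing to the convex hull of the data support, and normalizing $M$ against a constant. To get the strict inequality $M>1$ as stated, take the maximum with $2$ rather than $1$.
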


\begin{proof}
The proof relies on the Extreme Value Theorem, which states that any continuous function on a compact (closed and bounded) set achieves a finite maximum.

To give bounds on the data and parameter (i), we first note that by Assumption \ref{it:bounded_data-domain}, the support of the data distribution $P$ is bounded. Without loss of generality, we can take the support of $P$ to be compact (e.g., by taking its closure). Thus, there exists a constant $M_x$ such that $\|x\|_2 \le M_x$. 
Similarly, by Assumption \ref{it:bounded_enocder_decoder-domain}, $\Omega_W$ and $\Omega_U$ are compact. Hence, the parameter vectors $w = (w_1, w_2)$ and $u = (u_1, u_2)$ are bounded. There exist constants $M_{w1}, M_{w2}, M_{u1}, M_{u2}$ bounding the norms of these respective components.

To prove (ii) and (iii), recall that the encoder activation is $\phi(x; w) = w_2 \sigma(\langle w_1, x \rangle)$. Since $x \in \operatorname{supp}(P)$ and $w_1 \in \Omega_W$ are compact sets, their inner product $\langle w_1, x \rangle$ maps to a bounded closed interval in $\mathbb{R}$. Because $\sigma$ is continuous (Assumption \ref{it:bounded_activation}), $\sigma(\langle w_1, x \rangle)$ attains a finite maximum. Multiplied by the bounded scalar $w_2$, it follows that there exists $M_\phi > 0$ such that $|\phi(x; w)| \le M_\phi$.

Because the encoder output is bounded, the mean-field bottleneck feature $\alpha = \int \phi(x; w) \rho_W(dw)$ is also bounded, $|\alpha| \le \int M_\phi \rho_W(dw) = M_\phi$. Thus, the effective latent space $\mathcal{H} := [-M_\phi, M_\phi]$ is a compact set.

The decoder activation is $\psi(\alpha; u) = u_1 \sigma(u_2 \alpha)$. Since $\alpha \in \mathcal{H}$ and $u_2$ are bounded, the argument $u_2 \alpha$ is bounded. Again, by the continuity of $\sigma$, $\sigma(u_2\alpha)$ is bounded, and thus there exists $M_\psi > 0$ such that $\|\psi(\alpha; u)\|_2 \le M_\psi$.

Now we give the bound of the spatial derivatives (iv). By Assumption \ref{it:bounded_activation}, $\sigma$ is $C^2$, meaning $\sigma'$ and $\sigma''$ are continuous. Differentiating the decoder activation with respect to the bottleneck $\alpha$ yields,
$$\nabla_\alpha \psi(\alpha; u) = u_1 u_2 \sigma'(u_2 \alpha), \qquad \nabla_\alpha^2 \psi(\alpha; u) = u_1 u_2^2 \sigma''(u_2 \alpha).$$
Since the arguments $u_2 \alpha$ belong to a compact set, the continuous functions $\sigma'$ and $\sigma''$ achieve finite maximums on this domain. Combined with the bounds on $u_1$ and $u_2$, there exist finite constants $M_{\psi'}$ and $M_{\psi''}$ bounding the operator norms of these derivatives.

To show Lipschitz continuity (v), it suffices to show that the gradients with respect to the variables are uniformly bounded. 
For the encoder, the gradient with respect to $w$ consists of $\nabla_{w_1} \phi = w_2 x \sigma'(\langle w_1, x \rangle)$ and $\nabla_{w_2} \phi = \sigma(\langle w_1, x \rangle)$. Because $x, w_1, w_2$ are bounded and $\sigma, \sigma'$ are continuous on compact domains, these parameter gradients are bounded by some constant $L_W$. By the Mean Value Theorem, $\phi$ is $L_W$-Lipschitz with respect to $w$.
By identical logic, the gradients of $\psi(\alpha;u)$ with respect to $u_1$ and $u_2$ consist of $\sigma(u_2 \alpha)$ and $u_1 \alpha \sigma'(u_2 \alpha)$. These are bounded over the compact domain $\mathcal{H} \times \Omega_U$, yielding a Lipschitz constant $L_U$.

To complete the proof of (v), we must also establish Lipschitz continuity with respect to the spatial variables $x$ and $\alpha$. The spatial gradient of the encoder is $\nabla_x \phi(x;w) = w_2 \sigma'(\langle w_1, x \rangle) w_1$. Since $w_1, w_2$ are bounded and $\sigma'$ is continuous on a compact domain, this gradient is bounded by a constant $L_x$. For the decoder, we already established in (iii) that the operator norm of $\nabla_\alpha \psi(\alpha;u)$ is bounded by $M_{\psi'}$. By the Mean Value Theorem and the triangle inequality, the joint variations are bounded by the sum of the partial variations,
\begin{align*}
    |\phi(x;w_1) - \phi(x';w_2)| &\le L_W \|w_1 - w_2\|_2 + L_x \|x - x'\|_2 \\&\le \max\{L_W, L_x\} \left(\|w_1 - w_2\|_2 + \|x - x'\|_2\right).
\end{align*}
An identical argument holds for $\psi$ using $L_U$ and $M_{\psi'}$, fully satisfying condition (iv).

To prove (vi), we must show that the parameter gradients $\nabla_w \phi(x;w)$ and $\nabla_u \psi(\alpha;u)$ are themselves Lipschitz continuous with respect to their parameters. This is equivalent to showing that the Hessians $\nabla_w^2 \phi(x;w)$ and $\nabla_u^2 \psi(\alpha;u)$ are bounded. 
For the encoder, the second-order partial derivatives with respect to $w$ are,
$$\nabla_{w_1}^2 \phi = w_2 \sigma''(\langle w_1, x \rangle) x x^\top, \qquad \nabla_{w_1 w_2}^2 \phi = \sigma'(\langle w_1, x \rangle) x, \qquad \nabla_{w_2}^2 \phi = 0.$$
For the decoder, the second-order partial derivatives with respect to $u$ consist of,
$$\nabla_{u_1}^2 \psi = 0, \qquad \nabla_{u_1 u_2}^2 \psi = \alpha \sigma'(u_2 \alpha) I, \qquad \nabla_{u_2}^2 \psi = u_1 \alpha^2 \sigma''(u_2 \alpha).$$
By Assumption \ref{it:bounded_activation}, $\sigma$ is twice continuously differentiable ($C^2$), so both $\sigma'$ and $\sigma''$ are continuous. Because all inputs $x, w, \alpha,$ and $u$ belong to compact sets, every term in these Hessian matrices is a continuous function evaluated on a compact domain. Consequently, by the Extreme Value Theorem, the operator norms of these Hessians are uniformly bounded by finite constants $L_{\nabla W}$ and $L_{\nabla U}$. Thus, the parameter gradients are Lipschitz continuous.

We have established finite bounds $M_x, M_\phi, M_\psi, M_{\psi'}, M_{\psi''}, L_W, L_x, L_U, L_{\nabla W},$ and $L_{\nabla U}$. By simply defining $M$ as the maximum of all these constants,
$$M := \max \{ M_x, M_\phi, M_\psi, M_{\psi'}, M_{\psi''}, L_W, L_x, L_U, L_{\nabla W}, L_{\nabla U} \},$$
we guarantee that all conditions (i) through (vi) hold simultaneously under the single uniform constant $M$.
\end{proof}

\begin{lemma}[\textbf{Properties of the Mean-Field Risk}]\label{lem:bounded_properties}
Assume that Assumption \ref{it:bounded_enocder_decoder-domain}, \ref{it:bounded_data-domain}, and \ref{it:bounded_activation} hold.  Let $(\rho_U, \rho_W)$ be any probability measures over $\Omega_U \times \Omega_W$. $\alpha(x) = \int \phi(x; w) \rho_W(dw)$ is the feature representation. Then, there exist absolute constants $L_f, L_g, K_U, K_W > 0$ such that
\begin{enumerate}[label = (\roman*)]
    \item \textbf{Bounded Hessians:} The functions $f(\alpha) = \frac{1}{2}\norm{x - \int \psi(\alpha; u) \rho_U(du)}^2$ and $g(\alpha) = \int \norm{\psi(\alpha; u)}^2 \rho_U(du)$ have double derivatives (Hessians) uniformly bounded by $3M^2$ and $4M^2$, respectively.
    \item \textbf{Decoder bounded variance:} Let $K_U(x) := \int \norm{\psi(\alpha(x); u)}^2 \rho_U(du)$. Then $\E_x \left[ K_U(x) \right] \le M^2 $.
    \item \textbf{Encoder bounded variance:} Let $K_W(x) := \int \phi(x; w)^2 \rho_W(dw)$. Then $\E_x \left[ K_W(x) \right] \le M^2$.
\end{enumerate}
\end{lemma}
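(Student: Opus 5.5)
The statement is Lemma~\ref{lem:bounded_properties}: three elementary bounds, each obtained by differentiating in the scalar variable $\alpha$ and then applying the uniform bounds of Lemma~\ref{lem:uniform_bounds}. Throughout, $\alpha$ ranges over the compact latent set $\mathcal H=[-M,M]$, $x$ lies in the support of $P$ with $\|x\|_2\le M$, and $u$ lies in $\Omega_U$.

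\textbf{Part (i), the function $f$.} Set $\hat x(\alpha):=\int\psi(\alpha;u)\,\rho_U(du)$. Since $\psi$ is $C^2$ in $\alpha$ with derivatives bounded uniformly on the compact set, dominated convergence lets us differentiate under the integral:
\[
\hat x'(\alpha)=\int\nabla_\alpha\psi\,d\rho_U,\qquad \hat x''(\alpha)=\int\nabla^2_\alpha\psi\,d\rho_U .
\]
Both are bounded in norm by $M$ by Lemma~\ref{lem:uniform_bounds}(iv). Differentiating $f(\alpha)=\tfrac12\|x-\hat x(\alpha)\|^2$ twice gives
\[
f''(\alpha)=\|\hat x'(\alpha)\|^2-\langle x-\hat x(\alpha),\hat x''(\alpha)\rangle .
\]
The first term is at most $M^2$. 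For the second, $\|x-\hat x\|\le\|x\|+\|\hat x\|\le 2M$ and $\|\hat x''\|\le M$, so it is at most $2M^2$. Hence $|f''|\le 3M^2$.

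\textbf{Part (i), the function $g$.} The same differentiation under the integral gives
\[
g''(\alpha)=\int\Bigl(2\|\nabla_\alpha\psi\|^2+2\langle\psi,\nabla^2_\alpha\psi\rangle\Bigr)\,d\rho_U ,
\]
whose absolute value is at most $2M^2+2M\cdot M=4M^2$. This is exactly the stated constant.

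\textbf{Parts (ii) and (iii).} These are immediate. By Lemma~\ref{lem:uniform_bounds}(iii), $\|\psi(\alpha(x);u)\|^2\le M^2$ for every $u$, where we use $|\alpha(x)|\le M$ from Lemma~\ref{lem:uniform_bounds}(ii) so that $\alpha(x)\in\mathcal H$. Integrating against the probability measure $\rho_U$ gives $K_U(x)\le M^2$ pointwise, and taking $\E_x$ preserves the bound. In the same way, $|\phi(x;w)|\le M$ gives $K_W(x)\le M^2$ pointwise, hence also in expectation.

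\textbf{Main obstacle.} The only delicate step is justifying differentiation under the integral in $\rho_U$. This follows from dominated convergence using the uniform bounds on $\nabla_\alpha\psi$ and $\nabla^2_\alpha\psi$ over the compact domain, since $\rho_U$ is a probability measure. The result should be stated for $\alpha$ in the compact set $\mathcal H$; if $\alpha$ is allowed to leave it, one enlarges $M$ accordingly.
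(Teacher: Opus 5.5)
Your proposal is correct and follows essentially the same route as the paper. You differentiate under the integral in $\alpha$ to get $f''=\|\hat x'\|^2-\langle x-\hat x,\hat x''\rangle$ and $g''=2\int(\|\nabla_\alpha\psi\|^2+\langle\psi,\nabla_\alpha^2\psi\rangle)\,d\rho_U$, bound each piece with Lemma~\ref{lem:uniform_bounds} to obtain $3M^2$ and $4M^2$, and read off (ii)--(iii) from the pointwise bounds $\|\psi\|\le M$ and $|\phi|\le M$; your explicit restriction of $\alpha$ to the compact latent interval is slightly more careful than the paper's presentation.
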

\begin{proof}

 By using \Cref{lem:uniform_bounds}  we can say that there exists a constant $M < \infty$ such that $|\phi(x; w)| \le M$ for all $x \in \operatorname{supp}(P), w \in \Omega_W$. So this implies that the feature representation $\alpha(x) = \int \phi(x; w) \rho_W(dw)$ is bounded by $M$ because
\[
|\alpha(x)| \le \int |\phi(x; w)| \rho_W(dw) \le \int M \rho_W(dw) = M.
\]
Note that $\alpha(x)$ is a continuous function, which implies that $|\alpha(x)|$ is a continuous function. So we can conclude that, the set $\mathcal{A} := \{ \alpha : |\alpha| \le M \}$, is also a compact set because it is the continuous image of  a compact set. 

Now, as we assume that activation function $\sigma$ is $C^2$ and decoder space $\Omega_U$ is compact, which implies $\psi(\alpha; u)$ is $C^2$ on the compact product space $\mathcal{A} \times \Omega_U$. By using \Cref{lem:uniform_bounds}, we can say that the function and its partial derivatives with respect to $\alpha$ are uniformly bounded by $M$. Particularly, we can say that
\[
\norm{\psi(\alpha; u)} \le M, \quad \norm{\nabla_\alpha \psi(\alpha; u)} \le M, \quad \norm{\nabla_\alpha^2 \psi(\alpha; u)} \le M .
\]

We first give a proof for part (iii) (Encoder bounded variance). Using the bound on $\phi$, we can say that
\[
K_W(x) = \int \phi(x; w)^2 \rho_W(dw) \le \int M^2 \rho_W(dw) = M^2.
\]
Now, by taking the expectation over the data distribution $x \sim P$, we can conclude that
\[
\E_x [K_W(x)] \le M^2.
\]

Now, we give a proof for part (ii) (Decoder bounded variance).
Similarly, for the encoder, using the uniform bound on $\psi$, we can say that
\[
K_U(x) = \int \norm{\psi(\alpha(x); u)}^2 \rho_U(du) \le \int M^2 \rho_U(du) = M^2.
\]
Now, by taking the expectation over the data distribution $x \sim P$, we can conclude that
\[
\E_x [K_U(x)] \le M^2.
\]

Now, we consider part (i) Bounded Hessians.
We  first consider
\[
g(\alpha) = \int \|\psi(\alpha; u)\|^2 \,\rho_U(du)
= \int \sum_{k=1}^d \psi_k(\alpha; u)^2 \,\rho_U(du),
\]
$\psi_k(\alpha; u)$ is the k-th coordinate of the function $\psi(\alpha; u)$. Now, by differentiating under the integral sign (justified by  Leibniz's rule as derivatives are uniformly bounded on compact sets), we obtain
\[
g'(\alpha)
= 2 \int \sum_{k=1}^d \psi_k(\alpha; u)\, \psi_k'(\alpha; u)\,\rho_U(du),
\]
and
\begin{align*}
 g''(\alpha)
&= 2 \int \sum_{k=1}^d \left[
(\psi_k'(\alpha; u))^2
+ \psi_k(\alpha; u)\, \psi_k''(\alpha; u)
\right] \rho_U(du)\\
&=2 \int \norm{\psi'(\alpha; u)}^2\rho_U(du)+2 \int \langle \psi(\alpha; u), \psi_k''(\alpha; u)\rangle \rho_U(du)\\
&\overset{(a)}{\leq} 2 \int M^2\rho_U(du)+2 \int \norm{\psi(\alpha; u) }\norm{\psi''(\alpha; u)}\\
&\leq 2M^2+ 2\int M^2 \rho_U(du)= 4 M^2 < \infty,
\end{align*}
where $(a)$ follows from $\norm{\psi(\alpha; u)} \le M$ and Cauchy Schwarz inequality, $(b)$ follows from $\norm{\nabla_\alpha \psi(\alpha; u)} \le M, \norm{\nabla_\alpha^2 \psi(\alpha; u)} \le M$.

Now, we consider $f(\alpha) = \frac{1}{2}\norm{x - \bar{\psi}(\alpha)}^2$, where we define $\bar{\psi}(\alpha) := \int \psi(\alpha; u) \rho_U(du)$. 
Note that, by Jensen's inequality we can say that
\[
\|\bar{\psi}(\alpha)\|
= \left\| \int \psi(\alpha;u)\,\rho_U(du) \right\|
\le \int \|\psi(\alpha;u)\|\,\rho_U(du)
\le M.
\]
Now, by differentiating under the integral sign (justified by  Leibniz's rule as derivatives are uniformly bounded on compact sets) we can say that
\[
\nabla_\alpha \bar{\psi}(\alpha)
= \int \nabla_\alpha \psi(\alpha;u)\,\rho_U(du),
\quad
\nabla_\alpha^2 \bar{\psi}_k(\alpha)
= \int \nabla_\alpha^2 \psi_k(\alpha;u)\,\rho_U(du),
\]
which further imply that
\[
\|\nabla_\alpha \bar{\psi}(\alpha)\| \le M,
\quad
\|\nabla_\alpha^2 \bar{\psi}_k(\alpha)\| \le M 
\]
because
\[\|\nabla_\alpha \bar{\psi}(\alpha)\|
= \left\| \int \nabla_\alpha \psi(\alpha)\,\rho_U(du) \right\|
\le \int \|\nabla_\alpha \psi(\alpha)\|\,\rho_U(du)
\le M\] and
\[\|\nabla_\alpha^2 \bar{\psi}_k(\alpha)\|
= \left\| \int \nabla^2_\alpha \psi(\alpha)\,\rho_U(du) \right\|
\le \int \|\nabla^2_\alpha \psi(\alpha)\|\,\rho_U(du)
\le M.\]
Now, \[\nabla_\alpha f(\alpha) = - (\nabla_\alpha \bar{\psi}(\alpha))^\top (x - \bar{\psi}(\alpha)),\] 
\[
\nabla_\alpha^2 f(\alpha) = (\nabla_\alpha \bar{\psi}(\alpha))^\top (\nabla_\alpha \bar{\psi}(\alpha)) - \sum_k \left( x_k - \bar{\psi}_k(\alpha) \right) \nabla_\alpha^2 \bar{\psi}_k(\alpha).
\]
Now we put an upper bound on $\nabla_\alpha^2 f(\alpha)$.  
\begin{align*}
\nabla_\alpha^2 f(\alpha) &= (\nabla_\alpha \bar{\psi}(\alpha))^\top (\nabla_\alpha \bar{\psi}(\alpha)) - \sum_k \left( x_k - \bar{\psi}_k(\alpha) \right) \nabla_\alpha^2 \bar{\psi}_k(\alpha)\\
&=\norm{\nabla_\alpha \bar{\psi}(\alpha)}^2-\langle x-\bar{\psi}(\alpha), \nabla_\alpha^2 \bar{\psi}_k(\alpha)\rangle\\
&\leq M^2+\norm{x-\bar{\psi}(\alpha)}\norm{\nabla_\alpha^2 \bar{\psi}_k(\alpha)}\\
&\leq M^2+(\norm{x}+\norm{\bar{\psi}(\alpha)})\leq M^2+2M^2=3M^2< \infty,
\end{align*}
where first inequality follows form Cauchy Schwarz inequality and $\|\nabla_\alpha \bar{\psi}(\alpha)\| \le M,$ and the last inequality follows from $\| \bar{\psi}(\alpha)\| \le M,
\|\nabla_\alpha^2 \bar{\psi}_k(\alpha)\| \le M$.

\end{proof}

\subsection{Proof of Theorem~\ref{thm:closeness}}
\label{sec:Closeness_Between_Empirical_and_Continuous Mean-Field_Risk}
For the conveinience of the reader let us restate Theorem~\ref{thm:closeness} which is the result regarding the closeness between empirical and continuous Mean-Field Risk.
\closeness* 
\begin{proof}
For any specific finite parameters $(\theta_U, \theta_W)$, we can trivially construct discrete empirical probability measures $\hat{\rho}_U, \hat{\rho}_W$. Because discrete empirical measures are perfectly valid probability measures within the space over which the continuous risk is defined, we can conclude that $\mathcal{R}_N(\theta_U, \theta_W) = \mathcal{R}(\hat{\rho}_U, \hat{\rho}_W)$. Since the infimum of the continuous risk is taken over all probability measures (including discrete ones), taking the infimum over all finite parameters trivially yields $\inf_{\theta_U, \theta_W} \mathcal{R}_N(\theta_U, \theta_W) \geq  \inf_{\rho_U, \rho_W} \mathcal{R}(\rho_U, \rho_W)$.

Let's define $K^*_W(x):=\E_{w \sim \rho_W^*}[\phi(x; w)^2]$, $K^*_U(x) := \int \norm{\psi(\alpha(x); u)}^2 \rho_U(du)$, $f^*(\alpha) := \frac{1}{2}\norm{x - \int \psi(\alpha; u) \rho_U(du)}^2$ and $g^*(\alpha) := \int \norm{\psi(\alpha; u)}^2 \rho_U(du)$. Then, by using \cref{lem:bounded_properties} we can say that $\E_x \left[ K^*_U(x) \right] \le M^2 $, $\E_x \left[ K^*_W(x) \right] \le M^2$, $f^*(\alpha)$ and $g^*(\alpha)$ are both $C^2$ functions have double derivatives  uniformly bounded by $3M^2$ and $4M^2$, respectively. We are going use these results throughout the proof.

Now sample $N$ decoder particles $u_j \sim \rho_U^*$ and $N$ encoder particles $w_i \sim \rho_W^*$. The empirical bottleneck $\hat{\alpha}_N(x) = \frac{1}{N} \sum_{i=1}^N \phi(x; w_i)$ is an unbiased estimator of the true bottleneck $\alpha^*(x)$. Now as because the samples are independent, so the variance of their sum is the sum of their variances
\[
\Var_W(\hat{\alpha}_N(x)) = \frac{1}{N^2} \sum_{i=1}^N \Var_{w_i}(\phi(x; w_i)) \le \frac{\E_{w \sim \rho_W^*}[\phi(x; w)^2]}{N} \le \frac{K^*_W(x)}{N}.
\]

We evaluate the expected finite-$N$ risk, conditioning on a fixed draw of encoder weights $W$, and taking the expectation purely over the random decoder particles $u_j \sim \rho_U^*$. Let's recall $\risk_N(U, W) = \frac{1}{2} \mathbb{E}_{x\sim P} \left[ \|x - \hat{x}^{(N)}\|^2 \right]$. Now, by expanding the squared $L_2$ norm we get
\[
\E_U [\mathcal{R}_{N \mid W}] = \underbrace{ \frac{1}{2} \E_x [\norm{x}^2] }_{R_\#} - \underbrace{\E_x \left[ \inner{x}{ \E_U [\hat{x}_N] } \right]}_{1(a)} + \frac{1}{2} \E_x \left[ \E_U [\norm{\hat{x}_N}^2] \right]
\]
Now, $\E_U[\hat{x}_N] = \int \psi(\hat{\alpha}_N(x); u) \rho_U^*(du)$. For the squared norm, we expand the product of sums. Since $u_k$ and $u_j$ are independent for $k \neq j$,
\begin{align*}
    \E_U [\norm{\hat{x}_N}^2] &= \frac{1}{N^2} \E_U \left[ \sum_{j=1}^N \norm{\psi(\hat{\alpha}_N; u_j)}^2 + \sum_{k \neq j} \inner{\psi(\hat{\alpha}_N; u_k)}{\psi(\hat{\alpha}_N; u_j)} \right] \\
    &= \frac{1}{N} \int \norm{\psi(\hat{\alpha}_N; u)}^2 \rho_U^*(du) + \left(1 - \frac{1}{N}\right) \norm{\int \psi(\hat{\alpha}_N; u)\rho_U^*(du)}^2\\
    &=\frac{1}{N}\left(\int \norm{\psi(\hat{\alpha}_N; u)}^2 \rho_U^*(du)-\norm{\int \psi(\hat{\alpha}_N; u)\rho_U^*(du)}^2\right)\\
    &\qquad+\underbrace{\norm{\int \psi(\hat{\alpha}_N; u)\rho_U^*(du)}^2}_{1(b)}
\end{align*}
We substitute these expectations back into the risk equation. By grouping the $R_{\#}$, $1(a)$ and $1(b)$ terms together, we get the risk of the ideal continuous decoder operating on the empirical bottleneck (Term 1). The residual $\frac{1}{N}$ term forms the kind of variance term of the decoder (Term 2). So. we can write
\begin{align*}
    \E_U [\mathcal{R}_{N \mid W}] &= \underbrace{\frac{1}{2} \E_x \left[ \norm{x - \int \psi(\hat{\alpha}_N(x); u) \rho_U^*(du)}^2 \right]}_{\text{Term 1: Continuous Decoder on Empirical Bottleneck}} \\
    &\quad + \underbrace{\frac{1}{2N} \E_x \left[ \int \norm{\psi(\hat{\alpha}_N; u)}^2 \rho_U^*(du) - \norm{\int \psi(\hat{\alpha}_N; u)\rho_U^*(du)}^2 \right]}_{\text{Term 2:  Variance}}
\end{align*}
As, norm of sum vector is always non-negative, we upper-bound Term 2 by dropping it. So,
$\text{Term 2} \le \frac{1}{2N} \E_x \left[ \int \norm{\psi(\hat{\alpha}_N; u)}^2 \rho_U^*(du) \right]$.

We evaluate the expected values of Term 1 and Term 2 over the random encoder weights $W$. For Term 2, let $g^*(\alpha) = \int \norm{\psi(\alpha; u)}^2 \rho_U^*(du)$. Now, by using \cref{lem:bounded_properties},we can say that $g^*(\alpha)$ is a $C^2$ function with second derivative bounded by $4M^2$. We perform a second-order Taylor expansion of $g^*(\hat{\alpha}_N)$ around the optimal continuous bottleneck $\alpha^*$. So, by using Taylor expansion, we can say that
\[
g(\hat{\alpha}_N) = g(\alpha^*) + \nabla g(\alpha^*)^\top (\hat{\alpha}_N - \alpha^*) + \frac{1}{2} (\hat{\alpha}_N - \alpha^*)^\top \nabla^2 g(\tilde{\alpha}) (\hat{\alpha}_N - \alpha^*)
\]
Now, by taking the expectation over $W$, the linear term perfectly vanishes because the empirical bottleneck is unbiased ($\E_W[\hat{\alpha}_N - \alpha^*] = 0$). Now, by using the uniform double derivative bound $4M^2$ to the quadratic term, we can conclude that
\[
\E_W [g(\hat{\alpha}_N(x))] \le g(\alpha^*(x)) + \frac{4M^2}{2} \E_W \left[ (\hat{\alpha}_N - \alpha^*)^2 \right] \le K^*_U(x) + \frac{4 M^2 K^*_W(x)}{2N}.
\]
Now, by taking expectation over $x\sim P$, Term 2 is bounded by $\frac{1}{2N}(M^2 + \frac{4M^4 }{2N})$.

We deploy the exact same second-order Taylor expansion for function $f^*(\alpha) = \frac{1}{2} \norm{x - \int \psi(\alpha; u) \rho_U^*(du)}^2$. We expand $f(\hat{\alpha}_N)$ around the optimal continuous bottleneck $\alpha^*$. After expanding in Taylor series, we get
\[
f(\hat{\alpha}_N)
=
f(\alpha^*)
+
\nabla f(\alpha^*)^\top (\hat{\alpha}_N - \alpha^*)
+
\frac{1}{2}
(\hat{\alpha}_N - \alpha^*)^\top
\nabla^2 f(\tilde{\alpha})
(\hat{\alpha}_N - \alpha^*),
\]
where $\tilde{\alpha}$ lies on the line segment between $\hat{\alpha}_N$ and $\alpha^*$. Now, by taking expectation over the encoder weights $W$, the linear term vanishes due to the unbiasedness of the empirical bottleneck $(E_W[\hat{\alpha}_N - \alpha^*] = 0)$.
 And after applying the uniform double derivative bound $3M^2$ to the quadratic term, we obtain
\[
\mathbb{E}_W\big[f(\hat{\alpha}_N)\big]
\le
f(\alpha^*)
+
\frac{ 3M^2}{2}
\mathbb{E}_W\big[(\hat{\alpha}_N - \alpha^*)^2\big].
\]
Now, by taking expectation over $x\sim P$ gives us
\[
\E_x \E_W [f(\hat{\alpha}_N(x))] \le \E_x [f(\alpha^*(x))] + \frac{3M^4}{2N}.
\]
Notice that $\E_x[f(\alpha^*(x))]$ is exactly the continuous mean-field risk $\mathcal{R}(\rho_U^*, \rho_W^*)$. 

Now after summing the expectations of Term 1 and Term 2 gives us
\[
\E_U [\mathcal{R}_{N \mid W}] \le \mathcal{R}(\rho_U^*, \rho_W^*) + \frac{3M^4 + M^2}{2N} + \frac{M^4}{N^2}.
\]
Now, after taking expectation on both sides with respect to $W$, we get
\[\E_{\rho_U^*, \rho_W^*} [\mathcal{R}_{N}] \le \mathcal{R}(\rho_U^*, \rho_W^*) + \frac{3M^4 + M^2}{2N} + \frac{M^4}{N^2}.\]

 Since the expected empirical risk under $(\rho_U^*, \rho_W^*)$ satisfies
\[
\E_{\rho_U^*, \rho_W^*} [\mathcal{R}_{N}]  \le
\mathcal{R}(\rho_U^*, \rho_W^*) + \frac{3M^4 + M^2}{2N} + \frac{L_g K_W}{4N^2},
\]
there exists a finite realization whose empirical risk is at most this bound. So, we can say that 
  \[
 \mathcal{R}_N(\theta_U, \theta_W) \le \inf_{\rho_U, \rho_W} \mathcal{R}(\rho_U, \rho_W) + \varepsilon_{0} + \frac{3M^4+M^2}{2N} + \frac{M^4}{N^2}.
\]
Since one realization of the of the finite network is bounded by some fixed number, therefore, the absolute infimum over all finite networks is bounded and we can conclude that
\[
\inf_{\theta_U, \theta_W} \mathcal{R}_N(\theta_U, \theta_W) \le \inf_{\rho_U, \rho_W} \mathcal{R}(\rho_U, \rho_W) + \varepsilon_{0} + \frac{3M^4+M^2}{2N} + \frac{M^4}{N^2}.
\]
Now the above result combining with the lower bound $(\inf_{\theta_U, \theta_W} \mathcal{R}_N(\theta_U, \theta_W) \geq  \inf_{\rho_U, \rho_W} \mathcal{R}(\rho_U, \rho_W))$ completes the proof.
\end{proof}

\section{Variational Characterization of Local Minima}\label{sec:Variational_Characterization_of_Local_Minima}

\variationalcharacterization*
\begin{proof}
We fist prove this condition for the decoder measure $\rho_U^*$. Then, we go for the the encoder measure $\rho_W^*$ and it will follow exactly the same perfectly symmetric logic.

The core intuition is that the potential $\Psi_U(u)$ represents the marginal "energy" or "cost" of placing a single particle at location $u$. If the current distribution $\rho_U^*$ is truly a minimizer, there cannot exist any mass in a region where the energy is higher than the absolute minimum, because we could lower the total risk by simply picking up that mass and moving it to the minimum energy location. We formalize this by contradiction.

Under Assumptions~\ref{it:bounded_enocder_decoder-domain}, \ref{it:bounded_data-domain}, and \ref{it:bounded_activation}, we conclude that $V(u;\alpha)$ and $U(u,u';\alpha)$ are continuous functions. Furthermore, the first variation potentials $\Psi_U(u; \rho_U, \rho_W)$ and $\Psi_W(w; \rho_U, \rho_W)$ are also continuous.

Let $\psi_U^* = \inf_u \Psi_U(u; \rho_U^*, \rho_W^*)$ be the absolute lowest possible value of the decoder potential and by assumption we know that $\psi_U^* >-\infty$. Let $S_U$ be the set of all particles that achieve this infimum. So,
\[
S_U = \arg\min_u \Psi_U(u; \rho_U^*, \rho_W^*) = \{ u \in \mathbb{R}^{D_u} \mid \Psi_U(u) = \psi_U^*. \}
\]
 We know that the potential $\Psi_U$ is continuous, which further implies that the set $S_U$ is a closed set.

 Let's assume that the optimal measure $\rho_U^*$ places some amount of mass outside the optimal set $S_U$. This means there exists a point $u_0$ such that
\[
u_0 \in \mathrm{supp}(\rho_U^*) \quad \text{but} \quad u_0 \notin S_U.
\]
As $u_0$ is not in the minimum set, so this implies that its potential must be greater than the minimum. Therefore, there exists some positive margin $2\Delta > 0$ such that $\Psi_U(u_0) = \psi_U^* + 2\Delta$.

Now, the definition of lower continuity, we can conclude that there must exist a small open ball $B(u_0, r_0)$ of radius $r_0$ centered at $u_0$ wherein the potential remains large. We can conclude that for all $u \in B(u_0, r_0)$, 
\[
\Psi_U(u) \ge \psi_U^* + \Delta.
\]
Let's define $
\lambda_0 := \rho_U^*\big(B(u_0, r_0)\big) > 0.
$ We want to take a small amount of mass $\lambda \in [0, \min\{\lambda_0,1\}]$ out of this bad ball and move it to a good point $u_1 \in S_U$ and using this we want to define one perturbed probability measure which suffices for showing contradiction in our case. 

First, we define a probability measure $\nu$ on the bad ball
\[
\nu = \frac{1}{\lambda_0} \rho_U^* \Big|_{B(u_0, \varepsilon_0)}.
\]
Notice that $\nu$ is a valid probability measure (it integrates to 1) and it is entirely confined to the bad region where $\Psi_U \ge \psi_U^* + \Delta$.

Now, we construct our perturbed probability measure $\rho_U^\lambda$ as
\[
\rho_U^\lambda = \rho_U^* - \lambda \nu + \lambda \delta_{u_1} = \rho_U^* + \lambda(\delta_{u_1} - \nu).
\]

Let $\Delta \rho = (\delta_{u_1} - \nu)$. We evaluate the new risk $\mathcal{R}(\rho_U^t, \rho_W^*)$. As autoencoder risk is quadratic with respect to the output measure, then  the Taylor expansion of a quadratic functional is exact after the second derivative. So, after writing Taylor series expansion with respect to the output measure we get
\[
\mathcal{R}(\rho_U^\lambda, \rho_W^*) = \mathcal{R}(\rho_U^*, \rho_W^*) + \lambda \int \Psi_U(u) d(\Delta \rho)(u) + \frac{\lambda^2}{2} \iint U(u, u') d(\Delta \rho)(u) d(\Delta \rho)(u').
\]
We first evaluate the linear (first-order) term $\lambda \int \Psi_U d(\delta_{u_1} - \nu)$. 
\begin{align*}
    \lambda \int \Psi_U(u) d(\delta_{u_1} - \nu)(u) &= \lambda \left( \int \Psi_U(u) \delta_{u_1}(du) - \int \Psi_U(u) \nu(du) \right) \\
    &= \lambda \left( \Psi_U(u_1) - \int_{B(u_0, \varepsilon_0)} \Psi_U(u) \nu(du) \right).
\end{align*}
Because $u_1 \in S_U$, we have $\Psi_U(u_1) = \psi_U^*$. Moreover, since $\nu$ is supported entirely on the set $\{u : \Psi_U(u) \ge \psi_U^* + \Delta\}$, it follows that $\int \Psi_U(u)\,\nu(du) \ge \psi_U^* + \Delta.$
Now,  after substituting these bounds we get
\begin{align*}
    \lambda \int \Psi_U(u) d(\Delta \rho)(u) &\le \lambda \Big( \psi_U^* - (\psi_U^* + \Delta) \Big) = -\lambda\Delta.
\end{align*}

Next, we evaluate the quadratic (second-order) term $\frac{\lambda^2}{2} \iint U d(\Delta \rho) d(\Delta \rho)$. Now, as we know that the interaction kernel $U(u, u'; \alpha)=\frac{1}{2}
\mathbb{E}_x
\Big[
\langle
\psi(\alpha(x); u),
\psi(\alpha(x); u')
\rangle
\Big]$ is continuous on the compact support of the perturbation, the double integral over the bounded signed measure $\Delta \rho$ is upper bounded by some finite constant. Now, using \cref{lem:uniform_bounds} we know that $\|\psi(\alpha(x); u)\|\leq M$, which implies that 
\begin{align*}
|U(u, u'; \alpha)|&=\left|\frac{1}{2}
\mathbb{E}_x
\Big[
\langle
\psi(\alpha(x); u),
\psi(\alpha(x); u')
\rangle
\Big]\right|\le\frac{1}{2} \mathbb{E}_x
\Big[
\norm{\psi(\alpha(x); u)}
\norm{\psi(\alpha(x); u')} \Big]\le \frac{M^2}{2}. 
\end{align*}

Thus, the second-order term is bounded by $M^2\lambda^2$.

Now after combining the linear and quadratic bounds, we get 
\begin{align*}
    \mathcal{R}(\rho_U^\lambda, \rho_W^*) - \mathcal{R}(\rho_U^*, \rho_W^*) &\le -\lambda\Delta + M^2 \lambda^2 \\
    &= \lambda \left( - \Delta + M \lambda \right).
\end{align*}
We can choose a positive mass $\lambda$ such that $0 < \lambda < \min\{\frac{\Delta}{M},\lambda_{0},1\}:=\tilde{\lambda}$, then the term $\lambda \left( - \Delta + M \lambda\right)\leq 0$. Therefore, $\mathcal{R}(\rho_U^\lambda, \rho_W^*) < \mathcal{R}(\rho_U^*, \rho_W^*)$. This implies there exists a pair of probability measures $(\rho_U^\lambda,  \rho_W^*)$ that achieves a lower risk. This contradicts the initial assumption that $(\rho_U^*, \rho_W^*)$ is a global minimizer of the risk functional. So, we get a contradiction. Therefore, the support of the optimal measure must be entirely contained within the set of global minimizers of the potential, particularly
\[
\mathrm{supp}(\rho_U^*) \subseteq \arg\min_u \Psi_U(u; \rho_U^*, \rho_W^*).
\]
This completes the proof for decoder part. 

Now we consider the encoder part. Although the risk is quadratic in the decoder measure $\rho_U$, it is nonlinear in the encoder measure $\rho_W$, since $\rho_W$ appears inside the nonlinear activation $\psi$. Consequently, a quadratic expansion is not available. Instead, we employ a Taylor expansion, which requires $\psi(\alpha;u)$ to be $C^2$ in $\alpha$ with uniformly bounded first and second derivatives, as guaranteed by \Cref{lem:uniform_bounds}.

Let $\psi_W^* = \inf_w \Psi_W(w; \rho_U^*, \rho_W^*)$ and $S_W = \arg\min_w \Psi_W(w)$. Let's assume that there exists $w_0 \in \mathrm{supp}(\rho_W^*)$ but $w_0 \notin S_W$. 
By using the same continuity argument used for the decoder, we can say that there exists a margin $\Delta > 0$ and an open ball $B(w_0, r_0)$ containing mass $\lambda_0 > 0$ where $\Psi_W(w) \ge \psi_W^* + \Delta$. 

We define the localized measure $\nu = \frac{1}{\lambda_0} \rho_W^* \Big|_{B(w_0, \varepsilon_0)}$ and pick an optimal point $w_1 \in S_W$. We perturb the encoder measure for $\lambda \in [0, \min\{\lambda_0,1\}]$ as
\[
\rho_W^\lambda = \rho_W^* - \lambda\nu + \lambda\delta_{w_1} = \rho_W^* + \lambda(\delta_{w_1} - \nu).
\]
Let $\Delta \rho_W = \delta_{w_1} - \nu$. This perturbation causes the intermediate bottleneck feature to shift.  Now as the bottleneck is linear in $\rho_W$, the shift is 
\[
\alpha(x; \rho_W^\lambda) = \alpha(x; \rho_W^*) + \lambda \int \phi(x; w) d(\Delta \rho_W)(w) \equiv \alpha^*(x) + \lambda \Delta\alpha(x).
\]
By using \Cref{lem:uniform_bounds}, we can say that feature map $\phi$ is continuous and bounded by $M$, which implies the shift $\Delta\alpha(x)$ exists and is uniformly bounded by some constant $2M$.

We now evaluate the perturbed prediction $\hat{x}^\lambda$. Because $\psi(\alpha;u)$ is $C^2$ with respect to $\alpha$, we expand it around the optimal bottleneck $\alpha^*$ using Taylor expansion. Using a second-order Taylor expansion of $\psi(\alpha;u)$ around $\alpha = \alpha^*(x)$, we obtain
\[
\psi(\alpha^* + \lambda\Delta\alpha; u)
= \psi(\alpha^*; u)
+ \lambda\,\nabla_\alpha \psi(\alpha^*; u)\, \Delta\alpha(x)
+ \lambda^2 \int_0^1 (1-s)\,\nabla_\alpha^2 \psi\big(\alpha^* + s \lambda \Delta\alpha; u\big)\, (\Delta\alpha)^2 \, ds.
\]
Integrating with respect to $\rho_U^*$, we obtain
\begin{align*}
\hat{x}^\lambda(x)
&= \hat{x}^*(x)
+ \lambda \left(\int \nabla_\alpha \psi(\alpha^*; u)\, , \rho_U^*(du)\right) \Delta\alpha(x)\ \\
&\quad + \lambda^2\left( \int_0^1 (1-s)
\left[
\int \nabla_\alpha^2 \psi\big(\alpha^* + s \lambda \Delta\alpha(x); u\big)\, \rho_U^*(du)
\right]
 ds\right)  (\Delta\alpha(x))^2,
\end{align*}
where we have changed the order of integration in the $\lambda^2$ term by Fubini's theorem  because integrand is bounded and $\hat{x}^*(x)=\int \psi(\alpha^*; u) \rho_U^*(du).$ Specifically, \Cref{lem:uniform_bounds} guarantees that the integrand is  bounded, since $\|\nabla^2_{h} \psi\big(\alpha^* + s \lambda \Delta\alpha(x); u\big)\| \leq M$ 
uniformly for all parameters. Now Combining this with the boundedness of $\Delta\alpha(x)$ we can say that the double integral is absolutely convergent, allow us to use Fubini's theorem. By using \Cref{lem:uniform_bounds}, we also know that $\norm{\nabla_\alpha \psi(\alpha^*; u)}\leq M$.

Now, by using \Cref{lem:uniform_bounds} and $|\Delta\alpha(x)|\le 2M$, we can say that
\begin{align*}
\norm{\hat{x}^*(x)}&=\norm{\int \psi(\alpha^*; u) \rho_U^*(du)}\leq \int \norm{\psi(\alpha^*; u) } \rho_U^*(du)\leq \int M\rho_U^*(du) \leq M,
\end{align*}

\begin{align*}
 \norm{\left(\int \nabla_\alpha \psi(\alpha^*; u)\, \rho_U^*(du)\right)\Delta\alpha(x)}&\leq \left(\int \norm{\nabla_\alpha \psi(\alpha^*; u)}\, \rho_U^*(du)\right)|\Delta\alpha(x)|\\ 
 &\leq 2M \int \norm{\nabla_\alpha \psi(\alpha^*; u)} \rho_U^*(du)\leq \int 2M^2 \rho_U^*(du)=2M^2,  
\end{align*}

\begin{align*}
&\norm{\left( \int_0^1 (1-s)
\left[
\int \nabla_\alpha^2 \psi\big(\alpha^* + s \lambda \Delta\alpha(x); u\big)\, \rho_U^*(du)
\right]
 ds\right)  (\Delta\alpha(x))^2}\\
&\leq \left(\int_0^1 (1-s)
\norm{\left[
\int \nabla_\alpha^2 \psi\big(\alpha^* + s \lambda \Delta\alpha(x); u\big)\, \rho_U^*(du)
\right]} ds\right) |\Delta\alpha(x)|^2 \\
&\leq 4M^2 \int_0^1 (1-s)
\norm{\left[
\int \nabla_\alpha^2 \psi\big(\alpha^* + s \lambda \Delta\alpha(x); u\big)\, \rho_U^*(du)
\right]} ds\\
&\leq 4M^2 \int_0^1 (1-s)
\left[
\int \norm{\nabla_\alpha^2 \psi\big(\alpha^* + s \lambda \Delta\alpha(x); u\big)\,} \rho_U^*(du) 
\right] ds\\
&\leq 4M^2 \int_0^1 (1-s)
\left[
\int M \rho_U^*(du) 
\right] ds=4M^3 \int_0^1 (1-s)
 ds=2M^3.
\end{align*}
Now, after substituting the  expansion of $\hat{x}^t(x)$ into the squared risk functional $\mathcal{R} = \frac{1}{2}\mathbb{E}_x \norm{\hat{x}^\lambda - x}^2$, we observe that the first-order (linear) term of the risk expansion matches the first variation potential $\Psi_W$ and the remaining terms consist of higher-order contributions, with non-vanishing terms appearing up to fourth order. Let $\tau_{1}(x):=\left(\int \nabla_\alpha \psi(\alpha^*; u)\, \rho_U^*(du)\right) \Delta\alpha(x)$ and $\tau_2(x):=\left(\int_0^1 (1-s) \left[ \int \nabla_\alpha^2 \psi\big(\alpha^* + s \lambda \Delta\alpha(x); u\big)\, \rho_U^*(du) \right] ds\right) (\Delta\alpha(x))^2$. 
Under this notation, we can write the perturbed prediction as $\hat{x}^\lambda(x) = \hat{x}^*(x) + \lambda\tau_1(x) + \lambda^2\tau_2(x)$. Substituting this into the squared risk functional, we expand the norm as
\begin{align*}
 2\mathcal{R}(\rho_U^*, \rho_W^\lambda) &= \mathbb{E}_x \norm{\hat{x}^\lambda(x) - x}^2 \\
 &= \mathbb{E}_x \norm{(\hat{x}^*(x) - x) + \lambda\tau_1(x) + \lambda^2\tau_2(x)}^2 \\
 &= 2\mathcal{R}(\rho_U^*, \rho_W^*) + 2 \lambda \mathbb{E}_x \left[ (\hat{x}^*(x) - x)^\top \tau_1(x) \right] \quad \\
 &\quad+ \lambda^2 \mathbb{E}_x \left[ \norm{\tau_1(x)}^2 + 2(\hat{x}^*(x) - x)^\top \tau_2(x) \right] \\
 &\quad + 2\lambda^3 \mathbb{E}_x \left[ \tau_1(x)^\top \tau_2(x) \right] + \lambda^4 \mathbb{E}_x \left[ \norm{\tau_2(x)}^2 \right].
\end{align*}

So after dividing by 2, we can write 
\begin{align*}
    \mathcal{R}(\rho_U^*, \rho_W^\lambda) - \mathcal{R}(\rho_U^*, \rho_W^*) &= \lambda \mathbb{E}_x \left[ (\hat{x}^*(x) - x)^\top \tau_1(x) \right] \\
    &\quad + \frac{\lambda^2}{2} \mathbb{E}_x \left[ \norm{\tau_1(x)}^2 + 2(\hat{x}^*(x) - x)^\top \tau_2(x) \right] \\
    &\quad + \lambda^3 \mathbb{E}_x \left[ \tau_1(x)^\top \tau_2(x) \right] + \frac{\lambda^4}{2} \mathbb{E}_x \left[ \norm{\tau_2(x)}^2 \right].
\end{align*}

We first evaluate the coefficient of the linear term in $t$. Now, by substituting the $\tau_1(x)$ alongside $\Delta\alpha(x) = \int \phi(x; w) d(\Delta \rho_W)(w)$, we apply Fubini's theorem to swap the expectation over the data $x$ and the integral over the parameter measure $w$ to get
\begin{align*}
    &\lambda \mathbb{E}_x \left[ (\hat{x}^*(x) - x)^\top \tau_1(x) \right] \\
    &\qquad= \lambda \mathbb{E}_x \left[ (\hat{x}^*(x) - x)^\top \left( \int \nabla_\alpha \psi(\alpha^*; u)\, \rho_U^*(du) \right) \left( \int \phi(x; w) d(\Delta \rho_W)(w) \right) \right] \\
    &\qquad= \lambda \int \left( \mathbb{E}_x \left[ (\hat{x}^*(x) - x)^\top \left( \int \nabla_\alpha \psi(\alpha^*; u)\, \rho_U^*(du) \right) \phi(x; w) \right] \right) d(\Delta \rho_W)(w) \\
    &\qquad= \lambda \int \Psi_W(w; \rho_U^*, \rho_W^*) d(\Delta \rho_W)(w).
\end{align*}
This is the first variation potential of the encoder. Now, by the exact same integration logic applied earlier for the decoder proof, inserting $\Delta \rho_W = \delta_{w_1} - \nu$ evaluates this linear term to at most $-\lambda\Delta$. So, we can conclude
\begin{align*}
    \lambda \int \Psi_W(w) d(\delta_{w_1} - \nu)(w) &\le \lambda \Big( \psi_W^* - (\psi_W^* + \Delta) \Big) = -\lambda\Delta.
\end{align*}

Now we consider the higher-order terms. Since we know that $\|\hat{x}^*(x)\|\leq M$, $\|x\|\leq M$, $\|\tau_1(x)\|\leq 2M^2$, and $\|\tau_2(x)\|\leq 2M^3$ are all uniformly bounded over their respective domains, the expectations multiplying $\lambda^2, \lambda^3$ and $\lambda^4$ are bounded by some positive constants. So, we can conclude that
\begin{align*}
   & \frac{1}{2} \mathbb{E}_x \left[ \norm{\tau_1(x)}^2 + 2(\hat{x}^*(x) - x)^\top \tau_2(x) \right] \\
   &\qquad\le \frac{1}{2} \mathbb{E}_x \left[ \norm{\tau_1(x)}^2 + 2\norm{\hat{x}^*(x) - x} \norm{\tau_2(x)} \right] \\
    &\qquad\le \frac{1}{2} \left[ (2M^2)^2 + 2(M+M)(2M^3) \right] = \frac{1}{2} \left[ 4M^4 + 8M^4 \right] = 6 M^4,
\end{align*}
\begin{align*}
    \mathbb{E}_x \left[ \tau_1(x)^\top \tau_2(x) \right] &\le \mathbb{E}_x \left[ \norm{\tau_1(x)} \norm{\tau_2(x)} \right] \le (2M^2)(2M^3) 
    = 4M^5,
\end{align*}
\begin{align*}
    \frac{1}{2} \mathbb{E}_x \left[ \norm{\tau_2(x)}^2 \right] &\le \frac{1}{2} (2M^3)^2 
= \frac{1}{2} (4M^6) = 2M^6.
\end{align*}

Substituting these explicit upper bounds into our risk difference, we get that
\begin{align*}
    \mathcal{R}(\rho_U^*, \rho_W^\lambda) - \mathcal{R}(\rho_U^*, \rho_W^*) &\le -\lambda\Delta + 6M^4 \lambda^2 + 4M^5 \lambda^3 + 2M^6 \lambda^4 \\
    &= \lambda\Big( -\Delta + 6M^4  \lambda + 4M^5 \lambda^2 + 2M^6 \lambda^3 \Big).
\end{align*}

We want to make this overall change strictly negative. Notice that the expression inside the parenthesis contains $-\Delta$ added to a cubic polynomial $P(\lambda) = 6M^4  \lambda + 4M^5 \lambda^2 + 2M^6 \lambda^3$ with positive coefficients. Because $P(0) = 0$ and $P(\lambda)$ is continuous, its value starts at zero and grows continuously for $\lambda>0$. Therefore, it is possible to choose a   $0<\lambda' \le \min\{\lambda_{0},1\}$ such that $P(\lambda') < \Delta$. For such a choice of $\lambda'$, the term $\big( -\Delta + P(\lambda') \big)$ is strictly negative. Thus, $\mathcal{R}(\rho_U^*, \rho_W^\lambda) - \mathcal{R}(\rho_U^*, \rho_W^*) < 0$, which implies that we have shown a valid distribution yielding a strictly lower risk. 

This directly contradicts the initial assumption that the unperturbed pair $(\rho_U^*, \rho_W^*)$ is a global minimizer. Therefore, 
\[
\mathrm{supp}(\rho_W^*) \subseteq \arg\min_w \Psi_W(w; \rho_U^*, \rho_W^*).
\]
This completes the proof. 
\end{proof}

\section{Main Result: Mean-Field Dynamics and Propagation of Chaos} \label{app:mean_field_dynamics_chaos}
Before stating the main result, we recall the key quantities governing the mean-field dynamics (see \Cref{sec:mean-field_risk}). The Vlasov-McKean Wasserstein dynamics are driven by the spatial gradients of the first variations of the risk functional $\Psi_U = \frac{\delta \mathcal{R}}{\delta \rho_U}$ and $\Psi_W = \frac{\delta \mathcal{R}}{\delta \rho_W}$, and is given by

\[
\Psi_U(u; \rho_U, \rho_W) = \mathbb{E}_x [\langle \hat{x}-x, \psi(\alpha(x);u)\rangle],
\]

\[
\Psi_W(w; \rho_U, \rho_W) = \mathbb{E}_x \left[ \left \langle \hat{x} - x , \int \frac{\partial \psi(\alpha; u)}{\partial \alpha} \, d\rho_U(u) \right \rangle \phi(x;w) \right]
.\]

The particle dynamics are governed by the negative spatial gradients of these potentials. 
For the decoder, differentiating $\Psi_U$ with respect to $u$ yields
\begin{equation}
    G_U(u; \rho_U, \rho_W) 
    := -\nabla_u \Psi_U(u; \rho_U, \rho_W) 
    = -\mathbb{E}_x \left[ \left(\nabla_u \psi(\alpha(x); u) \right)^\top  (\hat{x} - x)\right]
    \label{eq:GU}
\end{equation}
as the only term which is dependent on the microscopic variable $u$ is $\psi(\alpha(x);u)$. For the encoder, we differentiate $\Psi_W$ with respect to $w$. Observe that $\hat{x}$ and $\alpha(x)$ depend only on the measure $\rho_W$, and hence are constant with respect to the microscopic variable $w$. Therefore, the gradient acts only on the feature map $\phi(x; w)$, giving
\begin{align}
     G_W(w; \rho_U, \rho_W) 
    &:= -\nabla_w \Psi_W(w; \rho_U, \rho_W) \nonumber\\
    &= -\mathbb{E}_x \left[ \left \langle \hat{x} - x, 
    \int \frac{\partial \psi(\alpha; u)}{\partial \alpha}\, d\rho_U(u) \right \rangle 
    \nabla_w \phi(x; w) \right]  \label{eq:GW}.
   \end{align}

These vector fields $G_U$ and $G_W$ provide the mean-field gradients driving the particle dynamics. Let's also recall the gradient driving the SGD. Let $z_{k+1} = x_{k+1}$ denote the single random data sample drawn at step $k$. We define the discrete single-sample stochastic gradients for the decoder and encoder particles as
\begin{align*}
&F_{U,j}(u_j^k, \theta_{W,k}; z_{k+1}) := - \big(\nabla_u \psi(\hat{\alpha}_N(x_{k+1}); u_j^k)\big)^\top \Big( \hat{x}_N(x_{k+1}) - x_{k+1} \Big),\\
&F_{W,i}(w_i^k, \theta_{U,k}; z_{k+1}) \\
&\quad:= - \left(\nabla_w \phi(x_{k+1}; w_i^k)\right)^\top \left[ \frac{1}{N} \sum_{m=1}^N \nabla_\alpha \psi(\hat{\alpha}_N(x_{k+1}); u_m^k) \right]^\top \left( \hat{x}_N(x_{k+1}) - x_{k+1} \right),   
\end{align*}

where $\hat{\alpha}$ is the empirical bottleneck potential.
Furthermore, we distinguish between the continuous vector fields ($G_U, G_W$) and the gradients used by SGD ($F_U, F_W$). While $G$ represents the ideal, noise-free population gradient averaged over the entire dataset, actual SGD only computes a noisy gradient using a single random data sample $z_{k+1} = x_{k+1}$ at each step $k$. Notice that the expected values of these stochastic gradients, taken over the data distribution $z \sim P$, are exactly the population continuous vector fields $G_U$ and $G_W$ evaluated over the discrete empirical measures $\hat{\rho}_U^{(N)}$ and $\hat{\rho}_W^{(N)}$.

Now, we state the main result about the propagation of chaos. 
\meanfieldlimit*

Before giving the proof of \Cref{thm:main_dynamics}, we present a result that we shall use to prove the theorem. 

We first establish the Lipschitz continuity of the mean-field vector fields. Let us consider the following parametric test function classes corresponding to the decoder and encoder as follows. For decoder, we consider two separate families 
\[\mathcal{F}_{U,1} = \{ u' \mapsto \psi(\alpha; u') \}_{\alpha \in \mathcal{A}} \text{ and } \,\, \mathcal{F}_{U,2} = \left\{ u' \mapsto \frac{\partial \psi}{\partial \alpha}(\alpha; u') \right\}_{\alpha \in \mathcal{A}}\]
 where $\mathcal{A}$ is the compact subset where $\alpha$ lies, as in \Cref{lem:uniform_bounds} and for encoder, we define the family
\[\mathcal{F}_W = \{ w' \mapsto \phi(x; w') \}_{x \in \mathcal{X}},\]
where $\mathcal{X}$ is the support of $P$. These are the families of functions that our network \emph{sees}, hence the parametric distance of these family govern all the statistical concentration. We recall that the corresponding parametric distances between measures are
\[
d_{\mathcal{F}_{U,1}}(\mu, \nu) = \sup_{f \in \mathcal{F}_{U,1}} \norm{ \int f d\mu - \int f d\nu }_2 \text{ and } \,\,  d_{\mathcal{F}_{U,2}}(\mu, \nu) = \sup_{f \in \mathcal{F}_{U,2}} \norm{ \int f d\mu - \int f d\nu }_2,
\]
for the decoder, and 
\[
d_{\mathcal{F}_W}(\mu, \nu) = \sup_{f \in \mathcal{F}_W} \left| \int f d\mu - \int f d\nu \right|.
\]
for the encoder. We combine the parametric distance of both families $\mathcal{F}_{U,1}$ and $\mathcal{F}_{U,2}$ into a single metric $d_{\mathcal{F}_U}$ by taking the maximum, 
\[
d_{\mathcal{F}_U}(\mu, \nu) = \max \left( d_{\mathcal{F}_{U,1}}(\mu, \nu), d_{\mathcal{F}_{U,2}}(\mu, \nu) \right).
\]

\begin{lemma}[Lipschitz Continuity of Mean-Field Vector Fields via Parametric Distances] \label{lem:vector_lipschitz}
Under Assumptions \ref{it:bounded_enocder_decoder-domain}, \ref{it:bounded_data-domain} and \ref{it:bounded_activation}, the continuous vector fields $G_U$ and $G_W$ are Lipschitz continuous with respect to the measures under the parametric metrics. There exists a constant $L_G > 0$ such that for any measures we have
\begin{align*}
    \norm{G_U(u; \hat{\rho}_U, \hat{\rho}_W) - G_U(u; \rho_U, \rho_W)}_2 &\le L_G \Big( d_{\mathcal{F}_U}(\hat{\rho}_U, \rho_U) + d_{\mathcal{F}_W}(\hat{\rho}_W, \rho_W) \Big), \\
    \norm{G_W(w; \hat{\rho}_U, \hat{\rho}_W) - G_W(w; \rho_U, \rho_W)}_2 &\le L_G \Big( d_{\mathcal{F}_U}(\hat{\rho}_U, \rho_U) + d_{\mathcal{F}_W}(\hat{\rho}_W, \rho_W) \Big),
\end{align*}
where $L_G = \max\{M, 3M^2, 3M^3\}$.
\end{lemma}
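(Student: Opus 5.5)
We argue by a telescoping (hybrid) decomposition: we change the encoder measure first, then the decoder measure, and control each change pointwise in $x$ before taking $\E_x$. Write $\alpha(x)=\int\phi(x;w)\,d\rho_W$, $\hat\alpha(x)=\int\phi(x;w)\,d\hat\rho_W$, $\hat x(x)=\int\psi(\alpha(x);u)\,d\rho_U$, $J(x)=\int\partial_\alpha\psi(\alpha(x);u)\,d\rho_U$, and let $\hat{\hat x},\hat J$ be the analogous quantities built from $(\hat\rho_U,\hat\rho_W)$. The first observation is immediate from the definition of $\mathcal{F}_W$ (since $x\in\mathcal X$):
\[
|\hat\alpha(x)-\alpha(x)|\le d_{\mathcal{F}_W}(\hat\rho_W,\rho_W)\quad\text{for every } x\in\mathcal X .
\]
In particular $\hat\alpha(x),\alpha(x)\in\mathcal A$, using \Cref{lem:uniform_bounds}(ii).

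\textbf{Step 1 (output perturbation).} Split
\[
\hat{\hat x}(x)-\hat x(x)=\int\bigl[\psi(\hat\alpha;u)-\psi(\alpha;u)\bigr]\,d\hat\rho_U+\Bigl[\int\psi(\alpha;u)\,d\hat\rho_U-\int\psi(\alpha;u)\,d\rho_U\Bigr].
\]
The first bracket is at most $M|\hat\alpha-\alpha|\le M d_{\mathcal{F}_W}$ by \Cref{lem:uniform_bounds}(v) in $\alpha$. The second is at most $d_{\mathcal{F}_{U,1}}\le d_{\mathcal{F}_U}$, because $u\mapsto\psi(\alpha(x);u)$ belongs to $\mathcal{F}_{U,1}$. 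The same split for $\hat J-J$ uses the bound $|\partial^2_\alpha\psi|\le M$ from \Cref{lem:uniform_bounds}(iv) for the first piece and $\mathcal{F}_{U,2}$ for the second, giving $\|\hat J-J\|\le M d_{\mathcal{F}_W}+d_{\mathcal{F}_U}$. We also record the a priori bounds $\|\hat x-x\|\le 2M$ and $\|J\|\le M$.

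\textbf{Step 2 (decoder field).} Write
\[
G_U(u;\hat\cdot)-G_U(u;\cdot)=-\E_x\Bigl[\bigl(\nabla_u\psi(\hat\alpha;u)-\nabla_u\psi(\alpha;u)\bigr)^\top(\hat{\hat x}-x)+\nabla_u\psi(\alpha;u)^\top(\hat{\hat x}-\hat x)\Bigr].
\]
For the first term we need $\nabla_u\psi$ to be $M$-Lipschitz in $\alpha$, which holds because its entries $\sigma(u_2\alpha)$ and $u_1\alpha\sigma'(u_2\alpha)$ have bounded $\alpha$-derivatives on the compact domain. This is a mild extension of \Cref{lem:uniform_bounds}(vi), and we absorb it into $M$ exactly as in that lemma. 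The first term is then at most $2M^2 d_{\mathcal{F}_W}$, and the second is at most $M(Md_{\mathcal{F}_W}+d_{\mathcal{F}_U})$. Altogether the difference is bounded by $\max\{M,3M^2\}(d_{\mathcal{F}_U}+d_{\mathcal{F}_W})$.

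\textbf{Step 3 (encoder field).} Here $\nabla_w\phi(x;w)$ does not depend on the measures and satisfies $\|\nabla_w\phi\|\le M$. It therefore suffices to bound the scalar
\[
\langle\hat{\hat x}-x,\hat J\rangle-\langle\hat x-x,J\rangle=\langle\hat{\hat x}-\hat x,\hat J\rangle+\langle\hat x-x,\hat J-J\rangle .
\]
By Step 1 this is at most $M(Md_{\mathcal{F}_W}+d_{\mathcal{F}_U})+2M(Md_{\mathcal{F}_W}+d_{\mathcal{F}_U})$. Multiplying by the factor $M$ from $\nabla_w\phi$ gives a bound of the form $3M^3 d_{\mathcal{F}_W}+3M^2 d_{\mathcal{F}_U}\le L_G(d_{\mathcal{F}_U}+d_{\mathcal{F}_W})$, with $L_G=\max\{M,3M^2,3M^3\}$ (note $M>1$).

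The main obstacle is the encoder-induced shift $\hat\alpha\neq\alpha$. Because $\mathcal{F}_{U,1}$ and $\mathcal{F}_{U,2}$ only test at a fixed $\alpha$, this shift must first be transported through $\psi$, $\partial_\alpha\psi$ and $\nabla_u\psi$ using their Lipschitz dependence on $\alpha$, pointwise in $x$; only after that can the decoder measure be swapped. This ordering is what makes the argument work. The one point to check carefully is the $\alpha$-Lipschitzness of $\nabla_u\psi$, which is not stated verbatim in \Cref{lem:uniform_bounds}, together with the fact that all constants stay at or below the stated $L_G$.
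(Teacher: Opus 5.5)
Your proof is correct and is essentially the paper's argument. Both are hybrid decompositions with the same two controls. The bottleneck shift satisfies $|\hat\alpha(x)-\alpha(x)|\le d_{\mathcal{F}_W}(\hat\rho_W,\rho_W)$ and is transported through the $\alpha$-Lipschitz maps $\psi$, $\partial_\alpha\psi$, $\nabla_u\psi$. The decoder-measure swap is done at a frozen bottleneck value and controlled by $d_{\mathcal{F}_{U,1}}$ and $d_{\mathcal{F}_{U,2}}$. This gives the same intermediate bounds $M d_{\mathcal{F}_U}+3M^2 d_{\mathcal{F}_W}$ for $G_U$ and $3M^2 d_{\mathcal{F}_U}+3M^3 d_{\mathcal{F}_W}$ for $G_W$.

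The only difference is bookkeeping. The paper passes through $(\rho_U,\hat\rho_W)$ and swaps the decoder measure at $\hat\alpha$, whereas you pass through $(\hat\rho_U,\rho_W)$ and swap it at $\alpha$. So your claim that ``this ordering is what makes the argument work'' is too strong: either order works, since both bottlenecks lie in $\mathcal{A}$. Your flag is apt, though: the paper also uses the $\alpha$-Lipschitzness of $\nabla_u\psi$ without it being stated explicitly in \Cref{lem:uniform_bounds}.
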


\begin{proof}
For both vector fields, we decouple the simultaneous variation of the two measures by applying the triangle inequality. For any vector field $G$, we can add and subtract the mixed term $G(\rho_U, \hat{\rho}_W)$ and apply the triangle inequality. So, we can write
\[
\norm{G(\hat{\rho}_U, \hat{\rho}_W) - G(\rho_U, \rho_W)}_2 \le \underbrace{\norm{G(\hat{\rho}_U, \hat{\rho}_W) - G(\rho_U, \hat{\rho}_W)}_2}_{\text{varying } \rho_U \text{ only}} + \underbrace{\norm{G(\rho_U, \hat{\rho}_W) - G(\rho_U, \rho_W)}_2}_{\text{varying } \rho_W \text{ only}}.
\]
We now analyze these decoupled variations for both $G_U$ and $G_W$. Note that by \Cref{lem:uniform_bounds}, all feature maps, predictions, and their derivatives are uniformly bounded by a constant $M$.
 
First we consider the Lipschitz continuity of the decoder field $G_U$. Recall the decoder vector field for a given data point $x$ is
\[
G_U(u; \rho_U, \rho_W) = -\mathbb{E}_x \Big[ \left(\nabla_u \psi(\alpha(x; \rho_W); u)\right)^\top (\hat{x}(x; \rho_U, \rho_W) - x)\Big].
\]

We first try to bound the term $\norm{G(\hat{\rho}_U, \hat{\rho}_W) - G(\rho_U, \hat{\rho}_W)}$. Because the encoder measure $\hat{\rho}_W$ is fixed, the bottleneck feature $\hat{\alpha} = \int \phi d\hat{\rho}_W$ is constant. The only term inside the expectation depending on the decoder measure is the prediction $\hat{x}$. The difference in the vector fields is
\begin{align*}
    &\norm{G_U(u;\hat{\rho}_U, \hat{\rho}_W) - G_U(u;\rho_U, \hat{\rho}_W)}\\
    &= \norm{ -\mathbb{E}_x \Big[ \left(\nabla_u \psi(\hat{\alpha}; u)\right)^\top \big(\hat{x}(\hat{\rho}_U) - x\big) -  \left( \nabla_u \psi(\hat{\alpha}; u) \right) ^\top \big(\hat{x}(\rho_U) - x\big)\Big]  } \\
    &\le \mathbb{E}_x \left[  \norm{\hat{x}(\hat{\rho}_U) - \hat{x}(\rho_U)}_2 \norm{\nabla_u \psi(\hat{\alpha}; u)}_{op}\right].
\end{align*}

Notice that $\hat{x}(\hat{\rho}_U) - \hat{x}(\rho_U) = \int \psi(\hat{\alpha}; u') d(\hat{\rho}_U - \rho_U)(u')$. Because the function $u' \mapsto \psi(\hat{\alpha}; u')$ belongs exactly to our parametric class $\mathcal{F}_{U,1}$, this integral difference is bounded precisely by $d_{\mathcal{F}_{U,1}}(\hat{\rho}_U, \rho_U)$, which in turn is bounded by the maximum distance $d_{\mathcal{F}_U}(\hat{\rho}_U, \rho_U)$. Since by Lemma \ref{lem:uniform_bounds} we have $\norm{\nabla_u \psi}_{op} \le M$, the variation is bounded.
\[
\norm{G_U(u; \hat{\rho}_U, \hat{\rho}_W) - G_U(u; \rho_U, \hat{\rho}_W)}_2 \le M d_{\mathcal{F}_U}(\hat{\rho}_U, \rho_U).
\]

Next, we bound the variation with respect to the encoder measure. Keeping $\rho_U$ fixed, the bottleneck shifts from $\alpha = \int \phi\, d\rho_W$ to $\hat{\alpha} = \int \phi\, d\hat{\rho}_W$. The function $w \mapsto \phi(x; w)$ belongs to $\mathcal{F}_W$, so the bottleneck shift is exactly bounded by $|\hat{\alpha} - \alpha| \le d_{\mathcal{F}_W}(\hat{\rho}_W, \rho_W)$. Adding and subtracting the mixed product term $(\nabla_u \psi(\alpha; u))^\top (\hat{x}(\hat{\alpha}) - x)$ inside the expectation gives
\begin{align*}
    &\norm{G_{U}(u;\rho_U, \hat{\rho}_W) - G_{U}(u;\rho_U, \rho_W)}_2\\
    &\leq \E_{x}\left[\norm{ \left( \nabla_u \psi(\hat{\alpha}; u)\right)^\top (\hat{x}(\hat{\alpha}) - x) - \left(\nabla_u \psi(\alpha; u)\right)^\top (\hat{x}(\alpha) - x) }_2 \right]\\
    &= \E_{x}\left[\norm{  \Big( \nabla_u \psi(\hat{\alpha}; u) - \nabla_u \psi(\alpha; u) \Big)^\top (\hat{x}(\hat{\alpha}) - x) +  (\nabla_u \psi(\alpha; u))^\top \left( \hat{x}(\hat{\alpha}) - \hat{x}(\alpha) \right) }_2\right] \\
    &\le \E_{x}\left[  \norm{\nabla_u \psi(\hat{\alpha}; u) - \nabla_u \psi(\alpha; u)}_{op}\norm{\hat{x}(\hat{\alpha}) - x}_2 +  \norm{\nabla_u \psi(\alpha; u)}_{op}\norm{\hat{x}(\hat{\alpha}) - \hat{x}(\alpha)}_2\right].
\end{align*}

By Lemma \ref{lem:uniform_bounds}, $\norm{\hat{x} - x}_2 \le 2M$, $\norm{\nabla_u \psi}_{op} \le M$, and $\nabla_u \psi$ is $M$-Lipschitz with respect to $\alpha$, and furthermore, as $\psi$ is $M$-Lipschitz, we have
\begin{align*}
    \norm{\hat{x}(\hat{\alpha}) - \hat{x}(\alpha)}_2&=\norm{\int \psi(\hat{\alpha}; u')\,  d\rho_U(u')-\int \psi(\alpha; u') \, d\rho_U(u')}_2\\
    &=\norm{\int \left(\psi(\hat{\alpha}; u')-\psi(\alpha; u')\right) \, d\rho_U(u')}_2\\
    &\leq \int \norm{\left(\psi(\hat{\alpha}; u')-\psi(\alpha; u')\right)}_2 \, d\rho_U(u')\\
    &\overset{(*)}{\leq} \int M |\hat{\alpha}-\alpha| \, d\rho_U(u') = M |\hat{\alpha}-\alpha|,
\end{align*}
Substituting these bounds yields a direct inequality.
\begin{align*}
    &\norm{G_U(u; \rho_U, \hat{\rho}_W) - G_U(u; \rho_U, \rho_W)}_2 \\& \le \mathbb{E}_x \Big[ (M |\hat{\alpha} - \alpha|) (2M) + (M) (M |\hat{\alpha} - \alpha|) \Big] \\&= 3M^2 |\hat{\alpha} - \alpha| \\&\le 3M^2 d_{\mathcal{F}_W}(\hat{\rho}_W, \rho_W).
\end{align*}

Combining these gives the total bound for the decoder vector field.
\[
\norm{G_U(\hat{\rho}_U, \hat{\rho}_W) - G_U(\rho_U, \rho_W)}_2 \le M d_{\mathcal{F}_U}(\hat{\rho}_U, \rho_U) + 3M^2 d_{\mathcal{F}_W}(\hat{\rho}_W, \rho_W).
\]

Second, we establish the Lipschitz continuity of the encoder field $G_W$. Recall the encoder vector field
\[
G_W(w; \rho_U, \rho_W) = 
-\mathbb{E}_x \left[ \left \langle \hat{x}(x; \rho_U, \rho_W) - x, \int \frac{\partial \psi(\alpha (x; \rho_W); u')}{\partial \alpha} \, d\rho_U(u')  \right \rangle \nabla_w \phi(x; w) \right].
\]
We first try to bound the term $\norm{G(\hat{\rho}_U, \hat{\rho}_W) - G(\rho_U, \hat{\rho}_W)}$. As we are keeping $\hat{\rho}_W$ fixed, the bottleneck $\hat{\alpha}$ is fixed. The terms depending on the decoder measure are the prediction $\hat{x}(\rho_U)$ and the integral $\int \left( \partial \psi(\hat{\alpha} ; u')/\partial \alpha\ \right)\, d\rho_U(u')$. The final gradient $\nabla_w \phi$ is a constant with respect to $\rho_U$. So, we can write
\begin{align*}
    &\norm{G_W(w; \hat{\rho}_U, \hat{\rho}_W) - G_W(w; \rho_U, \hat{\rho}_W)}_2\\
    = & \left\lVert \mathbb{E}_x \left[ \left \langle \hat{x}(\hat{\rho}_U) - x, \int \frac{\partial \psi}{\partial \alpha} \, d\hat{\rho}_U(u')  \right \rangle \nabla_w \phi(x; w) \right]  \right.\\
    &-\left.
    \mathbb{E}_x \left[ \left \langle \hat{x}(\rho_U) - x, \int \frac{\partial \psi}{\partial \alpha} \, d\rho_U(u')  \right \rangle \nabla_w \phi(x; w) \right]\right\rVert_2\\
    \leq &   \mathbb{E}_x \left[ \underbrace{ \left|\left \langle \hat{x}(\hat{\rho}_U) - x, \int \frac{\partial \psi}{\partial \alpha} \, d\hat{\rho}_U(u')  \right \rangle -\left \langle \hat{x}(\rho_U) - x, \int \frac{\partial \psi}{\partial \alpha} \, d\rho_U(u')  \right \rangle \right|}_{(**)} \norm{\nabla_w\phi(x; w)}_2 \right]\\
\end{align*}
We add and subtract $\left \langle \hat{x}(\hat{\rho}_U) - x, \int \frac{\partial \psi}{\partial \alpha}\, d\rho_U \right \rangle$ to $(**)$ to isolate the differences. Using triangle inequality and bilinearity of the inner product we get

\begin{align*}
    &\left| \left\langle \hat{x}(\hat{\rho}_U) - x, \int \frac{\partial \psi}{\partial \alpha}(\hat{\alpha})\, d\hat{\rho}_U \right\rangle - \left\langle \hat{x}(\rho_U) - x, \int \frac{\partial \psi}{\partial \alpha}(\hat{\alpha})\, d\rho_U \right\rangle \right| \\
    &\le \norm{\hat{x}(\hat{\rho}_U) - x}_2 \norm{ \int \frac{\partial \psi}{\partial \alpha}(\hat{\alpha}) d(\hat{\rho}_U - \rho_U) }_2 + \norm{\hat{x}(\hat{\rho}_U) - \hat{x}(\rho_U)}_2 \norm{ \int \frac{\partial \psi}{\partial \alpha}(\hat{\alpha}) d\rho_U }_2.
\end{align*}
The functions $u' \mapsto \frac{\partial \psi}{\partial \alpha}(\hat{\alpha}; u')$ and $u' \mapsto \psi(\hat{\alpha}; u')$ belong to the classes $\mathcal{F}_{U,2}$ and $\mathcal{F}_{U,1}$ respectively. Therefore, both integral differences are bounded by definition by $d_{\mathcal{F}_{U,2}}(\hat{\rho}_U, \rho_U)$ and $d_{\mathcal{F}_{U,1}}(\hat{\rho}_U, \rho_U)$, and thus each is bounded by the maximum distance $d_{\mathcal{F}_U}(\hat{\rho}_U, \rho_U)$. Applying the uniform bounds (Lemma \ref{lem:uniform_bounds}) where $\norm{\hat{x}-x}_2 \le 2M$ and $\norm{\int \frac{\partial \psi}{\partial \alpha}}_2 \le M$, the inner product difference is bounded by $3M d_{\mathcal{F}_U}(\hat{\rho}_U, \rho_U)$. Since $\norm{\nabla_w \phi}_2 \le M$,  we get
\begin{align*}
    \norm{G_W(w; \hat{\rho}_U, \hat{\rho}_W) - G_W(w; \rho_U, \hat{\rho}_W)}_2 \leq 3M^2 d_{\mathcal{F}_U}(\hat{\rho}_U, \rho_U).
\end{align*}

Next we bound the term $\norm{G_{W}(w;\rho_U, \hat{\rho}_W) - G_{W}(w;\rho_U, \rho_W)}$. The decoder measure $\rho_U$ is fixed, but the bottleneck shifts from $\alpha$ to $\hat{\alpha}$. This shift affects both $\hat{x}(\alpha)$ and the expected derivative $\int \frac{\partial \psi}{\partial \alpha}(\alpha; u') d\rho_U$. The final gradient $\nabla_w \phi$ is constant with respect to $\rho_W$. Like before, we can write
\begin{align*}
   &\norm{G_{W}(w;\rho_U, \hat{\rho}_W) - G_{W}(w;\rho_U, \rho_W)}_2\\
   =& \left\| \E_{x}\left[ \left\langle \hat{x}(\hat{\alpha}) - x, \int \frac{\partial \psi}{\partial \alpha}(\hat{\alpha})\, d\rho_U \right\rangle \nabla_w \phi(x; w) - \left\langle \hat{x}(\alpha) - x, \int \frac{\partial \psi}{\partial \alpha}(\alpha)\, d\rho_U \right\rangle \nabla_w \phi(x; w) \right] \right\|_2 \\
   \leq & \E_{x}\left[\underbrace{\left| \left\langle \hat{x}(\hat{\alpha}) - x, \int \frac{\partial \psi}{\partial \alpha}(\hat{\alpha})\, d\rho_U \right\rangle - \left\langle \hat{x}(\alpha) - x, \int \frac{\partial \psi}{\partial \alpha}(\alpha)\, d\rho_U \right\rangle \right|}_{(***)} \norm{ \nabla_w \phi(x; w) }_2\right]
\end{align*}
We add and subtract the mixed term to $(***)$ exactly as before
\begin{align*}
    &\left| \left\langle \hat{x}(\hat{\alpha}) - x, \int \frac{\partial \psi}{\partial \alpha}(\hat{\alpha})\, d\rho_U \right\rangle - \left\langle \hat{x}(\alpha) - x, \int \frac{\partial \psi}{\partial \alpha}(\alpha)\, d\rho_U \right\rangle \right| \\
    \le & \norm{\hat{x}(\hat{\alpha}) - x}_2 \norm{ \int \Big( \frac{\partial \psi}{\partial \alpha}(\hat{\alpha}) - \frac{\partial \psi}{\partial \alpha}(\alpha) \Big) d\rho_U }_2 + \norm{ \hat{x}(\hat{\alpha}) - \hat{x}(\alpha) }_2 \norm{ \int \frac{\partial \psi}{\partial \alpha}(\alpha)\, d\rho_U }_2.
\end{align*}

By Lemma \ref{lem:uniform_bounds}, $\frac{\partial \psi}{\partial \alpha}$ is $M$-Lipschitz with respect to $\alpha$ due to bounded second derivatives. Substituting the Lipschitz bounds yields $(2M)(M|\hat{\alpha} - \alpha|) + (M|\hat{\alpha} - \alpha|)(M) = 3M^2 |\hat{\alpha} - \alpha| \le 3M^2 d_{\mathcal{F}_W}(\hat{\rho}_W, \rho_W)$. As $\norm{\nabla_w \phi}_2 \le M$, we have
\begin{align*}
 \norm{G_W(w; \hat{\rho}_U, \hat{\rho}_W) - G_W(w; \rho_U, \rho_W)}_2 \le 3M^3 d_{\mathcal{F}_W}(\hat{\rho}_W, \rho_W)
\end{align*}

Combining the decoupled bounds, we conclude that both vector fields are globally Lipschitz with respect to the parametric distances.
\begin{align*}
    \norm{G_U(u; \hat{\rho}_U, \hat{\rho}_W) - G_U(u; \rho_U, \rho_W)}_2 &\le \max\{M, 3M^2\} \Big( d_{\mathcal{F}_U}(\hat{\rho}_U, \rho_U) + d_{\mathcal{F}_W}(\hat{\rho}_W, \rho_W) \Big), \\
    \norm{G_W(w; \hat{\rho}_U, \hat{\rho}_W) - G_W(w; \rho_U, \rho_W)}_2 &\le \max\{3M^2, 3M^3\} \Big( d_{\mathcal{F}_U}(\hat{\rho}_U, \rho_U) + d_{\mathcal{F}_W}(\hat{\rho}_W, \rho_W) \Big).
\end{align*}
Taking $L_G = \max\{M, 3M^2, 3M^3\}$ simultaneously satisfies both conditions and completes the proof.

\end{proof}

We now consider the coupled trajectory bounds, which we use to prove \Cref{thm:main_dynamics}.

Let $z_k = x_k$ denote the $k$-th data sample. We define two coupled sets of trajectories, all sharing the same initializations $u_j^0 \sim \rho_{U,0}$ and $w_i^0 \sim \rho_{W,0}$.
\begin{itemize}
    \item \textbf{Continuous Ideal Trajectories} ($\tilde{u}_j^t, \tilde{w}_i^t$): Continuous trajectories are governed by the continuous ideal non-linear Vlasov-McKean dynamics. These particles are driven by the true, infinite-width population measures $\rho_U^s$ and $\rho_W^s$. So, we can write
    \begin{align*}
        \tilde{u}_j^t &= u_j^0 +  \int_0^t \xi(s) G_U(\tilde{u}_j^s; \rho_{U,s}, \rho_{W,s})\, ds ,\\
        \tilde{w}_i^t &= w_i^0 +  \int_0^t \xi(s) G_W(\tilde{w}_i^s; \rho_{U,s}, \rho_{W,s})\, ds.
    \end{align*}
    
    \item \textbf{Discrete SGD Trajectories} ($u_j^t, w_i^t$): We now consider discrete dynamics, where 
$
u_j^t = u_j^{k\epsilon} = u_j^k, 
 \text{for } t \in [k\epsilon, (k+1)\epsilon),
$
driven by discrete single-sample gradients \(F_U\) and \(F_W\). The discrete updates are
    \begin{align*}
        u_j^{k+1} &= u_j^k + \epsilon\xi(k\epsilon)F_{U,j}(u_j^k, \theta_{W,k}; z_{k+1}), \\
        w_i^{k+1} &= w_i^k + \epsilon\xi(k\epsilon)F_{W,i}(w_i^k, \theta_{U,k}; z_{k+1}), 
    \end{align*}
\end{itemize}
where $\theta_{W,k}\in \mathbb{R}^{dN}$ and $\theta_{U,k}\in \mathbb{R}^{dN}$ denote the position of all encoder and decoder neurons at time $k$.
Before going further, we establish the temporal structure of the discrete trajectory. The total continuous time interval is $[0, T]$. The SGD algorithm updates the particles at discrete time steps $t_k = k\epsilon$. Therefore, the total number of discrete update steps within the interval $[0, T]$ is finite and is equal to
\[
K_T = \lfloor T/\epsilon \rfloor.
\]
Because the discrete trajectories $u_j^{[s]}$ and $w_i^{[s]}$ remain constant between updates (i.e., for $s \in [k\epsilon, (k+1)\epsilon)$), evaluating the supremum of any function of the discrete trajectory over continuous time $s \in [0, T]$ is equivalent to evaluating the maximum over the $K_T$ discrete steps. Consequently, whenever we require a probabilistic concentration bound to hold uniformly over the entire continuous time interval $[0, T]$, we only need to apply a union bound over these $K_T$ distinct states. 

To compare the discrete SGD steps to the continuous Vlasov-McKean ODEs, we must represent both processes in the exact same format. We do this by rewriting the discrete SGD sum as a continuous integral using the delayed time-index $[s] \equiv \epsilon \lfloor s / \epsilon \rfloor$. Because the network weights remain perfectly frozen during the interval between updates, integrating a constant gradient over an interval of width $\epsilon$ simply multiplies it by $\epsilon$. This allows us to write the discrete SGD update rule as  $\int_{k\epsilon}^{(k+1)\epsilon} F_{U,i}([s]) ds = \epsilon F_{U,i} (k\epsilon)$ and  $\int_{k\epsilon}^{(k+1)\epsilon} F_{W,j}([s]) ds = \epsilon F_{W,j} (k\epsilon)$ for both decoder and encoder particle.

With this continuous-time embedding established, we now bound the distance between these coupled trajectories.
\begin{lemma}[Coupled Trajectory Bound] \label{lem:trajectory}
Let $\Delta_U(t) = \max_{j \le N} \sup_{s \le t} \norm{\tilde{u}_j^s - u_j^{[s]}}^2$ and $\Delta_W(t) = \max_{i \le N} \sup_{s \le t} \norm{\tilde{w}_i^s - w_i^{[s]}}^2$. Define the joint maximal trajectory deviation as $\Delta(t) = \Delta_U(t) + \Delta_W(t)$.

With probability at least $1 - 4e^{-\delta^2}$, there exist constants $C_{\operatorname{poly}}= 10^5M^{16}\max\{ L_{\xi}^2M_{\xi}^4, M^2_{\xi}\sqrt{d}\}$ and $C_{\operatorname{exp}}= 1200 M_{\xi}^2 M^{10}$ such that for all $T \ge 0$,
\[
\Delta(T) \le  C_{\operatorname{poly}} (T \lor T^2) \left( \epsilon + \frac{1}{N} \right) \Big[ d+1 + \log \big( N(T/\epsilon \lor 1) \big) + \delta^2 \Big] \exp\Big( C_{\operatorname{exp}} T^2 \Big).\]
\end{lemma}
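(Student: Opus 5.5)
We follow the decomposition announced in the proof sketch of \Cref{thm:main_dynamics}. We also introduce the ideal empirical measures $\tilde\rho_{U,s}=\frac1N\sum_j\delta_{\tilde u_j^s}$ and $\tilde\rho_{W,s}=\frac1N\sum_i\delta_{\tilde w_i^s}$ built from the continuous ideal trajectories. Since the initializations are i.i.d.\ and the ideal particles do not interact (they are driven by the deterministic $\rho_{U,s},\rho_{W,s}$), for each fixed $s$ the points $\tilde u_j^s$ are i.i.d.\ from $\rho_{U,s}$, and likewise $\tilde w_i^s$ from $\rho_{W,s}$. For a decoder particle we subtract the two integral representations and write
\[
u_j^{[t]}-\tilde u_j^t=\int_0^{[t]}\!\xi([s])F_{U,j}([s])\,ds-\int_0^t\!\xi(s)G_U(\tilde u_j^s;\rho_{U,s},\rho_{W,s})\,ds .
\]
We add and subtract $\xi([s])G_U(u_j^{[s]};\hat\rho_{U,[s]},\hat\rho_{W,[s]})$, $\xi([s])G_U(\tilde u_j^{[s]};\tilde\rho_{U,[s]},\tilde\rho_{W,[s]})$ and $\xi([s])G_U(\tilde u_j^{[s]};\rho_{U,[s]},\rho_{W,[s]})$. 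This produces four pieces: $E_{\rm mart}$ (the sum of $\epsilon\xi(k\epsilon)(F_{U,j}-\E[F_{U,j}\mid\mathcal F_k])$), $E_{\rm track}$ (the field on SGD versus ideal particles and measures), $E_{\rm meas}$ (the field on the ideal empirical versus true measures), and $E_{\rm time}$ (the freezing $[s]$ versus $s$, together with the last partial interval). The same decomposition is applied to encoder particles with $G_W$ and $F_W$. We use that $\E[F_{U,j}\mid\mathcal F_k]=G_U(u_j^k;\hat\rho_{U,k},\hat\rho_{W,k})$, as noted before the theorem.

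\textbf{Deterministic terms.} Write $\tilde s=[s]$. For $E_{\rm track}$, the vector field $G_U(\cdot;\mu,\nu)$ is Lipschitz in the particle with constant $O(M^2)$, because $\nabla_u\psi$ is $M$-Lipschitz and the residual is at most $2M$. For the measure argument we use \Cref{lem:vector_lipschitz}, together with the observation that $d_{\mathcal F_U}(\hat\rho_{U,k},\tilde\rho_{U,k})\le M\max_j\|u_j^k-\tilde u_j^{k\epsilon}\|$ and $d_{\mathcal F_W}(\hat\rho_{W,k},\tilde\rho_{W,k})\le M\max_i\|w_i^k-\tilde w_i^{k\epsilon}\|$, since every test function is $M$-Lipschitz in the particle. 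These give $\|E_{\rm track}(t)\|\le CM_\xi\int_0^t\sqrt{\Delta(s)}\,ds$. Applying Cauchy--Schwarz, $\|E_{\rm track}\|^2\le C M_\xi^2\,t\int_0^t\Delta(s)\,ds$; the factor $t$ is what produces $e^{CT^2}$. For $E_{\rm time}$, $|\xi(s)-\xi([s])|\le L_\xi\epsilon$, and the ideal trajectory moves at most $M_\xi M^3\epsilon$ per step. The measures $\rho_{\cdot,s}$ move in $d_{\mathcal F}$ by $O(\epsilon)$, since they are pushforwards along a flow with bounded velocity, by \Cref{lem:pushforward_bound} and \Cref{lem:parametric_vs_bl}. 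Hence $\|E_{\rm time}\|\le C(L_\xi+M_\xi)T\epsilon$, i.e.\ $O(T^2\epsilon^2)\le O(T^2\epsilon)$ after squaring.

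\textbf{Stochastic terms.} For $E_{\rm mart}$, each increment is bounded by $2\epsilon M_\xi M^3$, so \Cref{lem:azuma-hoeffding} in dimension $D$ applies. A union bound over the $2N$ particles (the maximum over $k\le K_T$ is built into the lemma) gives, with probability $1-e^{-\delta^2}$ per family, $\max\|E_{\rm mart}\|^2\le CM_\xi^2T\epsilon\,(D+\log N+\delta^2)$. For $E_{\rm meas}$, at each grid time $k\epsilon$ we apply \Cref{lem:empirical_concentration} to the i.i.d.\ ideal particles. It is applied to the classes $\mathcal F_{U,1},\mathcal F_{U,2}$ (indexed by $\alpha\in\mathcal A$, so a one-dimensional index) and to $\mathcal F_W$ (indexed by $x\in\mathcal X\subset\mathbb R^d$); this is the source of the $d$- and $\sqrt d$-dependence in $C_{\rm poly}$. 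A union bound over the $K_T+1$ grid times, with confidence $e^{-\delta^2}/(K_T+1)$, yields $\sup_k d_{\mathcal F}\lesssim N^{-1/2}\sqrt{d\log N+\log(T/\epsilon\vee1)+\delta^2}$. Combining this with \Cref{lem:vector_lipschitz} gives $\|E_{\rm meas}\|^2\le CL_G^2M_\xi^2T^2N^{-1}[\cdots]$. In total four events are used (martingale $U$ and $W$, measure $U$ and $W$), which matches the probability $1-4e^{-\delta^2}$.

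\textbf{Closing.} Using $(a+b+c+d)^2\le4(a^2+b^2+c^2+d^2)$ and taking the supremum over $s\le t$ and the maximum over particles, we obtain
\[
\Delta(t)\le A_T+C_{\rm exp}\,T\int_0^t\Delta(s)\,ds,\qquad A_T=C_{\rm poly}(T\vee T^2)\Big(\epsilon+\frac1N\Big)\big[D+\log(N(T/\epsilon\vee1))+\delta^2\big].
\]
Here $A_T$ is nondecreasing in $t$ up to $T$. Gronwall's inequality then gives $\Delta(T)\le A_Te^{C_{\rm exp}T^2}$. The constants are tracked only loosely, and the stated $C_{\rm poly},C_{\rm exp}$ dominate the products of the powers of $M$, $L_G=3M^3$ and $M_\xi$ that appear. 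One point must be handled to make this valid for all $T$ simultaneously rather than for a fixed horizon: either the bound is read as holding for each fixed $T$ (the union bound already depends on $T/\epsilon$), or we peel over dyadic $T$. I would take the fixed-$T$ reading.

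The step I expect to be the main obstacle is $E_{\rm track}$ for the encoder. The encoder field contains the product of the residual $\hat x-x$ with the averaged decoder Jacobian, and both factors depend on the encoder particles through the bottleneck $\hat\alpha_N$ and on all decoder particles. The bound must therefore combine the Lipschitz dependence on the particle itself with the dependence on both empirical measures. Converting particle deviations into $d_{\mathcal F}$-deviations through the $M$-Lipschitz test classes is precisely what lets \Cref{lem:vector_lipschitz} be reused here. The only additional care is that the bottleneck evaluated at empirical encoder measures stays within $\mathcal A$, which holds because $|\phi|\le M$.
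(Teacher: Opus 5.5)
Your proposal is correct and is essentially the paper's argument: the same split into time-discretization, tracking, measure and martingale terms, handled with the same tools. Your $E_{\operatorname{track}}$ simply absorbs the paper's bridge-measure ``particle shift'' piece, your $E_{\operatorname{meas}}$ is its concentration term $Q_{U,[s]},Q_{W,[s]}$, and the tools are particle-Lipschitzness with \Cref{lem:vector_lipschitz}, \Cref{lem:empirical_concentration} on the i.i.d.\ ideal particles with a union bound over grid times, \Cref{lem:azuma-hoeffding} with a union bound over particles, and Gronwall with a coefficient proportional to $T$ giving $e^{C_{\operatorname{exp}}T^2}$.

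Your handling of the last partial interval $[\,[t],t\,]$ and your fixed-$T$ reading of the quantifier are slightly more careful than the paper. As in the paper, though, the numerical constants are asserted rather than checked; for instance, the $d\log N$ produced by covering $\mathcal{F}_W$ cannot be absorbed into $[d+1+\log N]$ by a constant that carries only $\sqrt d$.
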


\begin{proof}

The exact position of the continuous decoder particle at time $t$ is given by
\[
\tilde{u}_j^t = u_j^0 +  \int_0^t \xi(s) G_U(\tilde{u}_j^s; \rho_{U,s}, \rho_{W,s}) ds.
\]
The position of the discrete SGD decoder particle evaluated at the floored time $[t]$, can be written exactly as
\[
u_j^{[t]} = u_j^0 +  \int_0^t \xi([s]) F_{U,j}(u_j^{[s]}, W^{[s]}; z_{([s]/\epsilon) + 1}) ds.
\]
Subtracting the two, the shared initial positions $u_j^0$ cancel out exactly. So, the squared error is the squared norm of the integral difference. So, we can write
\[
\norm{\tilde{u}_j^t - u_j^{[t]}}^2 = \norm{  \int_0^t \Big( \xi(s) G_U(\tilde{u}_j^s; \rho_{U,s}, \rho_{W,s}) - \xi([s]) F_{U,j}(u_j^{[s]}, W^{[s]}; z_{[s]/\epsilon + 1}) \Big) ds }^2.
\]
To break this into manageable parts, we add and subtract three terms inside the integral as follows
\begin{enumerate}
    \item $\xi([s]) G_U(\tilde{u}_j^{[s]}; \rho_{U,[s]}, \rho_{W,[s]})$ \quad (Time discretization term)
    \item $\xi([s]) G_U(u_j^{[s]}; \rho_{U,[s]}, \rho_{W,[s]})$ \quad (Spatial tracking term)
    \item $\xi([s]) G_U(u_j^{[s]}; \hat{\rho}_{U,[s]}, \hat{\rho}_{W,[s]})$ \quad (Measure gap term, evaluating field on discrete measures $\hat{\rho}$).
\end{enumerate}

By using algebraic expansion, for any four vectors $a,b,c,d$, we have $\norm{a+b+c+d}^2 \le 4(\norm{a}^2 + \norm{b}^2 + \norm{c}^2 + \norm{d}^2)$. Applying this, we split the error into four distinct integrals as follows
\[
\norm{\tilde{u}_j^t - u_j^{[t]}}^2 \le 16 \left( E_{\operatorname{time}}(t) + E_{\operatorname{track}}(t) + E_{\operatorname{meas}}(t) + E_{\operatorname{mart}}(t) \right),
\]
where 
\begin{align*}
    &E_{\operatorname{time}}(t) :=  \norm{ \int_0^t \left( \xi(s) G_U(\tilde{u}_j^s; \rho_{U,s}, \rho_{W,s}) - \xi([s]) G_U(\tilde{u}_j^{[s]}; \rho_{U,[s]}, \rho_{W,[s]}) \right) \,ds }^2 \\
    &E_{\operatorname{track}}(t) :=  \norm{ \int_0^t \left(\xi([s]) G_U(\tilde{u}_j^{[s]}; \rho_{U,[s]}, \rho_{W,[s]}) - \xi([s]) G_U(u_j^{[s]}; \rho_{U,[s]}, \rho_{W,[s]}) \right) \,ds }^2 \\
    & E_{\operatorname{meas}}(t) :=  \norm{\int_0^t \left(\xi([s]) G_U(u_j^{[s]}; \rho_{U,[s]}, \rho_{W,[s]}) - \xi([s]) G_U(u_j^{[s]}; \hat{\rho}_{U,[s]}, \hat{\rho}_{W,[s]}) \right) \,ds}^2 \\
    &E_{\operatorname{mart}(t)} := \norm{ \int_0^t \xi([s]) \Big( G_U(u_j^{[s]}; \hat{\rho}_{U,[s]}, \hat{\rho}_{W,[s]}) - F_{U,j}\big(u_j^{[s]}, W^{[s]}; z_{([s]/\epsilon) + 1}\big) \Big) \,ds }^2.
\end{align*}

To bound the time discretization error $E_{\operatorname{time}}(t)$, we must control the difference between the vector fields evaluated at the continuous time $s$ and the floored time $[s]$. By applying the Cauchy-Schwarz inequality we can conclude that
\begin{align*}
E_{\operatorname{time}}(t) \le t\int_0^t \norm{ \xi(s) G_U(\tilde{u}_j^s; \rho_{U,s}, \rho_{W,s}) - \xi([s]) G_U(\tilde{u}_j^{[s]}; \rho_{U,[s]}, \rho_{W,[s]}) }_2^2 \,ds.
\end{align*}

To simplify notation, let $G(s) = G_U(\tilde{u}_j^s; \rho_{U,s}, \rho_{W,s})$ and $G([s]) = G_U(\tilde{u}_j^{[s]}; \rho_{U,[s]}, \rho_{W,[s]})$. By adding and subtracting the intermediate term $\xi([s])G(s)$ inside the norm and applying the triangle inequality, we get that
\begin{align*}
  \norm{\xi(s) G(s) - \xi([s]) G([s])}_2 &=\norm{\xi(s) G(s) - \xi([s])G(s)+ \xi([s])G(s)-\xi([s]) G([s])}_{2} \\
  &\le|\xi(s)-\xi([s])| \norm{G(s)}_2 + |\xi([s])| \norm{G(s) - G([s])}_2.  
\end{align*} 

First, we establish the maximum uniform bound $M_G$ on the continuous vector fields. We know that
\begin{align*}
G_U(u; \rho_U, \rho_W) &= -\mathbb{E}_{x \sim P} \left[ \big(\nabla_u \psi(\alpha(x); u)\big)^\top (\hat{x} - x) \right], \\
G_W(w; \rho_U, \rho_W) &= -\mathbb{E}_{x \sim P} \left[ \big(\nabla_w \phi(x; w)\big)^\top \left( \int \nabla_\alpha \psi(\alpha(x); u') d\rho_U(u') \right)^\top (\hat{x} - x) \right],
\end{align*}
where the bottleneck is $\alpha(x) = \int \phi(x; w) d\rho_W(w)$ and the mean-field output is $\hat{x} = \int \psi(\alpha(x); u') d\rho_U(u')$. 
By \Cref{lem:uniform_bounds}, the data and activations are bounded, giving the residual bound $\norm{\hat{x} - x}_2 \le \norm{\hat{x}}_2 + \norm{x}_2 \le 2M$. Again by using \Cref{lem:uniform_bounds}, we conclude that the operator norms of the spatial derivatives are bounded by $M$. So, we can compute the maximum bounds for both vector fields. So, we can conclude that
\begin{align*}
\norm{G_U}_2 &\le \mathbb{E}_{x} \left[ \norm{\nabla_u \psi}_{\mathrm{op}} \norm{\hat{x} - x}_2 \right] \le M(2M) = 2M^2, \\
\norm{G_W}_2 &\le \mathbb{E}_{x} \left[ \norm{\nabla_w \phi}_{\mathrm{op}} \norm{\textstyle\int \nabla_\alpha \psi d\rho_U}_{\mathrm{op}} \norm{\hat{x} - x}_2 \right] \le M(M)(2M) = 2M^3.
\end{align*}
Thus, the vector fields for both particles are uniformly bounded by $M_G \coloneqq \max(2M^2, 2M^3)$. As we assume that the learning rate is Lipschitz continuous with Lipschitz constant being $L_\xi$, so this implies $|\xi(s)-\xi([s])| \le L_\xi \epsilon$ because $|s - [s]| \le \epsilon$. Therefore, the first term $|\xi(s)-\xi([s])| \norm{G(s)}_2$ is bounded by $L_\xi M_G \epsilon$.

For the second term, we bound the learning rate by its maximum magnitude $|\xi([s])| \le M_\xi$. To bound the vector field difference $\norm{G(s) - G([s])}_2$, we add and subtract $G_U(\tilde{u}_j^{[s]}; \rho_U^s, \rho_W^s)$. After doing this and applying triangle inequality we get
\begin{align} \label{eq:vf_diff}
\norm{G(s) - G([s])}_2 \le& \norm{G_U(\tilde{u}_j^s; \rho_{U,s}, \rho_{W,s}) - G_U(\tilde{u}_j^{[s]}; \rho_{U,s}, \rho_{W,s})}_2 \nonumber\\
&+ \norm{G_U(\tilde{u}_j^{[s]}; \rho_{U,s}, \rho_{W,s}) - G_U(\tilde{u}_j^{[s]}; \rho_{U,[s]}, \rho_{W,[s]})}_2.
\end{align}

To bound the first term in Equation \ref{eq:vf_diff}, we use the fact that $G_U(u; \rho_{U,s}, \rho_{W,s})$ is a Lipschitz function with respect to $u$. As the mean-field residual $(\hat{x}-x)$ depends entirely on the global measure $\rho_U$ and not on the individual particle $u$, it acts as a constant with respect to $u$. By using \Cref{lem:uniform_bounds}, we know that $\nabla_u \psi$ is $M$-Lipschitz in $u$. So, we can conclude that
\begin{align*}
\norm{G_U(u_1) - G_U(u_2)}_2 &\le \mathbb{E}_{x} \left[ \norm{\nabla_u \psi(u_1) - \nabla_u \psi(u_2)}_{\mathrm{op}} \norm{\hat{x} - x}_2 \right] \\
&\le \mathbb{E}_{x} \left[ \left(M \norm{u_1 - u_2}_2\right) (2M) \right] \\
&= 2M^2 \norm{u_1 - u_2}_2.
\end{align*}
Thus, we define the parameter Lipschitz constant explicitly as $L_u \coloneqq 2M^2$, giving the bound $L_u \norm{\tilde{u}_j^s - \tilde{u}_j^{[s]}}_2$.

To bound the second term in Equation \ref{eq:vf_diff}, we apply Lemma \ref{lem:vector_lipschitz}
\begin{align*}
\norm{G_U(\tilde{u}_j^{[s]}; \rho_{U,s}, \rho_{W,s}) - G_U(\tilde{u}_j^{[s]}; \rho_{U,[s]}, \rho_{W,[s]})}_2 \le L_G \big( d_{\mathcal{F}_U}(\rho_{U,s}, \rho_{U,[s]}) + d_{\mathcal{F}_W}(\rho_{W,s}, \rho_{W,[s]}) \big).
\end{align*}
Let's recall the families
\[\mathcal{F}_{U,1} = \{ u' \mapsto \psi(\alpha; u') \}_{\alpha \in \mathcal{A}}  \,,\,\, \mathcal{F}_{U,2} = \left\{ u' \mapsto \frac{\partial \psi}{\partial \alpha}(\alpha; u') \right\}_{\alpha \in \mathcal{A}},\]
\[\mathcal{F}_W = \{ w' \mapsto \phi(x; w') \}_{x \in \mathcal{X}}.\]
By \cref{lem:uniform_bounds}, the function classes $\mathcal{F}_{U,1}$, $\mathcal{F}_{U,2}$, and $\mathcal{F}_W$ consist of uniformly bounded Lipschitz functions. In particular, every function $f$ in these classes satisfies
\[
\|f\|_\infty \le M
\quad \text{and} \quad
\mathrm{Lip}(f) \le M.
\]
So, by using \cref{lem:parametric_vs_bl} we can say that 
\[d_{\mathcal{F}_{U,1}}(\rho_{U,s}, \rho_{U,[s]})\leq M d_{BL}((\rho_{U,s}, \rho_{U,[s]})), \quad d_{\mathcal{F}_{U,2}}(\rho_{U,s}, \rho_{U,[s]})\leq M d_{BL}((\rho_{U,s}, \rho_{U,[s]}))\] and
\[ d_{\mathcal{F}_{W}}(\rho_{W,s}, \rho_{W,[s]})\leq M d_{BL}((\rho_{W,s}, \rho_{W,[s]})).\]
So, now we can conclude that
\begin{align*}
\norm{G_U(\tilde{u}_j^{[s]}; \rho_{U,s}, \rho_{W,s}) - G_U(\tilde{u}_j^{[s]}; \rho_{U,[s]}, \rho_{W,[s]})}_2 \le M L_G \big( d_{BL}(\rho_{U,s}, \rho_{U,[s]}) + d_{BL}(\rho_{W,s}, \rho_{W,[s]}) \big).
\end{align*}

We now compute the measure displacements over the interval $[[s], s]$. Under the continuous mean-field dynamics, any decoder particle $\tilde{u}_j$ and encoder particle $\tilde{w}_i$ evolve with instantaneous velocities governed by $\xi(\tau)G_U$ and $\xi(\tau)G_W$. Because the learning rate is bounded by $M_\xi$ and both vector fields are bounded by $M_G$, the velocity of every particle (encoder and decoder) is bounded by $M_\xi M_G$. Given $s - [s] \le \epsilon$, the maximum spatial displacement for any individual parameter is as follows
\begin{align*}
\norm{\tilde{u}_j^s - \tilde{u}_j^{[s]}}_2 \le M_\xi M_G \epsilon, \quad \text{and} \quad \norm{\tilde{w}_i^s - \tilde{w}_i^{[s]}}_2 \le M_\xi M_G \epsilon.
\end{align*}
We now bound the measure displacements over the time interval $[[s], s]$. As established in our mean-field formulation, the continuous time evolution of the general probability distributions $\rho_{U,t}$ and $\rho_{W,t}$ is governed by the coupled Vlasov-McKean continuity equations. Now, the evolution of these densities corresponds exactly to the transport of mass along characteristic curves given below such that
\begin{align*}
\frac{d}{d\tau} u(\tau) &= \xi(\tau) G_U(u(\tau); \rho_{U,\tau}, \rho_{W,\tau}), \\
\frac{d}{d\tau} w(\tau) &= \xi(\tau) G_W(w(\tau); \rho_{U,\tau}, \rho_{W,\tau}),
\end{align*}
with initial conditions $u(0)=u([s])$ and $w(0)=w([s])$. 

Let
\begin{align*}
&T_U=u([s])+\int_{[s]}^{s}\xi(\tau) G_U(u(\tau); \rho_{U,\tau}, \rho_{W,\tau}) d(\tau) \qquad \text{and} \\
& T_W=w([s])+\int_{[s]}^{s}\xi(\tau) G_W(w(\tau); \rho_{U,\tau}, \rho_{W,\tau}) d(\tau)
\end{align*}
 denote the deterministic flow maps, such that $T_U(u) = u(s)$ and $T_W(w) = w(s)$. A fundamental property of continuity equations is that the density at time $s$ is exactly the push-forward of the initial density under this flow map. Therefore, we can say that the measures at the continuous time $s$ as $\rho_U^s = (T_U)_\# \rho_U^{[s]}$ and $\rho_W^s = (T_W)_\# \rho_W^{[s]}$.

From above, we know that $\norm{\xi(\tau) G_U(u(\tau); \rho_{U,\tau}, \rho_{W,\tau})}\leq M_\xi M_G$ and $\norm{\xi(\tau) G_W(w(\tau); \rho_{U,\tau}, \rho_{W,\tau})}\leq M_\xi M_G$. Integrating this bounded velocity over the time increment $s - [s] \le \epsilon$, the maximum spatial displacement of the flow map for any point in the support is bounded by
\begin{align*}
\sup_{u \in \mathrm{supp}(\rho_U^{[s]})} \norm{T_U(u) - u}_2 &\le M_\xi M_G \epsilon, \\
\sup_{w \in \mathrm{supp}(\rho_W^{[s]})} \norm{T_W(w) - w}_2 &\le M_\xi M_G \epsilon.
\end{align*}

Because the distributions are pushforwards under these bounded continuous flow maps, we can directly apply  \Cref{lem:pushforward_bound}.
So, by using using \Cref{lem:pushforward_bound}
 we can conclude that
\begin{align*}
d_{BL}(\rho_{U,s}, \rho_{U,[s]}) &\le W_1(\rho_{U,s}, \rho_{U,[s]}) \le \sup_{u \in \mathrm{supp}(\rho_{U,[s]})} \norm{T_U(u) - u}_2 \le M_\xi M_G \epsilon, \\
d_{BL}(\rho_{W,s}, \rho_{W,[s]}) &\le W_1(\rho_{W,s}, \rho_{W,[s]}) \le \sup_{w \in \mathrm{supp}(\rho_{W,[s]})} \norm{T_W(w) - w}_2 \le M_\xi M_G \epsilon.
\end{align*}

Substituting these back into Equation \ref{eq:vf_diff}, we obtain
\begin{align*}
\norm{G(s) - G([s])}_2 &\le L_u \left(M_\xi M_G \epsilon\right) + M L_G \Big( (M_\xi M_G \epsilon) + (M_\xi M_G \epsilon) \Big) \\
&= M_\xi M_G (L_u + 2 M L_G) \epsilon.
\end{align*}

Now, let's define $C_{\operatorname{time}} \coloneqq L_\xi M_G + M_\xi^2 M_G (L_u + 2 M L_G)$, where $M_G = \max(2M^2, 2M^3)$, $L_u = 2M^2$, $L_{G}=\max\{M^2,3M^3,3M^4\}$. So, now we can conclude that
\begin{align*}
\norm{\xi(s) G(s) - \xi([s]) G([s])}_2 &\le|\xi(s)-\xi([s])| \norm{G(s)}_2 + |\xi([s])| \norm{G(s) - G([s])}_{2}\\
&\leq L_{\xi}M_{G}\epsilon+ M_\xi^2 M_G (L_u + 2 M L_G) \epsilon=  C_{\operatorname{time}}\epsilon 
\end{align*}
Finally, substituting this uniform bound into our initial equation gives us
\begin{align*}
E_{\operatorname{time}}(t) &\le t \int_0^t \norm{\xi(s) G(s) - \xi([s]) G([s])}_2^2 \,ds \le t \int_0^t (C_{\operatorname{time}} \epsilon)^2 \,ds = t^2 C_{\operatorname{time}}^2 \epsilon^2.    
\end{align*}
We next bound the trajectory tracking term $E_{\operatorname{track}}(t)$. This term isolates the error caused by evaluating the continuous ideal trajectory $\tilde{u}_j$ versus the discrete simulated trajectory $u_j$, while holding the continuous time evaluation $[s]$  constant. By applying the Cauchy-Schwarz inequality to the time integral $\norm{\int_0^t f(s) ds}_2^2 \le t \int_0^t \norm{f(s)}_2^2 ds$, we get that
\begin{align*}
E_{\operatorname{track}}(t) \le t \int_0^t \norm{ \xi([s]) G_U(\tilde{u}_j^{[s]}; \rho_{U,[s]}, \rho_{W,[s]}) - \xi([s]) G_U(u_j^{[s]}; \rho_{U,[s]}, \rho_{W,[s]}) }_2^2 \,ds.
\end{align*}
We bound the learning rate by its absolute maximum $|\xi([s])| \le M_\xi$. As established earlier, the continuous vector field $G_U$ is Lipschitz continuous with respect to its individual microscopic parameter $u$ with constant $L_u=2M^2$. Applying this  bound directly  yields
\begin{align*}
E_{\operatorname{track}}(t) \le t M_\xi^2 L_u^2 \int_0^t \norm{\tilde{u}_j^{[s]} - u_j^{[s]}}_2^2 \,ds.
\end{align*}

We next bound the measure gap term $E_{meas}(t)$. This term captures the macroscopic error introduced because the discrete network evaluates its gradients over the finite empirical measures $\hat{\rho}_{U}, \hat{\rho}_{W}$, whereas the ideal flow evaluates over the continuous true population measures $\rho_{U}, \rho_{W}$.
Applying the Cauchy-Schwarz inequality to the integral, we obtain
\[
E_{\operatorname{meas}}(t) \le t \int_0^t \norm{ \xi([s]) G_U(u_j^{[s]}; \rho_{U,[s]}, \rho_{W,[s]}) - \xi([s]) G_U(u_j^{[s]}; \hat{\rho}_{U,[s]}, \hat{\rho}_{W,[s]}) }_2^2 ds.
\]
By applying \Cref{lem:vector_lipschitz}, we know that the difference between vector fields evaluated at different measures is bounded by the bounded Lipschitz distance between those respective measures. So, using \Cref{lem:vector_lipschitz} and the upper bound on  learning rate $M_\xi$, we can conclude that
\begin{align*}
E_{\operatorname{meas}}(t) \le t M_\xi^2 L_G^2 \int_0^t \Big( d_{\mathcal{F}_U}(\rho_{U,[s]}, \hat{\rho}_{U,[s]}) + d_{\mathcal{F}_W}(\rho_{W,[s]}, \hat{\rho}_{W,[s]}) \Big)^2 \,ds.
\end{align*}

To bound $d_{\mathcal{F}_U}(\rho_{U,[s]}, \hat{\rho}_{U,[s]})$, we introduce a bridge measure $\bar{\rho}_{U,[s]} \coloneqq \frac{1}{N}\sum_{m=1}^N \delta_{\tilde{u}_m^{[s]}}$. This represents the empirical measure formed by the continuous particles evaluated at time $[s]$. By the triangle inequality, we decompose the distance into a statistical concentration term and a particle shift term as follows
\begin{align*}
d_{\mathcal{F}_U}(\rho_{U,[s]}, \hat{\rho}_{U,[s]}) \le \underbrace{d_{\mathcal{F}_U}(\rho_{U,[s]}, \bar{\rho}_{U,[s]})}_{\text{Concentration } Q_{U,[s]}} + \underbrace{d_{\mathcal{F}_U}(\bar{\rho}_{U,[s]}, \hat{\rho}_{U,[s]})}_{\text{Particle Shift}}.
\end{align*}
 
Similarly, to bound $d_{\mathcal{F}_W}(\rho_{W,[s]}, \hat{\rho}_{W,[s]})$, we introduce the bridge measure $\bar{\rho}_{W,[s]} \coloneqq \frac{1}{N}\sum_{m=1}^N \delta_{\tilde{w}_m^{[s]}}$, which denotes the empirical measure of the continuous particles evaluated at time $[s]$. By the triangle inequality, we obtain
\begin{align*}
d_{\mathcal{F}_W}(\rho_{W,[s]}, \hat{\rho}_{W,[s]})
\le
\underbrace{d_{\mathcal{F}_W}(\rho_{W,[s]}, \bar{\rho}_{W,[s]})}_{\text{Concentration } Q_{W,[s]}}
+
\underbrace{d_{\mathcal{F}_W}(\bar{\rho}_{W,[s]}, \hat{\rho}_{W,[s]})}_{\text{Particle Shift}}.
\end{align*}

Now, recall the definition of the  continuous particles $\tilde{u}_m$. At time $t=0$, they are drawn independently from the initial distribution $\rho_U^0$. As time progresses, their dynamics are governed entirely by the exact population vector field $G_U(\cdot; \rho_{U,t}, \rho_{W,t})$. Because this population vector field is completely deterministic and defined by the infinite continuous measures $\rho_{U,t}$ and $\rho_{W,t}$, the trajectory of particle $\tilde{u}_1$ is completely independent of the trajectory of particle $\tilde{u}_2$.  
Because applying a deterministic map to independent random variables preserves their independence, the positions of the ideal particles at any time $[s]$ remain independent. By the definition of the push-forward measure, they are identically distributed according to the exact true measure $\rho_{U,[s]}$. Therefore, the bridge measure $\bar{\rho}_{U,[s]} = \frac{1}{N}\sum_{m=1}^N \delta_{\tilde{u}_m^{[s]}}$ is a valid empirical decoder measure of i.i.d. samples. Similarly, we can argue that $\bar{\rho}_{W,[s]} = \frac{1}{N}\sum_{m=1}^N \delta_{\tilde{w}_m^{[s]}}$ is a valid empirical encoder measure of i.i.d. samples.

Note that $d_{\mathcal{F}_U}(\rho_{U,[s]}, \bar{\rho}_{U,[s]})=\max\{d_{\mathcal{F}_{U,1}}(\rho_{U,[s]}, \bar{\rho}_{U,[s]}),d_{\mathcal{F}_{U,2}}(\rho_{U,[s]}, \bar{\rho}_{U,[s]})\}.$ We want to apply  \Cref{lem:empirical_concentration} to the families of test functions $\mathcal{F}_{U,1}$, $\mathcal{F}_{U,2}$, and $\mathcal{F}_W$, we verify that they satisfy the required boundedness and Lipschitz properties using the absolute constant $M$ from \Cref{lem:uniform_bounds}.

For the family $\mathcal{F}_{U,1} = \{ u' \mapsto \psi(\alpha; u') \}_{\alpha \in \mathcal{A}}$, the index parameter $\theta$ corresponds to $\alpha \in \mathcal{A}$. $\mathcal{A}\subset\mathbb{R}^b$ with $b=1$. By part (ii) of \Cref{lem:uniform_bounds}, the mean-field bottleneck feature is bounded by $M$, which means the compact parameter space $\mathcal{A} \subset \mathbb{R}$ has a diameter $R_\alpha \le 2M$. By part (iii) of the \Cref{lem:uniform_bounds}, the the test functions are bounded by $M$. By part (v) of the \Cref{lem:uniform_bounds}, the map $\alpha \mapsto \psi(\alpha; u')$ is $M$-Lipschitz for all $u'$.

For the family $\mathcal{F}_{U,2} = \left\{ u' \mapsto \frac{\partial \psi}{\partial \alpha}(\alpha; u') \right\}_{\alpha \in \mathcal{A}}$, the index parameter is again $\alpha \in \mathcal{A}$ with diameter $2M$. By part (iv) of  \Cref{lem:uniform_bounds}, the norm of spatial derivative $\nabla_\alpha\psi(\alpha;u)$ is bounded by $M$. Furthermore, because the norm of second derivative $\nabla_\alpha^2 \psi(\alpha;u)$ is bounded by $M$, the mean value theorem guarantes that the map $\alpha \mapsto \frac{\partial \psi}{\partial \alpha}(\alpha; u')$ is $M$-Lipschitz. 

For the family $\mathcal{F}_W = \{ w' \mapsto \phi(x; w') \}_{x \in \mathcal{X}}$, the index parameter $\theta$ corresponds to the data samples $x \in \mathcal{X} \subset \mathbb{R}^{d}$. By part (i) of the \Cref{lem:uniform_bounds}, the data is bounded by $M$, yielding a diameter of at most $2M$. By part (iii) and (v) of \Cref{lem:uniform_bounds}, the map $\phi$ is bounded by $M$ and is $M$-Lipschitz in $x$. 

Therefore, all three function families  satisfy the conditions of  \Cref{lem:empirical_concentration} with the single uniform constant $2M$. Define concentration constant for the decoder measure as $C_U := 4M + M\sqrt{2b \log(12M)} + M\sqrt{2 \log 2}=4M + M\sqrt{2 \log(12M)} + M\sqrt{2 \log 2}$ and for the encoder measure as $C_W := 4M + M\sqrt{2d \log(12M)} + M\sqrt{2 \log 2}=4M + M\sqrt{2d \log(12M)} + M\sqrt{2 \log 2}$. So, by applying \Cref{lem:empirical_concentration} to class $\mathcal{F}_{U,1}$, $\mathcal{F}_{U,2}$ and $\mathcal{F}_{W}$, we can conclude that with probability at least $1-\delta$, the following holds
\begin{align*}
& d_{\mathcal{F}_{U,1}}(\rho_{U,[s]}, \bar{\rho}_{U,[s]})\le \frac{1}{\sqrt{N}} \left( C_{U} + M \sqrt{ \log N} + M\sqrt{2 \log(3/\delta)} \right),\\
&d_{\mathcal{F}_{U,2}}(\rho_{U,[s]}, \bar{\rho}_{U,[s]})\le \frac{1}{\sqrt{N}} \left( C_{U} + M \sqrt{ \log N} + M\sqrt{2 \log(3/\delta)} \right),\\
&d_{\mathcal{F}_W}(\rho_{W,[s]}, \bar{\rho}_{W,[s]})\le \frac{1}{\sqrt{N}} \left( C_{W} + M \sqrt{d \log N} + M\sqrt{2 \log(3/\delta)} \right).
\end{align*}
As $d_{\mathcal{F}_U}$ is just the maximum of the $d_{\mathcal{F}_{U,1}}$, and $d_{\mathcal{F}_{U,2}}$, we have with probability at least $1-\delta$,
\begin{align*}
& d_{\mathcal{F}_{U}}(\rho_{U,[s]}, \bar{\rho}_{U,[s]})\le \frac{1}{\sqrt{N}} \left( C_{U} + M \sqrt{ \log N} + M\sqrt{2 \log(3/\delta)} \right),\\
&d_{\mathcal{F}_W}(\rho_{W,[s]}, \bar{\rho}_{W,[s]})\le \frac{1}{\sqrt{N}} \left( C_{W} + M \sqrt{d \log N} + M\sqrt{2 \log(3/\delta)} \right).
\end{align*}
So, we can conclude that 
for any specific discrete time step $[s]$ and for any chosen failure probability $\delta$, we square the bound from the lemma and apply the elementary inequality $(a+b+c)^2 \le 3(a^2+b^2+c^2)$ to obtain
\begin{align*}
&(Q_{U,[s]})^2 \le \frac{3}{N} \Big( C_U^2 + M^2 \log N + 2 M^2 \log(3/\delta) \Big)\\
& (Q_{W,[s]})^2 \le \frac{3}{N} \Big( C_W^2 + M^2 d\log N + 2 M^2 \log(3/\delta) \Big).
\end{align*}

Taking a union bound over all time steps and particles ensures this statistical bound holds simultaneously for all evaluation times $[s]$ up to the maximum time $T$. Because the time step is $\epsilon$, there are at most $K_T = \lfloor T/\epsilon \rfloor \le T/\epsilon$ distinct discrete time steps. To align this high-probability bound with the overall martingale bounds derived earlier, which required a union bound over the $N$ particles and proper alignment across both $U$ and $W$ measure concentrations, we want the total aggregate failure probability for the entire system to be at most $e^{-\delta_{1}^2}$. To guarantee the probability of failure across any of the $K_T$ time steps or $N$ particles is properly bounded, we allocate a much smaller failure probability to each individual event using Boole's inequality and set the individual failure probability to
\begin{equation}
\delta = \frac{e^{-\delta_{1}^2}}{4 N \cdot K_T}.
\end{equation}
To avoid division by zero if $T/\epsilon < 1$, we safely substitute $K_T = (T/\epsilon \lor 1)$. The factor of 4 accounts for the maximum over the different test function classes for both $U$ and $W$.

So,
\begin{align}
\log(3/\delta) &= \log\left( \frac{12 N (T/\epsilon \lor 1)}{e^{-\delta_{1}^2}} \right) \nonumber \\
&= \log\big(12 N (T/\epsilon \lor 1)\big) + \delta_{1}^2.
\end{align}

Substituting this back into the inequality, we conclude that with probability at least $1 - e^{-\delta_{1}^2}$, the bound holds uniformly for all discrete time steps $s \in [0, t]$ for both encoder and decoder
\begin{equation}
(Q_{U,[s]})^2 \le \frac{3}{N}\Big(C_U^2 + M^2  \log N + 2 M^2 \big(\log(12N(T/\epsilon \lor 1)) + \delta_{1}^2\big)\Big),
\end{equation}
\begin{equation}
(Q_{W,[s]})^2 \le \frac{3}{N}\Big(C_W^2 + M^2 d \log N + 2 M^2 \big(\log(12N(T/\epsilon \lor 1)) + \delta_{1}^2\big)\Big).
\end{equation}

The second term measures the particle shift. By \cref{lem:uniform_bounds}, we can say that every function in $\mathcal{F}_U$ and $\mathcal{F}_W$ is bounded by  $M$ and $M$-Lipschitz with respect to the parameters. By \cref{lem:parametric_vs_bl}, we can say that the parametric metric between two distribution $\mu$ and $\nu$ is bounded by $M d_{BL}(\mu,\nu)$. Again we know that $d_{BL}$ metric is bounded by scaled 1-Wasserstein distance, which is in turn bounded by the average pairwise distance between the coupled particles. So, from above we can conclude that
\[
d_{\mathcal{F}_U}(\bar{\rho}_{U,[s]}, \hat{\rho}_{U,[s]}) \le M \cdot d_{BL}(\bar{\rho}_{U,[s]}, \hat{\rho}_{U,[s]})\le M \cdot W_1(\bar{\rho}_{U,[s]}, \hat{\rho}_{U,[s]}) \le \frac{M}{N}\sum_{m=1}^N \norm{\tilde{u}_m^{[s]} - u_m^{[s]}}_2,
\]
\[
d_{\mathcal{F}_W}(\bar{\rho}_{W,[s]}, \hat{\rho}_{W,[s]}) \le M \cdot d_{BL}(\bar{\rho}_{W,[s]}, \hat{\rho}_{W,[s]}) \le M \cdot W_1(\bar{\rho}_{W,[s]}, \hat{\rho}_{W,[s]}) \le \frac{M}{N}\sum_{m=1}^N \norm{\tilde{w}_m^{[s]} - w_m^{[s]}}_2.
\]
Substituting these bounds for both the decoder and encoder measures back into the integral, and applying the algebraic expansion $(a+b+c+d)^2 \le 4(a^2+b^2+c^2+d^2)$ alongside Jensen's inequality $(\frac{1}{N}\sum x)^2 \le \frac{1}{N}\sum x^2$, we can conclude that  with probability at least $1-e^{-\delta_{1}^2}$ 
\begin{align*}
    &E_{\operatorname{}{meas}}(t) \\
    &\le 4t M_\xi^2 L_G^2 \int_0^t \left( (Q_U^{[s]})^2 + (Q_W^{[s]})^2 + \frac{M^2}{N}\sum_{m=1}^N \norm{\tilde{u}_m^{[s]} - u_m^{[s]}}_2^2 + \frac{M^2}{N}\sum_{m=1}^N \norm{\tilde{w}_m^{[s]} - w_m^{[s]}}_2^2 \right) ds,
\end{align*}
for all $N$ decoder particles.

We finally give a bound for martingale noise $E_{\operatorname{mart}}$. The term $E_{\operatorname{mart}}(t)$ captures the accumulated error from using the noisy, single-sample stochastic gradient $F_U$ instead of the exact population vector field $G_U$ evaluated on the empirical measures. Recall that replacing the continuous integral over $[s]$ with the discrete sum yields
\[
E_{\operatorname{mart}}(t) = \norm{ \epsilon \sum_{k=1}^{\lfloor t/\epsilon \rfloor} \xi\big((k-1)\epsilon\big) \Big( G_U(u_j^{k-1}; \hat{\rho}_{U,{k-1}}, \hat{\rho}_{W,{k-1}}) - F_{U,j}\big(u_j^{k-1}, W^{k-1}; z_k\big) \Big) }^2.
\]
To bound this using the vector Azuma-Hoeffding inequality (\Cref{lem:azuma-hoeffding}), we define the filtration $\mathcal{F}_{k-1}$ as the $\sigma$-algebra generated by the history of all data samples drawn up to step $k-1$ as $\sigma(z_1, \dots, z_{k-1})$. For a specific particle $j$, we define the discrete-time sequence $(\Delta \mathbf{X}_k^{(j)})_{k \ge 1}$ taking values in $\mathbb{R}^D$ as
\[
\Delta \mathbf{X}_k^{(j)} := \xi\big((k-1)\epsilon\big) \Big( G_U(u_j^{k-1}; \hat{\rho}_{U,{k-1}}, \hat{\rho}_{W,{k-1}}) - F_{U,j}\big(u_j^{k-1}, W^{k-1}; z_k\big) \Big).
\]
Let $\mathbf{X}_n^{(j)} = \sum_{k=1}^n \Delta \mathbf{X}_k^{(j)}$ with $\mathbf{X}_0^{(j)} = \mathbf{0}$. We verify the two required conditions for \Cref{lem:azuma-hoeffding}.
\begin{enumerate}
    \item \textbf{Martingale Property:} By the definition of SGD, the expected value of the single-sample gradient equals the population gradient over the current empirical measure. Because $z_k$ is drawn independently at step $k$, $\mathbb{E}[F_{U,j}(z_k) \mid \mathcal{F}_{k-1}] = G_U$. Therefore, $\mathbb{E}[\Delta \mathbf{X}_k^{(j)} \mid \mathcal{F}_{k-1}] = \mathbf{0}$, making this a martingale difference sequence.
    
    \item \textbf{Sub-Gaussian Increments:} Applying a union bound, We must show that for any fixed vector $\lambda \in \mathbb{R}^D$, the increment satisfies $\mathbb{E}\big[ e^{\langle \lambda, \Delta \mathbf{X}_k^{(j)} \rangle} \mid \mathcal{F}_{k-1} \big] \le e^{L^2 \|\lambda\|^2 / 2}$ for some constant $L$. Let $Z = \Delta \mathbf{X}_k^{(j)}$. First, we bound the magnitude of the stochastic gradient $F_{U,j}$. For any data sample $z_k = x_k$,
    \[
    F_{U,j}(z_k) = - \big(\nabla_u \psi(\hat{\alpha}_N(x_k); u_j^{k-1})\big)^\top \big( \hat{x}_N(x_k) - x_k \big).
    \]
    By \Cref{lem:uniform_bounds}, the spatial gradient is bounded ($\norm{\nabla_u \psi} \le M$), and the prediction error is bounded by the triangle inequality ($\norm{\hat{x}_N - x_k} \le 2M$). Thus, $\norm{F_{U,j}} \le 2M^2$. Because $G_U$ is the conditional expectation of $F_{U,j}$, Jensen's inequality implies $\norm{G_U} \le 2M^2$. By the triangle inequality, the difference is bounded: $\norm{G_U - F_{U,j}} \le 4M^2$. 
    Since the learning rate is bounded by $M_\xi$, the entire increment is bounded almost surely, $\norm{Z} \le 4 M_\xi M^2 \equiv c$.
    
    Now, define the scalar random variable $Y = \langle \lambda, Z \rangle$. Since $\mathbb{E}[Z \mid \mathcal{F}_{k-1}] = \mathbf{0}$, it follows that $\mathbb{E}[Y \mid \mathcal{F}_{k-1}] = 0$. By the Cauchy-Schwarz inequality, $|Y| \le \norm{\lambda} \norm{Z} \le c \norm{\lambda}$. Thus, $Y$ takes values almost surely in the interval $[-c\norm{\lambda}, c\norm{\lambda}]$, which has length $2c\norm{\lambda}$. Applying Hoeffding's Lemma for bounded zero-mean scalar random variables, the moment-generating function is bounded by
    \[
    \mathbb{E}\big[ e^Y \mid \mathcal{F}_{k-1} \big] \le \exp\left( \frac{(2c\norm{\lambda})^2}{8} \right) = \exp\left( \frac{c^2 \norm{\lambda}^2}{2} \right).
    \]
    This exactly satisfies the sub-Gaussian condition of lemma \ref{lem:azuma-hoeffding} with $L = c = 4 M_\xi M^2$.
\end{enumerate}

We now apply the maximal bound of lemma \ref{lem:azuma-hoeffding} to the trajectory of a single particle up to the maximum number of steps $n = K_T = \lfloor T/\epsilon \rfloor \le T/\epsilon$. For any threshold $y > 0$, we can say that
\[
\mathbb{P}\left( \max_{k \le T/\epsilon} \norm{\mathbf{X}_k^{(j)}} \ge 2L \sqrt{T/\epsilon} \big( \sqrt{d+1} + y \big) \right) \le e^{-y^2}.
\]
We require this bound to hold uniformly over all $N$ particles. Applying a union bound, the probability that any of the $N$ particles exceeds this threshold is at most $N e^{-y^2}$. To ensure the total failure probability is bounded by $e^{-\delta_{2}^2}$, we set $N e^{-y^2} = e^{-\delta_{2}^2}$, which yields $y = \sqrt{\log N + \delta_{2}^2}$. 

Substituting this $y$ back in, we find that with probability at least $1 - e^{-\delta_{2}^2}$, the following holds simultaneously for all $j \in \{1, \dots, N\}$
\[
\max_{k \le T/\epsilon} \norm{\mathbf{X}_k^{(j)}} \le 2L \sqrt{T/\epsilon} \Big( \sqrt{d+1} + \sqrt{\log N + \delta_{2}^2} \Big).
\]
Notice that $E_{\operatorname{mart}}(t) = \norm{\epsilon \mathbf{X}_{\lfloor t/\epsilon \rfloor}^{(j)}}^2$. We uniformly bound the supremum of this term over the entire continuous trajectory $t \in [0, T]$ and across all particles by multiplying the high-probability bound by $\epsilon$ and squaring it as follows
\begin{align*}
    \sup_j \sup_{t \le T} E_{\operatorname{mart}}(t) &= \max_j \max_{k \le T/\epsilon} \norm{\epsilon \mathbf{X}_k^{(j)}}^2 \\
    &\le \epsilon^2 \left[ 2L \sqrt{T/\epsilon} \Big( \sqrt{d+1} + \sqrt{\log N + \delta_{2}^2} \Big) \right]^2 \\
    &= 4 L^2 T \epsilon \Big( \sqrt{d+1} + \sqrt{\log N + \delta_{2}^2} \Big)^2\\
    &\le 8 L^2 T \epsilon \big(d+1 + \log N + \delta_{2}^2 \big).
\end{align*}
Letting $C_{\operatorname{mart}} = 8 L^2 = 128 M_\xi^2 M^4$, we write this bound as $C_{\operatorname{mart}} T \epsilon \big(d+1 + \log N + \delta_{2}^2\big)$.

We now aggregate the four error components for the decoder particles. From the previous deterministic derivations, we established the strict upper bounds for the time discretization and spatial tracking terms as
\begin{align*}
E_{\operatorname{time}}(t) &\le t^2 C_{\operatorname{time}}^2 \epsilon^2, \\
E_{\operatorname{track}}(t) &\le t M_\xi^2 L_u^2 \int_0^t \norm{\tilde{u}_j^{[s]} - u_j^{[s]}}_2^2 \,ds,
\end{align*} 
where
$C_{\operatorname{time}} \coloneqq L_\xi M_G + M_\xi^2 M_G (L_u + 2 M L_G)$, $M_G = \max(2M^2, 2M^3)$, $L_u = 2M^2$, $L_{G}=\max\{M^2,3M^3,3M^4\}$. Under the assumption $M \ge 1$, we simplify each term as follows. First,
\[
M_G = \max(2M^2, 2M^3) = 2M^3, \quad
L_u = 2M^2, \quad
L_G = \max\{M^2, 3M^3, 3M^4\} = 3M^4.
\]
Substituting these into the definition of $C_{\operatorname{time}}$, we obtain
\[
C_{\operatorname{time}}
= L_\xi M_G + M_\xi^2 M_G (L_u + 2 M L_G)
\le 2 L_\xi M^3 + 2 M_\xi^2 M^3 \big(2M^2 + 6M^5\big).
\]
Using $M \ge 1$, we further bound $2M^2 + 6M^5 \le 8M^5$, which yields
\[
C_{\operatorname{time}}
\le 2 L_\xi M^3 + 16 M_\xi^2 M^8.
\]
So, we can conclude that
\begin{align*}
E_{\operatorname{time}}(t) &\le t^2 (2 L_\xi M^3 + 16 M_\xi^2 M^8)^2 \epsilon^2 \le 256 t^2 M^{16} (L_\xi+M_\xi^2)^2 \epsilon^2, \\
E_{\operatorname{track}}(t) &\le 4 t M_\xi^2 M^4 \int_0^t \norm{\tilde{u}_j^{[s]} - u_j^{[s]}}_2^2 \,ds.
\end{align*}
For the remaining two terms, we rely on our probabilistic statements for the measure gap $E_{\operatorname{meas}}(t)$ and the martingale noise $E_{\operatorname{mart}}(t)$. We require both of these bounds to hold simultaneously for all time and particles simultaneously. By setting $\delta_1 = \delta$ for the empirical measure concentration bound and $\delta_2 = \delta$ for the martingale concentration bound, we apply a union bound over these failure events. Thus, with probability at least $1 - 2e^{-\delta^2}$, the following bounds hold concurrently for all $j \le N$ and all $t \le T$
\begin{align*}
&E_{\operatorname{meas}}(t) \\
&\le 4t M_\xi^2 L_G^2 \int_0^t \left( (Q_{U,[s]})^2 + (Q_{W,[s]})^2 + \frac{M^2}{N}\sum_{m=1}^N \norm{\tilde{u}_m^{[s]} - u_m^{[s]}}_2^2 + \frac{M^2}{N}\sum_{m=1}^N \norm{\tilde{w}_m^{[s]} - w_m^{[s]}}_2^2 \right) ds, \\
&E_{\operatorname{mart}}(t) \le C_{\operatorname{mart},U} T \epsilon \big(
d+1 + \log N + \delta^2 \big),
\end{align*}
where 
\begin{align*}
  &C_{\operatorname{mart},U} = 128 M_\xi^2 M^4,\\
 &(Q_{U,[s]})^2 \le \frac{3}{N}\Big(C_U^2 + M^2  \log N + 2 M^2 \big(\log(12NT/\epsilon) + \delta^2\big)\Big),\\
 & (Q_{W,[s]})^2 \le \frac{3}{N}\Big(C_W^2 + M^2 d \log N + 2 M^2 \big(\log(12NT/\epsilon )) + \delta^2\big)\Big).
\end{align*}
such that $C_U =4M + M\sqrt{2 \log(12M)} + M\sqrt{2 \log 2}$ and $C_W =4M + M\sqrt{2d \log(12M)} + M\sqrt{2 \log 2}$. Now, as $d\geq 1$, so we can say that $C_U\le C_W$ and $(Q_{U,[s]})^2\le (Q_{W,[s]})^2$ for all $U,W $ and $[s]$. 

To simplify the integrals, we substitute the definition of the joint suprema $\Delta_U(s) = \max_j \sup_{\tau \le s} \norm{\tilde{u}_j^\tau - u_j^{[\tau]}}_2^2$ and $\Delta_W(s) = \max_i \sup_{\tau \le s} \norm{\tilde{w}_i^\tau - w_i^{[\tau]}}_2^2$. This allows us to upper bound the empirical averages appearing in the measure gap term as $\frac{1}{N}\sum_{m=1}^N \norm{\tilde{u}_m^{[s]} - u_m^{[s]}}_2^2 \le \Delta_U(s)$ and similarly for the encoder. We also define the joint trajectory deviation $\Delta(s) = \Delta_U(s) + \Delta_W(s)$.

By substituting all four bounds back into the expansion $\norm{\tilde{u}_j^t - u_j^{[t]}}_2^2 \le 16 (E_{\operatorname{time}} + E_{\operatorname{track}} + E_{\operatorname{meas}} + E_{\operatorname{mart}})$ and taking the supremum over time and particles, we can conclude that with probability at least $1-2e^{-\delta^2}$,
\begin{align*}
\Delta_U(t) \le 16 \Bigg( &T^2 C_{\operatorname{time}}^2 \epsilon^2 + 4T M_\xi^2 M^4 \int_0^t \Delta_U(s) \, ds \\
&+ 4T M_\xi^2 L_G^2 \int_0^t \Big( (Q_{U,[s]})^2 + (Q_{W,[s]})^2 + M^2 \Delta(s) \Big) \, ds\\
&+ C_{\operatorname{mart},U} T \epsilon \big(d+1 + \log N + \delta^2 \big) \Bigg).
\end{align*}

We next perform a similar symmetric calculation to bound the trajectory tracking of the encoder particles $\Delta_W(T)$. The exact same algebraic decomposition applies, splitting the error into analogous terms $E_{\operatorname{time},W}$, $E_{\operatorname{track},W}$,$E_{\operatorname{meas},W}$, and $E_{\operatorname{mart},W}$. Because the encoder vector field $G_W$ involves an application of the chain rule through the mean-field bottleneck layer, the uniform bounds carry an extra factor of $M$. Similarly to the proof of decoder part, for encoder part we can conclude that 
\begin{align*}
E_{\operatorname{time,W}}(t) &\le t^2 C_{\operatorname{time,W}}^2 \epsilon^2, \\
E_{\operatorname{track, W}}(t) &\le 4 t M_\xi^2 M^6 \int_0^t \norm{\tilde{w}_j^{[s]} - w_j^{[s]}}_2^2 \,ds,
\end{align*}
where $C_{\operatorname{time,W}}= L_\xi M_G + M_\xi^2 M_G (2M^3 + 2 M L_G)
\le 2 L_\xi M^3 + 2 M_\xi^2 M^3 \big(2M^3 + 6M^5\big)\leq 2 L_\xi M^3 + 16 M_\xi^2 M^8\le 16M^8(L_\xi+ M_\xi^2 ).$ Similarly for encoder particle , we can conclude that with probability at least $1 - 2e^{-\delta^2}$, the following bounds hold concurrently for all $j \le N$ and all $t \le T$
\begin{align*}
E_{\operatorname{meas},W}(t) &\le 4t M_\xi^2 L_G^2 \int_0^t \left( (Q_{U,[s]})^2 + (Q_{W,[s]})^2 + \frac{M^2}{N}\sum_{m=1}^N \norm{\tilde{u}_m^{[s]} - u_m^{[s]}}_2^2 + \frac{M^2}{N}\sum_{m=1}^N \norm{\tilde{w}_m^{[s]} - w_m^{[s]}}_2^2 \right) ds, \\
E_{\operatorname{mart},W}(t) &\le C_{\operatorname{mart},W} T \epsilon \big(
d+1 + \log N + \delta^2 \big),
\end{align*}
where $C_{\operatorname{mart},W} = 128 M_\xi^2 M^6$, and the measure gap formulation remains structurally identical, bounded by the exact same measure concentration terms $(Q_{U,[s]})^2$ and $(Q_{W,[s]})^2$. So, with probability at least $1 - 2e^{-z^2}$, the fully aggregated bound for the encoder particles is
\begin{align*}
\Delta_W(t) \le 16 \Bigg( &T^2 C_{\operatorname{time},W}^2 \epsilon^2 + 4T M_\xi^2 M^6 \int_0^t \Delta_W(s) \, ds \\
&+ 4T M_\xi^2 L_G^2 \int_0^t \Big( (Q_{U,[s]})^2 + (Q_{W,[s]})^2 + M^2 \Delta(s) \Big) \, ds \\
&+ C_{\operatorname{mart},W} T \epsilon \big(d+1 + \log N + \delta^2 \big) \Bigg).
\end{align*},
Note that both $C_{\operatorname{time,W}}$ and $C_{\operatorname{time}}$ upper bounded by $16M^8(L_\xi+ M_\xi^2 )$. Applying an identical union bound strategy over the encoder parameter space assigns an additional failure probability of $2e^{-\delta^2}$. Thus, with a total aggregate probability of at least $1 - 4e^{-\delta^2}$, both the encoder and decoder bounds hold simultaneously over the entire domain. So, with probability of at least $1 - 4e^{-\delta^2}$, we can conclude that
\begin{align*}
\Delta(t) \le 16 \Bigg( &512 T^2 M^{16} (L_\xi+ M_\xi^2 )^2 \epsilon^2 + 4T M_\xi^2 M^6 \int_0^t \Delta(s) \, ds \\
&+  \frac{432 T^2 M_\xi^2 M^8 \Big(C_W^2 + M^2 d \log N + 2 M^2 \big(\log(12NT/\epsilon )) + \delta^2\big)\Big)}{N}  \\
&+72T M_\xi^2 M^{10} \int_0^t  \Delta(s)  \, ds+ 256 M_\xi^2 M^6 T \epsilon \big(d+1 + \log N + \delta^2 \big) \Bigg),
\end{align*}

such that $C_W =4M + M\sqrt{2d \log(12M)} + M\sqrt{2 \log 2}$. Now, $C_W \le 20M\sqrt{2d \log(12M)}$, and by Gronwall's lemma, we get
\begin{align*}
\Delta(t) \le & 16 \Bigg( 512 T^2 M^{16} (L_\xi+ M_\xi^2 )^2 \epsilon^2 + \frac{432 T^2 M_\xi^2 M^8 \Big(C_W^2 + M^2 d \log N + 2 M^2 \big(\log(12NT/\epsilon ) + \delta^2\big)\Big)}{N} \\
&+ 256 M_\xi^2 M^6 T \epsilon \big(d+1 + \log N + \delta^2 \big) \Bigg) + \Big( 64 T M_\xi^2 M^6 + 1152 T M_\xi^2 M^{10} \Big) \int_0^t \Delta(s) \, ds.
\end{align*}

Applying Gronwall's inequality, we deduce that for any $t \le T$,
\begin{align*}
\Delta(t) \le & 16 \Bigg( 512 T^2 M^{16} (L_\xi+ M_\xi^2 )^2 \epsilon^2 + \frac{432 T^2 M_\xi^2 M^8 \Big(C_W^2 + M^2 d \log N + 2 M^2 \big(\log(12NT/\epsilon ) + \delta^2\big)\Big)}{N} \\
&+ 256 M_\xi^2 M^6 T \epsilon \big(d+1 + \log N + \delta^2 \big) \Bigg) \times \exp\left( \int_0^t \Big( 64 T M_\xi^2 M^6 + 1152 T M_\xi^2 M^{10} \Big) \, ds \right).
\end{align*}

Evaluating at $t = T$ yields the exact final bound. So we can conclude that
\begin{align*}
\Delta(T) \le & 16 \Bigg( 512 T^2 M^{16} (L_\xi+ M_\xi^2 )^2 \epsilon^2 + \frac{432 T^2 M_\xi^2 M^8 \Big(C_W^2 + M^2 d \log N + 2 M^2 \big(\log(12NT/\epsilon ) + \delta^2\big)\Big)}{N} \\
&+ 256 M_\xi^2 M^6 T \epsilon \big(d+1 + \log N + \delta^2 \big) \Bigg) \exp\Big( 64 T^2 M_\xi^2 M^6 + 1152 T^2 M_\xi^2 M^{10} \Big).
\end{align*}
We simplify the expression by consolidating the constants by grouping the polynomial coefficients and the exponential rate. Let us define the exponential growth rate constant as
\[
 64 M_\xi^2 M^6 + 1152 M_\xi^2 M^{10}< 1200 M_{\xi}^2 M^{10} = C_{\operatorname{grow}}  
\]
and the polynomial prefix constant as
\begin{align*}
&\max \Big\{ 8192 M^{16} (L_\xi+ M_\xi^2 )^2,\ 6912 M_\xi^2 M^8 \max(C_W^2 + 2M^2 \log 12, 2M^2),\ 4096 M_\xi^2 M^6 \Big\}\\
\le &\max \Big\{ 8192 M^{16} (L_\xi+ M_\xi^2 )^2,\ 6912 M_\xi^2 M^8 \max(20M\sqrt{2d \log(12M)})^2 + 2M^2 \log 12, 2M^2),\ 4096 M_\xi^2 M^6 \Big\}\\
\le &  \max \Big\{ 10^5M^{16} L_{\xi}^2M_{\xi}^4, 10^6M^{10}M_{\xi}^2\sqrt{d}, 10^{5}M^6M_{\xi}^2\Big\}\\
\le & 10^6M^{16}\max\{ L_{\xi}^2M_{\xi}^4, M^2_{\xi}\sqrt{d}\} =C_{\operatorname{poly}} 
\end{align*}
Note that $C_{\operatorname{grow}}\le C_{\operatorname{poly}}$ 
If we substitute these constants to bound the maximal coefficients, the expression  for $\Delta(T)$ simplifies to
\begin{align*}
\Delta(T) \le & \Bigg( C_{\operatorname{poly}} T^2 \epsilon^2 + C_{\operatorname{poly}} \frac{T^2}{N} \Big( 1 + d \log N + \log(NT/\epsilon) + \delta^2 \Big) \\
&+ C_{\operatorname{poly}} T \epsilon \big(d+1 + \log N + \delta^2 \big) \Bigg) \exp\big( C_{\operatorname{grow}} T^2 \big)\\
\le & C_{\operatorname{poly}} (T \lor T^2) \Bigg[ \epsilon^2 + \frac{1}{N} \Big( \mathcal{O}(1) + d \log N + \log(4NT/\epsilon) + \delta^2 \Big) \\
&+ \epsilon \big( d+1 + \log N + \delta^2 \big) \Bigg] \exp\big( C_{\operatorname{grow}} T^2 \big).
\end{align*}
We can give a common upper bound for the terms inside the parentheses. We safely assume $\epsilon \le 1$, which implies $\epsilon^2 \le \epsilon$. Furthermore, we can bound the logarithmic terms using $T/\epsilon \lor 1$,
\begin{align*}
\log N &\le \log\big(N(T/\epsilon \lor 1)\big), \\
\log(4NT/\epsilon) &\le \log 4 + \log\big(N(T/\epsilon \lor 1)\big).
\end{align*}

By absorbing the $\mathcal{O}(1)$ and $\log 4$ constants into the $d+1$ term, all three inner components share a common upper bound factor
\[
\Big[ d+1 + \log\big(N(T/\epsilon \lor 1)\big) + \delta^2 \Big].
\]

Applying this common factor to the $\epsilon$, $\epsilon^2$, and $1/N$ terms yields
\begin{align*}
\Delta(T) \le & C_{\operatorname{poly}} (T \lor T^2) \Bigg[ \epsilon \Big(d+1 + \log\big(N(T/\epsilon \lor 1)\big) + \delta^2\Big) \\
&+ \frac{1}{N} \Big(d+1 + \log\big(N(T/\epsilon \lor 1)\big) + \delta^2\Big) \Bigg] \exp\big( C_{\operatorname{exp}} T^2 \big)\\
\le & C_{\operatorname{poly}} (T \lor T^2) \left( \epsilon + \frac{1}{N} \right) \Big[ d+1 + \log \big( N(T/\epsilon \lor 1) \big) + \delta^2 \Big] \exp\Big( C_{\operatorname{exp}} T^2 \Big).
\end{align*}
where $C_{\operatorname{poly}}= 10^6M^{16}\max\{ L_{\xi}^2M_{\xi}^4, M^2_{\xi}\sqrt{d}\}$ and $C_{\operatorname{exp}}= 1200 M_{\xi}^2 M^{10}$. This completes the proof.
\end{proof}

\subsection{Bounding the Mean-Field Risk}

We now bound mean-field risk. We now extend our above logic to the risk functional. Let $\theta_{U,k} = \{u_j^k\}_{j=1}^N$ and $\theta_{W,k} = \{w_i^k\}_{i=1}^N$ be the discrete SGD configurations at step $k$. Let $\tilde{U}^{k\epsilon} = \{\tilde{u}_j^{k\epsilon}\}_{j=1}^N$ and $\tilde{W}^{k\epsilon} = \{\tilde{w}_i^{k\epsilon}\}_{i=1}^N$ be the ideal continuous particle configurations.

\begin{lemma}[Finite-$N$ Risk Lipschitzness] \label{lem:risk_lip}
Under the assumptions of \Cref{thm:main_dynamics}, and for the constant $M$ defined in Lemma~\ref{lem:uniform_bounds} we have
\begin{align*}
&\max_{k \in [0, T/\epsilon] \cap \mathbb{N}} \left| \mathcal{R}_N(\tilde{U}^{k\epsilon}, \tilde{W}^{k\epsilon}) - \mathcal{R}_N(\theta_{U,k}, \theta_{W,k}) \right| \\
&\qquad\le 2M^3 \max_k \left( \max_j \norm{u_j^k - \tilde{u}_j^{k\epsilon}}_2 + \max_i \norm{w_i^k - \tilde{w}_i^{k\epsilon}}_2 \right).
\end{align*}
\end{lemma}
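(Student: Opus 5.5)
We fix a step $k$ and compare the two finite-width risks pointwise in $x$; the maximum over $k$ then comes for free, since the bound holds uniformly in $k$. Write $\hat{x}(x)$ and $\tilde{x}(x)$ for the outputs of the SGD configuration $(\theta_{U,k},\theta_{W,k})$ and of the ideal configuration $(\tilde U^{k\epsilon},\tilde W^{k\epsilon})$, respectively. The starting point is the difference-of-squares identity
\[
\tfrac12\|x-\hat x\|^2-\tfrac12\|x-\tilde x\|^2=\tfrac12\langle \tilde x-\hat x,\;2x-\hat x-\tilde x\rangle .
\]
By \Cref{lem:uniform_bounds}, $\|x\|_2\le M$ and $\|\hat x\|_2,\|\tilde x\|_2\le M$, since each output is an average of terms $\psi$ bounded by $M$. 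Hence the second factor has norm at most $4M$, and
\[
|\mathcal R_N(\tilde U,\tilde W)-\mathcal R_N(\theta_U,\theta_W)|\le 2M\,\E_x\|\hat x(x)-\tilde x(x)\|_2 .
\]

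Next we bound the output gap using the composed structure, separating the bottleneck gap from the decoder gap. Write $\hat\alpha(x)=\frac1N\sum_i\phi(x;w_i^k)$ and $\tilde\alpha(x)=\frac1N\sum_i\phi(x;\tilde w_i^{k\epsilon})$. Part (v) of \Cref{lem:uniform_bounds} gives
\[
|\hat\alpha-\tilde\alpha|\le \tfrac{M}{N}\sum_i\|w_i^k-\tilde w_i^{k\epsilon}\|_2\le M\max_i\|w_i^k-\tilde w_i^{k\epsilon}\|_2 .
\]
Applying the joint Lipschitz bound for $\psi$ in $(\alpha,u)$ to each decoder term and averaging over $j$ then yields
\[
\|\hat x-\tilde x\|_2\le M\max_j\|u_j^k-\tilde u_j^{k\epsilon}\|_2+M^2\max_i\|w_i^k-\tilde w_i^{k\epsilon}\|_2 .
\]

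Combining the two displays gives a bound of the form $2M^2\max_j\|\cdot\|+2M^3\max_i\|\cdot\|$. Since $M>1$ by \Cref{lem:uniform_bounds}, this is at most $2M^3(\max_j+\max_i)$, and taking the maximum over $k$ gives the claim.

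The main care point is to make sure the Lipschitz estimate for $\psi$ is applied at bottleneck values lying in the domain where \Cref{lem:uniform_bounds} holds. Both $\hat\alpha$ and $\tilde\alpha$ are averages of $\phi$ values bounded by $M$, so they lie in $[-M,M]$ and the estimate is valid. The constant bookkeeping, namely absorbing $M^2$ into $M^3$, is the only other step needing attention.
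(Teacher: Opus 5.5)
Your proposal is correct and follows the paper's proof essentially step for step. Both arguments bound the bottleneck shift using the $M$-Lipschitz encoder, then apply the joint Lipschitz bound on $\psi$ to get $\|\hat x-\tilde x\|_2\le M\max_j\|u_j^k-\tilde u_j^{k\epsilon}\|_2+M^2\max_i\|w_i^k-\tilde w_i^{k\epsilon}\|_2$, use the difference-of-squares identity with $\|2x-\hat x-\tilde x\|_2\le 4M$, and absorb $M^2$ into $M^3$ via $M>1$. Your check that the empirical bottlenecks lie in $[-M,M]$, so the Lipschitz constant of $\psi$ in $\alpha$ applies there, is a point the paper leaves implicit.
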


\begin{proof}
Consider two arbitrary network configurations $(\theta_U, \theta_W)$ and $(\theta_{U'}, \theta_{W'})$. We first bound the difference in their outputs for a given input $x$. Let the respective empirical bottlenecks be $\alpha_N = \frac{1}{N}\sum_{i=1}^N \phi(x; w_i)$ and $\alpha'_N = \frac{1}{N}\sum_{i=1}^N \phi(x; w'_i)$. 

By Lemma \ref{lem:uniform_bounds}, the encoder activation $\phi$ is $M$-Lipschitz with respect to the parameter $w$. Applying the triangle inequality, the maximum shift in the bottleneck feature is
\[
\norm{\alpha_N - \alpha'_N}_2 \le \frac{1}{N}\sum_{i=1}^N \norm{\phi(x; w_i) - \phi(x; w'_i)}_2 \le \frac{1}{N}\sum_{i=1}^N M \norm{w_i - w'_i}_2 \le M \max_i \norm{w_i - w'_i}_2.
\]
Next, we evaluate the difference in the final predictions $\hat{x}_N = \frac{1}{N} \sum_{j=1}^N \psi(\alpha_N; u_j)$ and $\hat{x}'_N = \frac{1}{N} \sum_{j=1}^N \psi(\alpha'_N; u'_j)$. By Lemma \ref{lem:uniform_bounds}, the decoder activation $\psi$ is $M$-Lipschitz with respect to both the spatial input $\alpha$ and the parameter $u$. Using the triangle inequality again,
\begin{equation}\label{eq:hatxN-hatx'N}
\begin{aligned}
\norm{\hat{x}_N - \hat{x}'_N}_2 &\le \frac{1}{N}\sum_{j=1}^N \norm{\psi(\alpha_N; u_j) - \psi(\alpha'_N; u'_j)}_2 \\
&\le \frac{1}{N}\sum_{j=1}^N M \Big( \norm{\alpha_N - \alpha'_N}_2 + \norm{u_j - u'_j}_2 \Big) \\
&\le M \norm{\alpha_N - \alpha'_N}_2 + M \max_j \norm{u_j - u'_j}_2 \\
&\le M^2 \max_i \norm{w_i - w'_i}_2 + M \max_j \norm{u_j - u'_j}_2 \\
&\le M^2\left(\norm{w_i - w'_i}_2 + \max_j \norm{u_j - u'_j}_2\right).
\end{aligned}
\end{equation}

Finally, we evaluate the empirical risk $\mathcal{R}_N = \frac{1}{2} \mathbb{E}_x \norm{x - \hat{x}_N}_2^2$. The difference in the risk functional is
\begin{align*}
\left| \mathcal{R}_N(\theta_U, \theta_W) - \mathcal{R}_N(\theta_{U'}, \theta_{W'}) \right| &= \frac{1}{2} \left| \mathbb{E}_x \left[ \norm{x - \hat{x}_N}_2^2 - \norm{x - \hat{x}'_N}_2^2 \right] \right| \\
&= \frac{1}{2} \left| \mathbb{E}_x \left[ \langle 2x - \hat{x}_N - \hat{x}'_N, \hat{x}'_N - \hat{x}_N \rangle \right] \right| \\
&\le \frac{1}{2} \mathbb{E}_x \left[ \norm{2x - \hat{x}_N - \hat{x}'_N}_2 \norm{\hat{x}_N - \hat{x}'_N}_2 \right].
\end{align*}
By Lemma \ref{lem:uniform_bounds}, the data is bounded ($\norm{x}_2 \le M$) and the predictions are uniformly bounded ($\norm{\hat{x}_N}_2 \le M$). Thus, the factor $\norm{2x - \hat{x}_N - \hat{x}'_N}_2 \le 4M$. Using inequality~\eqref{eq:hatxN-hatx'N} and taking expectation yields,
\begin{align*}
    \left| \mathcal{R}_N(\theta_U, \theta_W) - \mathcal{R}_N(\theta_{U'}, \theta_{W'}) \right| &\le 2M \mathbb{E}_x \left[ \norm{\hat{x}_N - \hat{x}'_N}_2 \right]\\& \le 2M^3 \Big( \max_i \norm{w_i - w'_i}_2 + \max_j \norm{u_j - u'_j}_2 \Big).
\end{align*}
Applying this inequality to the specific continuous and discrete configurations across all times $k \le T/\epsilon$ completes the proof.
\end{proof}

\begin{lemma}[Risk Concentration] \label{lem:risk_conc}
Under the assumptions of \Cref{thm:main_dynamics}, with probability at least $1 - e^{-\delta^2}$,
\[
\max_{k \in [0, T/\epsilon] \cap \mathbb{N}} \left| \mathcal{R}_N(\tilde{U}^{k\epsilon}, \tilde{W}^{k\epsilon}) - \mathcal{R}(\rho_{U, k\epsilon}, \rho_{W, k\epsilon}) \right| \le 5M^4 \cdot \sqrt{\frac{1}{N}} \left[ \sqrt{ \log\left(2 \left(\frac{T}{\epsilon} \lor 1\right)\right)} + 1+ \delta \right]
\]
\end{lemma}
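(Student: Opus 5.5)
The plan is to exploit that, for each fixed step $k$, the ideal particles $\tilde U^{k\epsilon}=\{\tilde u_j^{k\epsilon}\}$ and $\tilde W^{k\epsilon}=\{\tilde w_i^{k\epsilon}\}$ are i.i.d.\ samples from $\rho_{U,k\epsilon}$ and $\rho_{W,k\epsilon}$. This was already argued in the proof of \Cref{lem:trajectory}: the initializations are independent, and the ideal flows are deterministic maps. The encoder and decoder families are also mutually independent. Hence $\mathcal{R}_N(\tilde U^{k\epsilon},\tilde W^{k\epsilon})$ is a function of $2N$ independent random variables. I would split
\[
\bigl|\mathcal{R}_N(\tilde U^{k\epsilon},\tilde W^{k\epsilon})-\mathcal{R}(\rho_{U,k\epsilon},\rho_{W,k\epsilon})\bigr|\le \bigl|\mathcal{R}_N-\E\mathcal{R}_N\bigr|+\bigl|\E\mathcal{R}_N-\mathcal{R}\bigr|,
\]
bound the fluctuation by McDiarmid's inequality, and bound the bias by the argument of \Cref{thm:closeness}.

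\textbf{Fluctuation.} I would compute bounded-difference constants using \Cref{lem:uniform_bounds}.
\begin{itemize}
\item Replacing one decoder particle $u_j$ changes $\hat x_N$ by at most $2M/N$.
\item Replacing one encoder particle $w_i$ changes $\hat\alpha_N$ by at most $2M/N$. It therefore changes every $\psi(\hat\alpha_N;u_j)$, and hence $\hat x_N$, by at most $2M^2/N$.
\end{itemize}
As in the proof of \Cref{lem:risk_lip}, the risk difference is bounded by $2M\,\E_x\|\hat x_N-\hat x_N'\|$. This gives constants $c\le 4M^3/N$ for all $2N$ coordinates, using $M\ge1$. McDiarmid then gives, for each $k$,
\[
\mathbb P\bigl(|\mathcal{R}_N-\E\mathcal{R}_N|\ge \tau\bigr)\le 2\exp\bigl(-N\tau^2/(16M^6)\bigr).
\]
A union bound over the at most $T/\epsilon\lor 1$ steps, with total failure probability $e^{-\delta^2}$, yields $\tau=4M^3N^{-1/2}\sqrt{\log(2(T/\epsilon\lor1))+\delta^2}$. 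Using $\sqrt{a+b}\le\sqrt a+\sqrt b$, this becomes $4M^3N^{-1/2}\bigl(\sqrt{\log(2(T/\epsilon\lor 1))}+\delta\bigr)$. I would absorb the possible extra $+1$ from counting step $k=0$ into the additive constant.

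\textbf{Bias.} I would reuse the decomposition from the proof of \Cref{thm:closeness}, conditioning on $W$:
\[
\E_U\mathcal{R}_{N\mid W}=\E_x f(\hat\alpha_N)+\tfrac{1}{2N}\,\E_x\bigl[g(\hat\alpha_N)-\|\bar\psi(\hat\alpha_N)\|^2\bigr].
\]
The second term lies in $[0,\,M^2/(2N)]$. For the first term I would Taylor expand $f$ around $\alpha(x)$. The linear term vanishes by unbiasedness of $\hat\alpha_N$, and the Hessian bound $3M^2$ from \Cref{lem:bounded_properties} gives $|\E_W f(\hat\alpha_N)-f(\alpha)|\le 3M^4/(2N)$ in both directions. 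This is where the argument needs a two-sided version, whereas \Cref{thm:closeness} only used the upper direction. Since $\E_x f(\alpha(x))=\mathcal{R}(\rho_{U,k\epsilon},\rho_{W,k\epsilon})$, the bias is at most $(3M^4+M^2)/(2N)\le M^4/\sqrt N$ for $N\ge 4$. For small $N$ the claim is trivial, since both risks are at most $2M^2$.

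Summing the two pieces gives a bound of the form $M^4N^{-1/2}\bigl[4\sqrt{\log(2(T/\epsilon\lor1))}+4\delta+1\bigr]$, which is within $5M^4 N^{-1/2}[\sqrt{\log(\cdot)}+1+\delta]$. I expect the main obstacle to be bookkeeping rather than conceptual. The points to check carefully are that the encoder bounded-difference constant correctly accounts for the shift propagating through the nonlinear decoder, and that the bias bound is genuinely two-sided. I would also confirm that independence between the $U$- and $W$-ideal particles holds at every time, which is needed for the conditioning step.
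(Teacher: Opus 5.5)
Your proposal is correct and follows the paper's proof essentially step for step. It uses the same split into a McDiarmid fluctuation term (bounded differences $4M^3/N$ over the $2N$ independent ideal particles, then a union bound over the time grid) and a bias term controlled by the second-order Taylor argument from \Cref{thm:closeness}. Your two-sided bias bound (via $|f''|\le 3M^2$ and the decoder-variance term lying in $[0,M^2/(2N)]$) and your handling of the $k=0$ step fill in points the paper passes over, since \Cref{thm:closeness} as written only proves the upper direction.
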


\begin{proof}
The proof proceeds by decomposing the error into a deterministic bias term and a stochastic fluctuation term via the triangle inequality as
\[
\left| \mathcal{R}_N(\tilde{U}^{k\epsilon}, \tilde{W}^{k\epsilon}) - \mathcal{R}(\rho_{U, k\epsilon}, \rho_{W, k\epsilon}) \right| \le \underbrace{\left| \mathcal{R}_N - \mathbb{E}[\mathcal{R}_N] \right|}_{\text{Fluctuations}} + \underbrace{\left| \mathbb{E}[\mathcal{R}_N] - \mathcal{R} \right|}_{\text{Bias}}
\]

As proven during the optimality bounds (Theorem~\ref{thm:closeness}), the expected value of the finite-$N$ risk evaluated on exact independent samples tracks the infinite-width mean-field risk with a second-order error bias. Because the empirical bottleneck is an unbiased estimator, the first-order Taylor expansion terms vanish under expectation, leaving only the variance terms which scale as $\mathcal{O}(1/N)$. Thus we conclude 
\[
\left| \mathbb{E}_{\tilde{U},\tilde{W}}\big[ \mathcal{R}_N(\tilde{U}^{k\epsilon}, \tilde{W}^{k\epsilon}) \big] - \mathcal{R}(\rho_{U,k\epsilon}, \rho_{W,k\epsilon}) \right| \le \frac{3M^4 + M^2}{2N} + \frac{M^4}{N^2} \le \frac{3M^4}{N}.
\]

For any fixed time step $k$, the ideal continuous particles $\tilde{U}^{k\epsilon}$ and $\tilde{W}^{k\epsilon}$ consist of exactly $2N$ independent and identically distributed random variables drawn from their respective population measures $\rho_{U, k\epsilon}$ and $\rho_{W, k\epsilon}$. Modifying a single decoder particle $u_j$ changes the prediction $\hat{x}_N$ by at most $\frac{2M}{N}$, while modifying a single encoder particle $w_i$ shifts the bottleneck by $\frac{2M}{N}$, which in turn alters the prediction by at most $\frac{2M^2}{N}$ (by the $M$-Lipschitz continuity of $\psi$). Taking the worst-case change $\norm{\Delta \hat{x}_N} \le \frac{2M^2}{N}$ (for $M \ge 1$), the change in the squared risk functional is bounded by $\frac{1}{2}(4M)\norm{\Delta \hat{x}_N}$. 

Thus, modifying a single particle $u_j$ or $w_i$ changes the empirical risk $\mathcal{R}_N$ by at most $c = \frac{4M^3}{N}$. Because the function $\mathcal{R}_N$ has bounded differences, we apply McDiarmid's inequality  to bound its deviation from the expectation. The sum of the squared bounded differences across all $2N$ variables is
\[
\sum_{i=1}^{2N} c_i^2 = 2N \left( \frac{4M^3}{N} \right)^2 = \frac{32M^6}{N}.
\]
Therefore, for any $\tau > 0$, the probability that the empirical risk deviates from its expectation is bounded by
\[
\mathbb{P}\left( \left| \mathcal{R}_N(\tilde{U}^{k\epsilon}, \tilde{W}^{k\epsilon}) - \mathbb{E}[\mathcal{R}_N] \right| \ge \tau \right) \le 2 \exp\left( - \frac{2 \tau^2}{32M^6 / N} \right) = 2 \exp\left( - \frac{N \tau^2}{16M^6} \right).
\]

As we require this bound to hold uniformly across all discrete time steps $k \in \{1, \dots, K_T\}$, where $K_T = \lfloor T/\epsilon \rfloor$. Now, after applying a union bound over these steps we get that
\[
\mathbb{P}\left( \max_{k\in\{1,\cdots,K_T\}} \left| \mathcal{R}_N - \mathbb{E}[\mathcal{R}_N] \right| \ge \tau \right) \le 2 \left( \frac{T}{\epsilon} \lor 1 \right) \exp\left( - \frac{N \tau^2}{16M^6} \right).
\]
For the high-probability bound, we set the right-hand side equal to our target failure probability $e^{-\delta^2}$ and solve for $\tau$ given below
\[
\frac{N \tau^2}{16M^6} = \delta^2 + \log\left( 2 \left( \frac{T}{\epsilon} \lor 1 \right) \right) \implies \tau = \frac{4M^3}{\sqrt{N}} \sqrt{ \log\left( 2 \left( \frac{T}{\epsilon} \lor 1 \right) \right) + \delta^2 }.
\]
Using the algebraic inequality $\sqrt{a+b} \le \sqrt{a} + \sqrt{b}$, this is upper bounded by $\frac{4M^3}{\sqrt{N}} \big[ \sqrt{\log(2(T/\epsilon \lor 1))} + \delta \big]$. 

The total error is the sum of the fluctuations $\tau$ and the deterministic bias $\frac{3M^4}{N}$. So, with probability at least $1 - e^{-\delta^2}$,
\[
\max_k \left| \mathcal{R}_N - \mathcal{R} \right| \le 5M^4 \cdot \sqrt{\frac{1}{N}} \left[ \sqrt{ \log\left(2 \left(\frac{T}{\epsilon} \lor 1\right)\right)} + 1+ \delta \right].
\]
\end{proof}

Now, we give proof of \Cref{thm:main_dynamics}. 

\textbf{Proof of \Cref{thm:main_dynamics}:}
We are now ready to establish the main bounds of \Cref{thm:main_dynamics}. First, we establish the bound for the test functions. Let $f$ be a test function with bounded Lipschitz norm $\norm{f}_{\mathrm{Lip}} \le 1$. We want to measure the gap between the empirical evaluation of $f$ on our discrete SGD particles $u_j^k$ and the true continuous expectation over $\rho_{U, k\epsilon}$. By adding and subtracting the evaluation of $f$ on the \textit{ideal continuous} particles $\tilde{u}_j^{k\epsilon}$, and applying the triangle inequality, we get
\begin{align*}
    &\left| \frac{1}{N}\sum_{j=1}^N f(u_j^k) - \int f(u)\, \rho_{U, k\epsilon}(du) \right| \\
    &\qquad\le \underbrace{\left| \frac{1}{N}\sum_{j=1}^N f(u_j^k) - \frac{1}{N}\sum_{j=1}^N f(\tilde{u}_j^{k\epsilon}) \right|}_{\text{Term 1 (Trajectory Error)}} + \underbrace{\left| \frac{1}{N}\sum_{j=1}^N f(\tilde{u}_j^{k\epsilon}) - \int f(u)\, \rho_{U, k\epsilon}(du) \right|}_{\text{Term 2 (Measure Concentration)}}
\end{align*}
We first bound Term 1. Because the test function $f$ is 1-Lipschitz, $|f(u) - f(\tilde{u})| \le \norm{u - \tilde{u}}$. So, we can conclude that
\[
\text{Term 1} \le \frac{1}{N}\sum_{j=1}^N \norm{u_j^k - \tilde{u}_j^{k\epsilon}}\le \sqrt{ \frac{1}{N}\sum_{j=1}^N \norm{u_j^k - \tilde{u}_j^{k\epsilon}}^2 },
\]
where we used Cauchy-Schwarz inequality in the last step. Because $\Delta_U(T)$ is defined as the supremum over all particles and all times $t \le T$ (Lemma \ref{lem:trajectory}), this average is bounded by $\sqrt{\Delta_U(T)}$. So, we can conclude that $\text{Term 1}\leq \sqrt{\Delta_U(T)}$.

For Term 2, we aim to bound the statistical error between the empirical mean of the ideal continuous particles and the true population mean, uniformly over all discrete time steps $k \in \{1, \dots, K_T\}$ where $K_T = \lfloor T/\epsilon \rfloor$, for one fixed test function $f$. 


For a fixed bounded Lipschitz $f$ with $\norm{f}_\infty\leq 1$ and $\norm{f}_{\operatorname{Lip}}\leq 1$ evaluated at a fixed time $k\epsilon$, the ideal continuous particles $\tilde{u}_j^{k\epsilon}$ are exactly $N$ i.i.d. samples from the true marginal distribution $\rho_{U, k\epsilon}$. Since $\|f\|_\infty \le 1$, the variables $f(\tilde{u}_j^{k\epsilon})$ take values in $[-1, 1]$. Applying Hoeffding's inequality, we have for any $\tau > 0$,
\[
\mathbb{P}\left( \left| \frac{1}{N}\sum_{j=1}^N f(\tilde{u}_j^{k\epsilon}) - \mathbb{E}[f] \right| \ge \tau \right) \le 2 \exp\left( - \frac{2 N \tau^2}{(1 - (-1))^2} \right) = 2 \exp\left( -\frac{N \tau^2}{2} \right).
\]
We require this bound to hold simultaneously across all $K_T \le T/\epsilon$ discrete time steps. Applying the union bound, the probability that the maximum error exceeds $\tau$ is
\begin{align*}
    \mathbb{P}\left( \max_{k}  \left| \frac{1}{N}\sum_{j=1}^N f(\tilde{u}_j^{k\epsilon}) - \mathbb{E}[f] \right| \ge \tau \right) 
    &\le 2 \left(\frac{T}{\epsilon}\right) \exp\left(-\frac{N \tau^2}{2}\right).
\end{align*}

To find the high-probability error bound, we set the right-hand side equal to our target failure probability $e^{-\delta^2}$ and solve for the deviation $\tau$ given below
\[
\frac{N \tau^2}{2} = \delta^2 + \log(2T/\epsilon)  \implies \tau = \sqrt{\frac{2}{N}} \sqrt{ \log(2T/\epsilon) + \delta^2 }.
\]
 So, we can say that with probability at least $1 - e^{-\delta^2}$, \begin{align*}
    \operatorname{Term~2}
    \le
    \sqrt{\frac{2}{N}} \sqrt{ \log\!\left(\frac{2T}{\epsilon}\right) + \delta^2 } \le \sqrt{\frac{2}{N}}\left(\sqrt{\log\!\left(\frac{2T}{\epsilon}\right)}+\delta\right).
\end{align*}

Combining Term 1 and Term 2, we obtain the total deviation for any fixed bounded Lipschitz test function evaluated on the decoder particles
\begin{align*}
    \max_{k} \left| \frac{1}{N}\sum_{j=1}^N f(u_j^k) - \int f(u)\, \rho_{U, k\epsilon}(du) \right| \le \sqrt{\Delta_U(T)} + \sqrt{\frac{2}{N}}\left(\sqrt{\log\!\left(\frac{2T}{\epsilon} \lor 1\right)}+\delta\right).
\end{align*}

By \Cref{lem:trajectory}, with probability at least $1-4e^{-\delta^2}$, the joint maximal trajectory deviation $\Delta(T)$ is bounded by,
\[
\Delta(T) \le C_{\operatorname{poly}} (T \lor T^2) \left(\epsilon + \frac{1}{N}\right) \Big[ d+1 + \log \big( N(T/\epsilon \lor 1) \big) + \delta^2 \Big] \exp\Big( C_{\operatorname{exp}} T^2 \Big).
\]

Since $\Delta_U(T) \le \Delta(T)$, taking the square root of both sides gives an upper bound on $\sqrt{\Delta_U(T)}$. We bound the sum $\epsilon + 1/N \le 2(\epsilon \lor N^{-1})$, which yields $\sqrt{\epsilon + 1/N} \le \sqrt{2}\sqrt{\epsilon \lor N^{-1}}$. Now applying the subadditivity property $\sqrt{a+b} \le \sqrt{a} + \sqrt{b}$ to the bracketed terms, we get, 
\begin{align*}
    \sqrt{\Delta_U(T)} \le \sqrt{2 C_{\operatorname{poly}}} (\sqrt{T} \lor T) \sqrt{\epsilon \lor N^{-1}} \left[ \sqrt{d+1 + \log\left(N \left(\frac{T}{\epsilon} \lor 1\right)\right)} + \delta \right] \exp\left( \frac{C_{\operatorname{exp}}}{2} T^2 \right).
\end{align*}

Letting $D = d+1$, we have
\[
\err_{N,D}(\delta) \equiv \sqrt{N^{-1} \lor \epsilon} \left( \sqrt{D + \log\left(N \left(\frac{T}{\epsilon} \lor 1\right)\right)} + \delta \right).
\]
Setting $C = \max\big(\sqrt{2 C_{\operatorname{poly}}}, \frac{C_{\operatorname{exp}}}{2}\big)$, we find that $\sqrt{\Delta_U(T)}$ is bounded by
\[
\sqrt{\Delta_U(T)} \le C (\sqrt{T} \lor T) \exp\big(C T^2\big) \err_{N,D}(\delta).
\]
The statistical fluctuation from Term 2 is $\mathcal{O}(\sqrt{\log(T/\epsilon)/N})$, which is dominated by this unified $\err_{N,D}(\delta)$ rate. So we can conclude with probability at least $1-5e^{-\delta^2}$, 

\begin{align*}
    \max_{k} \left| \frac{1}{N}\sum_{j=1}^N f(u_j^k) - \int f(u)\, \rho_{U, k\epsilon}(du) \right| \le  2C (\sqrt{T} \lor T) \exp\big(C T^2\big) \err_{N,D}(\delta).
\end{align*}

By applying a symmetric argument for the encoder particles, we can conclude that with probability at least $1-e^{-\delta^2}$,
\begin{align*}
    \max_{k} \left| \frac{1}{N}\sum_{j=1}^N f(w_j^k) - \int f(w)\, \rho_{W, k\epsilon}(dw) \right| \le \sqrt{\Delta_W(T)} + \sqrt{\frac{2}{N}}\left(\sqrt{\log\!\left(\frac{2T}{\epsilon} \lor 1\right)}+\delta\right).
\end{align*}
Analogous to the earlier case, we have that with probability at least $1-4 e ^{-\delta^2}$
\[\sqrt{\Delta_W(T)} \le C (\sqrt{T} \lor T) \exp\big(C T^2\big) \err_{N,D}(\delta).\]
Hence with probability at least $1-6e^{-\delta^2}$
\begin{align*}
    &\max_{k} \left| \frac{1}{N}\sum_{j=1}^N f(u_j^k) - \int f(u)\, \rho_{U, k\epsilon}(du) \right| \le  2C (\sqrt{T} \lor T) \exp\big(C T^2\big) \err_{N,D}(\delta)\\
    & \max_{k} \left| \frac{1}{N}\sum_{j=1}^N f(w_j^k) - \int f(w)\, \rho_{W, k\epsilon}(dw) \right| \le  2C (\sqrt{T} \lor T) \exp\big(C T^2\big) \err_{N,D}(\delta)
\end{align*}

Next, we bound the empirical risk and the true risk. By the triangle inequality, we decompose the total risk error into the finite-particle tracking gap and the continuous measure concentration gap as given below
\begin{align*}
    \max_k \left| \mathcal{R}_N(\theta_{U,k}, \theta_{W,k}) - \mathcal{R}(\rho_{U,k\epsilon}, \rho_{W,k\epsilon}) \right| & \le \max_k\underbrace{\left| \mathcal{R}_N(\theta_{U,k}, \theta_{W,k}) - \mathcal{R}_N(\tilde{U}^{k\epsilon}, \tilde{W}^{k\epsilon}) \right|}_{\text{Trajectory Gap}} \\
    &\quad + \max_k\underbrace{\left| \mathcal{R}_N(\tilde{U}^{k\epsilon}, \tilde{W}^{k\epsilon}) - \mathcal{R}(\rho_{U,k\epsilon}, \rho_{W,k\epsilon}) \right|}_{\text{Measure Gap}}.
\end{align*}
By \Cref{lem:risk_lip}, the trajectory gap term is upper bounded by $2M^3 \sqrt{\Delta(T)}$. By \Cref{lem:risk_conc}, the measure gap term is upper bounded by $5M^3 \frac{1}{\sqrt{N}} \left[ \sqrt{d+1+\log(2(T/\epsilon \lor 1))} + \delta \right]$ with probability at least $1-e^{-\delta^2}$. Now, we can say that with probability at least $1-5e^{-\delta^2}$,
\begin{align*}
    \max_k  \left| \mathcal{R}_N(\theta_{U,k}, \theta_{W,k}) - \mathcal{R}(\rho_{U,k\epsilon}, \rho_{W,k\epsilon}) \right| & \le 10M^3C (\sqrt{T} \lor T \lor 1) \exp\big(C T^2\big) \err_{N,D}(\delta).
\end{align*}
So, finally we conclude that with probability at least $1-7e^{-\delta^2}$,
\begin{align*}
    &\max_{k \in [0, T/\epsilon] \cap \mathbb{N}}  \left| \frac{1}{N}\sum_{j=1}^N f(u_j^k) - \int f(u)\, \rho_{U, k\epsilon}(du) \right| \le  2C (\sqrt{T} \lor T) \exp\big(C T^2\big) \err_{N,D}(\delta),\\
    & \max_{k \in [0, T/\epsilon] \cap \mathbb{N}}  \left| \frac{1}{N}\sum_{j=1}^N f(w_j^k) - \int f(w)\, \rho_{W, k\epsilon}(dw) \right| \le  2C (\sqrt{T} \lor T) \exp\big(C T^2\big) \err_{N,D}(\delta),\\
    & \max_{k \in [0, T/\epsilon] \cap \mathbb{N}}  \left| \mathcal{R}_N(\theta_{U,k}, \theta_{W,k}) - \mathcal{R}(\rho_{U,k\epsilon}, \rho_{W,k\epsilon}) \right|  \le 10M^3C (\sqrt{T} \lor T \lor 1) \exp\big(C T^2\big) \err_{N,D}(\delta).
\end{align*}

Finally, we show the almost sure weak convergence of the empirical SGD measures to the continuous Vlasov-McKean solutions at fixed time $t \le T$. 

First, fix a test function $f$. For each $N \in \mathbb{N}$, let us set the confidence parameter to $\delta_N = \sqrt{2 \log N}$, and define the failure event $E_N$ as the subset of the sample space $\Omega$ where our uniform bound fails to hold at sequence step $N$. Substituting $\delta_N$ into our previous probability bound, the measure of this failure event is bounded by
\[
\mathbb{P}(E_N) \le 7 \exp\left( -\delta_N^2 \right) = 7 \exp(-2 \log N) = \frac{7}{N^2}.
\]

Let us consider the limit superior of this sequence of events which is the event that the failures occur infinitely often,
\[
\limsup_{N \to \infty} E_N = \bigcap_{N=1}^\infty \bigcup_{k=N}^\infty E_k.
\]
Because the sum of the probabilities forms a convergent $p$-series,
\[
\sum_{N=1}^\infty \mathbb{P}(E_N) \le \sum_{N=1}^\infty \frac{7}{N^2} < \infty,
\]
the First Borel-Cantelli Lemma guarantees that the probability of the events occurring infinitely often is zero. That is, $\mathbb{P}\left(\limsup_{N \to \infty} E_N\right) = 0$. Since the measure of the complement is $\mathbb{P}\left(\liminf_{N \to \infty} E_N^c\right) = 1 - 0 = 1$, we conclude that almost surely, there exists some finite threshold index $N_0 \in \mathbb{N}$ (which may depend on the random outcome $\omega \in \Omega$) such that for all $k \ge N_0$, the success event $E_k^c$ occurs. Thus, with probability 1, the uniform bounds hold for all sufficiently large $N$.

We evaluate the asymptotic behavior of the error tracking term $\err_{N,D}(\delta_N)$ under the theorem's sequence limits. Substituting $\delta_N = \sqrt{2 \log N}$ gives
\[
\err_{N,D}(\delta_N) = \sqrt{\max\left(\frac{1}{N}, \epsilon_N\right)} \left( \sqrt{D + \log\left(N \left(\frac{T}{\epsilon_N} \lor 1\right)\right)} + \sqrt{2 \log N} \right).
\]
Squaring this error term reveals that it is bounded by a constant multiple of $\max(N^{-1}, \epsilon_N) \log(N/\epsilon_N)$. Under the assumption of the \Cref{thm:main_dynamics} that $\frac{N}{\log(N/\epsilon_N)} \to \infty$ and $\epsilon_N \log(N/\epsilon_N) \to 0$, which implies that the entire tracking error $\lim_{N \to \infty} \err_{N,D}(\delta_N) = 0$.

Let $k_N = \lfloor t/\epsilon_N \rfloor$ be the discrete step corresponding to time $t$. We must account for the temporal shift between the continuous ODE evaluated at the grid point $k_N \epsilon_N$ and the exact time $t$. The difference in time is bounded by $|t - k_N \epsilon_N| \le \epsilon_N$. Under the Vlasov-McKean dynamics, the continuous particles move with a velocity $G_U$ bounded by $M_G \coloneqq 2M^3$ and learning rate bounded by $M_\xi$. Therefore, the continuous measure shifts by at most the maximum particle displacement, giving a 1-Wasserstein bound of
\[
W_1(\rho_{U, k_N \epsilon_N}, \rho_{U, t}) \le M_\xi M_G |t - k_N \epsilon_N| \le M_\xi M_G \epsilon_N.
\]
For any specific test function $f$ with $\norm{f}_{\mathrm{Lip}} \le 1$, the integral difference is bounded by the 1-Wasserstein distance. Adding this temporal shift to our spatial tracking bound via the triangle inequality yields
\begin{align*}
    &\left| \int f(u) d\hat{\rho}^{(N)}_{U,k_N}(u) - \int f(u) \rho_{U, t}(du) \right| \\
    &\le \left| \frac{1}{N}\sum_{j=1}^N f(u_j^{k_N}) - \int f(u)\, \rho_{U, k_N \epsilon_N}(du) \right| + \left| \int f(u) \rho_{U, k_N \epsilon_N}(du) - \int f(u) \rho_{U, t}(du) \right| \\
    &\le 2C (\sqrt{T} \lor T) \exp\big(C T^2\big) \err_{N,D}(\delta_N) + W_1(\rho_{U, k_N \epsilon_N}, \rho_{U, t})\\
    &\le 2C (\sqrt{T} \lor T) \exp\big(C T^2\big) \err_{N,D}(\delta_N) + M_\xi M_G \epsilon_N.
\end{align*}
Because $\err_{N,D}(\delta_N) \to 0$ almost surely and $\epsilon_N \to 0$ deterministically, the total evaluation error for this fixed test function $f$ converges to 0 almost surely.

To extend this convergence to the entire class of bounded continuous functions, we use the topological properties of our parameter space. Because the parameter domain $\Omega_U$ is a compact metric space, the space of bounded continuous functions $C_b(\Omega_U)$ equipped with the sup norm $\norm{\cdot}_\infty$ is separable. Furthermore, the subspace of bounded Lipschitz functions is dense in $C_b(\Omega_U)$. Therefore, there exists a countable dense subset of test functions $\mathcal{D} = \{f_m\}_{m=1}^\infty \subset C_b(\Omega_U)$. 

From our previous analysis, for each individual test function $f_m \in \mathcal{D}$, the sequence of empirical expectations converges to the continuous expectation almost surely. Formally, for each $m \in \mathbb{N}$, there exists an event $A_m$ in the underlying sample space with probability $\mathbb{P}(A_m) = 1$, such that on $A_m$,
\[
\lim_{N \to \infty} \left| \int_{\Omega_U} f_m(u) \, d\hat{\rho}^{(N)}_{U,\lfloor t/\epsilon_N \rfloor}(u) - \int_{\Omega_U} f_m(u) \, \rho_{U, t}(du) \right| = 0.
\]
Let $A_\infty = \bigcap_{m=1}^\infty A_m$ be the intersection of these success events. Because the countable intersection of measure-one sets retains measure one, $\mathbb{P}(A_\infty) = 1$. Thus, on the single event $A_\infty$, the empirical integral converges to the continuous integral simultaneously for all $f_m \in \mathcal{D}$.

We now show that on the event $A_\infty$, convergence holds for any $f \in C_b(\Omega_U)$. Fix an arbitrary $f \in C_b(\Omega_U)$ and let $\varepsilon > 0$. By the density of $\mathcal{D}$, we can select an $f_m \in \mathcal{D}$ such that $\norm{f - f_m}_\infty < \frac{\varepsilon}{3}$. For any $N \in \mathbb{N}$, we apply the triangle inequality to decompose the integration error into three components,
\begin{align*}
    & \left| \int_{\Omega_U} f \, d\hat{\rho}^{(N)}_{U,\lfloor t/\epsilon_N \rfloor} - \int_{\Omega_U} f \, d\rho_{U, t} \right| 
    \le \underbrace{ \left| \int_{\Omega_U} f \, d\hat{\rho}^{(N)}_{U,\lfloor t/\epsilon_N \rfloor} - \int_{\Omega_U} f_m \, d\hat{\rho}^{(N)}_{U,\lfloor t/\epsilon_N \rfloor} \right| }_{\text{Term 1}}\\
    &+ \underbrace{ \left| \int_{\Omega_U} f_m \, d\hat{\rho}^{(N)}_{U,\lfloor t/\epsilon_N \rfloor} - \int_{\Omega_U} f_m \, d\rho_{U, t} \right| }_{\text{Term 2}}
   + \underbrace{ \left| \int_{\Omega_U} f_m \, d\rho_{U, t} - \int_{\Omega_U} f \, d\rho_{U, t} \right| }_{\text{Term 3}}.
\end{align*}

Because both $\hat{\rho}^{(N)}$ and $\rho$ are probability measures, their total mass is exactly 1. We can therefore bound Term 1 and Term 3 using the uniform norm of the difference,
\begin{align*}
    \text{Term 1} &\le \int_{\Omega_U} |f - f_m| \, d\hat{\rho}^{(N)}_{U,\lfloor t/\epsilon_N \rfloor} \le \norm{f - f_m}_\infty \int_{\Omega_U} 1 \, d\hat{\rho}^{(N)}_{U,\lfloor t/\epsilon_N \rfloor} < \frac{\varepsilon}{3}, \\
    \text{Term 3} &\le \int_{\Omega_U} |f_m - f| \, d\rho_{U, t} \le \norm{f - f_m}_\infty \int_{\Omega_U} 1 \, d\rho_{U, t} < \frac{\varepsilon}{3}.
\end{align*}

For Term 2, because we are operating within the almost sure event $A_\infty$, we know that $\lim_{N \to \infty} \text{Term 2} = 0$. Consequently, there exists an integer threshold $N_0$ such that for all $N \ge N_0$, $\text{Term 2} < \frac{\varepsilon}{3}$.

Summing the bounds, we conclude that for all $N \ge N_0$,
\[
\left| \int_{\Omega_U} f \, d\hat{\rho}^{(N)}_{U,\lfloor t/\epsilon_N \rfloor} - \int_{\Omega_U} f \, d\rho_{U, t} \right| < \frac{\varepsilon}{3} + \frac{\varepsilon}{3} + \frac{\varepsilon}{3} = \varepsilon.
\]
Since $\varepsilon > 0$ was arbitrarily chosen, the sequence of expectations converges for every $f \in C_b(\Omega_U)$. Thus, we rigorously achieve weak convergence of the decoder parameters
\[
\hat{\rho}^{(N)}_{U,\lfloor t/\epsilon_N \rfloor} \Rightarrow \rho_{U,t} \quad \text{almost surely.}
\]

By a perfectly symmetric application of the time-shift bounds and identical topological density arguments for the encoder measures, we concurrently guarantee that
\[
\hat{\rho}^{(N)}_{W,\lfloor t/\epsilon_N \rfloor} \Rightarrow \rho_{W,t} \quad \text{almost surely,}
\]
which completes the proof of Theorem 1. \hfill \qedsymbol


\section{Discussion: Generalization to Multi-Dimensional Bottlenecks}
\label{sec:multidim_bottleneck}

The preceding analysis is written for a one-dimensional bottleneck ($b=1$) to avoid vector notation. The same proof strategy should extend to any fixed bottleneck dimension $b>1$. The main qualification is that constants, covering numbers, and particle dimensions may now depend on $b$; the rates in $N$ and the finite-time structure of the bounds are unchanged only when $b$ is fixed independently of $N$.

\paragraph{Structural and Notational Modifications.}
Let the bottleneck be $\boldsymbol{\alpha}(x;\rho_W)\in\mathbb{R}^b$, with encoder feature map $\phi:\mathbb{R}^d\times\Omega_W\to\mathbb{R}^b$ and decoder map $\psi:\mathbb{R}^b\times\Omega_U\to\mathbb{R}^d$. The finite and mean-field bottlenecks become $\boldsymbol{\alpha}_N(x)=N^{-1}\sum_{i=1}^N\phi(x;w_i)$ and $\boldsymbol{\alpha}(x;\rho_W)=\int\phi(x;w)\,d\rho_W(w)$, and the reconstruction is $\hat{x}(x;\rho_U,\rho_W)=\int\psi(\boldsymbol{\alpha}(x;\rho_W);u)\,d\rho_U(u)$. The standing boundedness assumptions should be interpreted in Euclidean/operator norm: $\phi$, $\psi$, $J_w\phi$, $J_{\boldsymbol{\alpha}}\psi$, and the relevant second derivatives are uniformly bounded and Lipschitz, with constants allowed to depend on the fixed dimension $b$.

\paragraph{Vector Fields and the Chain Rule.}
The decoder variation is unchanged except that $\psi$ now takes a vector input:
\[
\Psi_U(u;\rho_U,\rho_W)
=
\mathbb{E}_x\big[\langle \hat{x}(x)-x,\psi(\boldsymbol{\alpha}(x);u)\rangle\big].
\]
The encoder variation is the vector-valued chain-rule analogue of the scalar expression. Defining $\bar J_{\boldsymbol{\alpha}}\psi(\boldsymbol{\alpha})=\int J_{\boldsymbol{\alpha}}\psi(\boldsymbol{\alpha};u)\,d\rho_U(u)$, the first variation in the encoder measure takes the form
\[
\Psi_W(w;\rho_U,\rho_W)
=
\mathbb{E}_x\left[
\left\langle
\bar J_{\boldsymbol{\alpha}}\psi(\boldsymbol{\alpha}(x))^\top(\hat{x}(x)-x),
\phi(x;w)
\right\rangle
\right].
\]
Accordingly, the encoder vector field contains the backpropagated factor $J_w\phi(x;w)^\top\bar J_{\boldsymbol{\alpha}}\psi(\boldsymbol{\alpha}(x))^\top(\hat{x}(x)-x)$. Thus the mean-field dynamics remain well-defined and Lipschitz, with constants $M_G(b)$ and $L_G(b)$ replacing their scalar-bottleneck counterparts.

\paragraph{Coupled Trajectory Tracking (\Cref{lem:trajectory}).}
The decomposition behind \Cref{lem:trajectory} is structurally unchanged: time discretization, particle tracking, measure concentration, and martingale noise are controlled in the same order. The only modification is that the parametric test classes used for measure concentration become vector-valued, for example through $\{u\mapsto \psi(\boldsymbol{\alpha};u):\boldsymbol{\alpha}\in\mathbb{R}^b\}$, $\{u\mapsto J_{\boldsymbol{\alpha}}\psi(\boldsymbol{\alpha};u):\boldsymbol{\alpha}\in\mathbb{R}^b\}$, and $\{w\mapsto\phi(x;w):x\in\mathcal X\}$. For fixed $b$, their covering numbers and Lipschitz constants only change the constants in the coupled Gronwall estimate. Hence the trajectory bound keeps the same form as in \Cref{lem:trajectory}, with $C_{\operatorname{poly}}$, $C_{\operatorname{exp}}$, and $D$ replaced by $b$-dependent analogues $C_{\operatorname{poly}}(b)$, $C_{\operatorname{exp}}(b)$, and $D_b$.

\paragraph{Risk Concentration (\Cref{lem:risk_conc}).}
The McDiarmid step also keeps the same structure. Changing one encoder particle changes $\boldsymbol{\alpha}_N(x)$ by at most $C_b/N$ in Euclidean norm and therefore changes $\hat{x}_N(x)$ by at most $C_b/N$, where $C_b$ depends on the fixed bottleneck dimension and the uniform derivative bounds. Consequently the bounded difference satisfies $c_b\le C_b/N$, so $\sum_i c_b^2\le C_b/N$. After the same union bound over the discretized time grid, \Cref{lem:risk_conc} becomes
\[
\max_k \left| \mathcal{R}_N - \mathcal{R} \right|
\le
C_b\sqrt{\frac{1}{N}}
\left[
\sqrt{D_b+\log\left(N\left(\frac{T}{\epsilon}\lor 1\right)\right)}
+z
\right].
\]
Thus the risk concentration rate is still $N^{-1/2}$ for fixed $b$, with the unified error rate $\err_{N,D}(z)$ replaced by the same expression with $D$ and the constants replaced by their $b$-dependent analogues.

\paragraph{Conclusion and Scope.}
The same fixed-$b$ modification applies to the optimum-comparison and variational perturbation arguments. The scalar Taylor expansion in $\alpha$ is replaced by a multivariate Taylor expansion in $\boldsymbol{\alpha}\in\mathbb{R}^b$, with Hessian remainders bounded by $C_b\|\Delta\boldsymbol{\alpha}\|_2^2$. Since the empirical bottleneck satisfies $\mathbb{E}\|\boldsymbol{\alpha}_N(x)-\boldsymbol{\alpha}(x)\|_2^2\le C_b/N$, the finite-width optimum comparison retains its $N^{-1}$ scaling up to $b$-dependent constants. Similarly, the variational support condition follows by the same small-mass perturbation argument, with the encoder perturbation propagated through the decoder by the multivariate chain rule. Finally, the weak convergence argument is unchanged: the parameter spaces remain compact metric spaces, bounded continuous test functions remain separable, and the tracking and concentration errors vanish as $N\to\infty$ for each fixed $b$. If $b$ is allowed to grow with $N$ or $d$, the $b$-dependence of constants and covering numbers would have to be tracked explicitly and would require a separate analysis.